\documentclass[a4paper]{article}

\usepackage{geometry, quiver, tikz-cd}
\usepackage{svg}
\usepackage[utf8]{inputenc}
\usepackage[T1]{fontenc}

\usepackage{amsmath, amsfonts, bm, amssymb, commath, amsthm}
\usepackage[bookmarksnumbered=true]{hyperref}
\usepackage[dvipsnames]{xcolor}
\usepackage[font=small,labelfont=bf]{caption}
\usepackage{enumitem}
\usepackage[backend=bibtex,natbib,style=authoryear,maxcitenames=2,maxbibnames=99]{biblatex}
\bibliography{bib.bib}

\theoremstyle{plain}
\newtheorem{theorem}{Theorem}
\newtheorem{proposition}[theorem]{Proposition}
\newtheorem{lemma}[theorem]{Lemma}

\newtheorem*{definition*}{Definition}
\newtheorem{corollary}[theorem]{Corollary}

\theoremstyle{remark}
\newtheorem{remark}{Remark}

\renewcommand{\cal}[1]{\mathcal{#1}}
\newcommand{\bb}[1]{\mathbb{#1}}
\renewcommand{\r}{\mathbb{R}}

\newcommand{\KL}{\textup{KL}}
\newcommand{\pg}{\textup{PG}}
\newcommand{\ball}[2]{B_{#2}(#1)}

\DeclareMathOperator*{\argmin}{arg\,min}

\usepackage{xspace}

\newcommand{\PL}{P{\L}\xspace}

\DeclareMathOperator*{\osc}{osc}
\DeclareMathOperator*{\diag}{diag}
\DeclareMathOperator*{\logit}{logit}
\DeclareMathOperator{\lip}{Lip}
\DeclareMathOperator*{\Ber}{Ber}
\DeclareMathOperator*{\tr}{Tr}
\DeclareMathOperator{\sech}{sech}
\DeclareMathOperator*{\Ric}{Ric}

\hypersetup{
  colorlinks   = true, 
  urlcolor     = {black!50!gray}, 
  linkcolor    = {black!50!gray}, 
  citecolor   = {black!50!gray} 
}
\title{ \vspace{-1.75cm} Wasserstein Contraction of \\ Coordinate Ascent Variational Inference}

\author{Rocco Caprio\footnotemark[2] \and Adrien Corenflos\footnotemark[3] \and Sam Power\footnotemark[4]}

\begin{document}
\maketitle
\begin{abstract}
    We study the non-asymptotic contraction in Wasserstein distance of the sequential, parallel, and random-scan coordinate ascent variational inference algorithms. This is shown to hold under a functional smoothness condition of the optimality maps and a transportation-information inequality at their fixed points. Our results are sharp and general, and as opposed to those based on global strong log-concavity assumptions, they allow for local convergence on smooth, non-smooth, and discrete manifolds, including within the context of data augmentation.
    We consider many applications in statistical physics and Bayesian statistics. These include pairwise Markov Random field models such as Ising and Curie--Weiss, unbalanced Bayesian Gaussian Mixture Models, high-dimensional Bayesian Probit Regression, and high-dimensional Logistic Regression with P\'olya--Gamma random variables (i.e.~Jaakkola--Jordan's algorithm). In many of these models, these represent the first available convergence results of their kind. \\
    
  \textbf{Keywords}: Mean-field variational inference, Bayesian computation, functional inequalities, cross-smoothness, Wasserstein distance, Gaussian mixture models, high-dimensional regression, data augmentation, Ising model, discrete and continuous spin systems.
\end{abstract}

\begingroup
\renewcommand{\thefootnote}{\fnsymbol{footnote}}
\footnotetext[2]{University of Warwick and Bocconi University. \href{mailto:rocco.caprio@unibocconi.it}{rocco.caprio@unibocconi.it}}
\footnotetext[3]{University of Warwick. \href{mailto:adrien.corenflos@warwick.ac.uk}{adrien.corenflos@warwick.ac.uk}}
\footnotetext[4]{University of Bristol. \href{mailto:sam.power@bristol.ac.uk}{sam.power@bristol.ac.uk}}
\endgroup

\section{Introduction}

A key problem in scientific computing is approximating an intractable probability distribution $\pi$ of interest, usually known only up to a normalising constant.
Variational Inference stands out as a particularly attractive tool for this task, owing to its statistical and computational efficiency, and it has been the framework underlying many advances in computational statistics over the past half century~\citep{Parisi1980,Hinton1993,Jordan1999,Bishop2006}.
The central idea is to seek a tractable approximation to $\pi$ within a chosen family of tractable distributions $\cal{Q}$ by optimising a divergence to $\pi$ over that \textit{variational family}. Often, we define optimality through minimisation of the Kullback--Leibler (KL) divergence (also `relative entropy'):
\begin{align*}
  \mu_\star=\argmin \left\{\KL(\mu\parallel\pi): \mu\in \cal{Q} \right\}.
\end{align*}
A key practical aspect of working with this particular loss function is that in solving the associated optimisation problem, one is only required to compute expectations under the tractable variational distribution $\mu$, rather than under the intractable target $\pi$. Often, it is convenient or well-motivated to work with the family of product (or tensor, or factorised) distributions $\cal{Q}=\cal{P}^{\otimes M}$. In these cases, we seek the best \textit{mean-field approximation} $\mu_\star$ to $\pi$ that solves
\begin{align*} 
  \mu_\star=\mu_\star^1\otimes \dots \otimes \mu_\star^M = \argmin \left\{ \KL(\mu^1\otimes \dots \otimes \mu^M\parallel\pi): \mu^i\in\cal{P}(\cal{X}^i), i=1,\dots,M \right\}.
\end{align*}
The coordinate ascent variational inference algorithm~\citep[CAVI,][]{Bishop2006,Blei2017} solves this problem by per-coordinate minimisation of the relative entropy. There are various ways to do so: 
the sequential, or systematic scan, CAVI algorithm iteratively minimises the Kullback--Leibler divergence in a systematic fashion: given the current iterate $(\mu_k^i)$, $i=1,\dots,M$, it computes
\begin{align*} 
	\mu_{k+1}^i = \argmin_{\mu\in\cal{P}(\cal{X}^i)} \KL(\mu_{k+1}^1 \otimes \dots \otimes \mu_{k+1}^{i-1} \otimes \mu \otimes \mu_k^{i+1} \otimes \dots \otimes \mu_k^M \parallel \pi), \quad i=1,\dots,M.
\end{align*}
The parallel CAVI algorithm instead computes the new values in block given the previous iterates
\begin{align*} 
	\mu_{k+1}^i = \argmin_{\mu\in\cal{P}(\cal{X}^i)} \KL(\mu_{k}^1 \otimes \dots \otimes \mu_{k}^{i-1} \otimes \mu \otimes \mu_k^{i+1} \otimes \dots \otimes \mu_k^M \parallel \pi), \quad i=1,\dots,M.
\end{align*}
and as such, is more amenable to parallelisation. Finally, the random CAVI algorithm selects a random coordinate $I\in \{1,\dots,M\}$, and updates it as
\begin{align*}
	\mu_{k+1}^I = \argmin_{\mu\in\cal{P}(\cal{X}^I)} \KL(\mu_{k}^1 \otimes \dots \otimes \mu_{k}^{I-1} \otimes \mu \otimes \mu_k^{I+1} \otimes \dots \otimes \mu_k^M \parallel \pi), \quad \mu_{k+1}^{i}=\mu_k^i, \quad \forall i\neq I.
\end{align*}

\paragraph{Contributions.}
In this work, we study the contraction, in the sense of the Wasserstein distance, of the coordinate ascent variational inference iterates $(\mu_k^1\otimes\dots\otimes \mu_k^M)_{k\geq 0}$ of the sequential, parallel, and random algorithms, towards the solution of the variational inference problem, $\mu_\star=\mu_\star^1\otimes\dots\otimes \mu_\star^M$.
The contraction is established under two conditions: a functional \emph{smoothness} assumption, and a \emph{transportation-information inequality} at the limiting fixed-point marginals $\mu_\star^i$, $i=1,\dots,M$. The results are general, sharp, and the assumptions are verifiable. The analysis allows for multiple limiting points depending on the initialisation of the algorithm. 
In the setting of \emph{data augmentation}, where one of the spaces, say $\cal{X}^M$, houses auxiliary variables introduced solely for computational purposes, and statistical interest lies only in $(\mu_k^1\otimes\dots\otimes \mu_k^{M-1})_{k\geq 0}$, we show that convergence of these alone can be established under a weaker functional smoothness condition, and a transportation-information inequality on $(\mu_\star^i)$, $i=1,\dots,M-1$ only. This avoids making any direct requirement on the nature of $\cal{X}^M$ or the fixed point $(\mu_\star^M)$.
This is particularly valuable when $\cal{X}^M$ is non-smooth or even discrete (as in e.g.\ Gaussian Mixture Models) or when verifying transportation-information conditions on the fixed point of $(\mu_\star^M)$ is difficult.

We apply our convergence results to study the coordinate ascent variational inference in some statistical physics and Bayesian models, for which we obtain the first such results we are aware of. In particular, we study the CAVI algorithms on Unbalanced Gaussian Mixture Models, where we show that the convergence of CAVI is governed by a posterior phase transition, which explicitly depends on the posterior modes separation; Bayesian Probit and Bayesian Logistic Regression~\citep[the Jaakkola--Jordan algorithm]{Jaakkola2000}, where we also study the asymptotic properties of the obtained rates and bounds in the high-dimensional limit; and on discrete and continuous spin systems, including the Ising and Curie--Weiss models, the Gaussian free field, and perturbations thereof, including the Ginzburg--Landau $\Phi_4$ model and some related models of spatial statistics. We close with a brief discussion of dynamical models (i.e.\ Markov and Hidden Markov) and some opportunities for future applications.

\paragraph{Related works and comparisons.}

While coordinate ascent variational inference is a fundamental algorithm in statistics and machine learning, non-asymptotic convergence guarantees have only recently started to emerge.
The first such guarantee for the two-block setting appears in \citet{Bhattacharya2025}, who prove contraction in the symmetrised KL divergence under a novel notion of a generalised correlation coefficient.
Shortly after, an alternative line of research adopted an optimisation perspective: under the assumption of strong log-concavity and block-smoothness, \citet{Arnese2024, Lavenant2024} analyse systematic and random-scan CAVI, establishing convergence of the CAVI iterates in relative entropy. The approach based on log-concavity and block-smoothness of the target $\pi$ has since been extended to structured variational inference~\citep{Sheng2025} and to `polyhedral' variational families~\citep{Jiang2025}.
Some model or purpose-specific analyses of CAVI have also appeared, including \citet{Plummer2020} for bivariate Ising models, \citet{Datta2025} for Bayesian principal component analysis, and \citet{Pati2026} who study the convergence of CAVI for linear regression, with a focus on the comparison between sequential and parallel implementations.

Our results are instead phrased in terms of functional inequalities and functional smoothness conditions, and we prove direct contraction in the Wasserstein distance of the coordinate ascent variational inference iterates for a general number of blocks. We provide a sharp, direct, and short proof of convergence for the sequential, the parallel, and the random algorithms. The extensive literature on functional inequalities provides many flexible tools for practical verification that our assumptions hold in specific examples.
The relaxation from log-concavity of $\pi$ to functional inequality-based assumptions \emph{on the fixed points only} is crucial for practical applicability of our results: standard models (including the logistic regression and mixture models that we consider in this paper) yield target distributions that are not strongly log-concave.  Moreover, our results accommodate the possibility of multiple limiting points, local convergence, and analysis in discrete spaces (including the Ising and the Curie--Weiss models), features which are excluded by prevalent strong log-concavity assumptions. Moreover, our bounds and rates are sharp, as opposed to those obtained in most of these previous works.

Finally, in terms of closely related algorithms, we mention \citet{Yao2024, Jiang2025}, who analyse the global convergence of some mean-field variational algorithms under a quadratic growth assumption on the relative entropy and strong log-concavity, respectively; as well as \citet{Caprio2025, Ascolani2025}, who analyse the global convergence of the EM algorithm and the Gibbs sampler, under some novel functional inequalities and strong log-concavity assumptions.

\paragraph{Outline.} In Section~\ref{sec:pialtmin}, we study the Wasserstein contraction of the coordinate ascent variational inference iterates $(\mu_k^1\otimes\dots\otimes \mu_k^M)_{k\geq 0}$ towards the solution of the mean-field variational inference problem $\mu_\star=\mu_\star^1\otimes\dots\otimes\mu_\star^M$, for the sequential, the parallel, and the random schemes. First, in Section~\ref{sec:2blocks}, we illustrate the ideas and the approach for the sequential scheme in the two-block $M=2$ case, where it is particularly transparent. In Section~\ref{sec:Mblocks}, we investigate the three coordinate ascent variational inference  algorithms for a general number $M$ of blocks, we compare their bounds, and discuss differences. In Section~\ref{sec:hps} we discuss in detail our assumptions, and we compare them with those in the literature. In Section~\ref{sec:discrete} we show how the same ideas, with a sufficient level of abstraction, extend straightforwardly to models set on discrete spaces, demonstrating that this approach of analysis by methods of optimal transport is not confined to the smooth setting. 

In Section~\ref{sec:appls}, we apply our convergence results to different models in statistical physics and Bayesian inference: we study Gaussian Mixture Models in Section~\ref{sec:gmm}, Bayesian classification with probit and logit links in Section~\ref{sec:classi}, and pairwise Markov random fields models in~\ref{sec:pmrf}, for both discrete and continuous spin systems, including the Ising, the Curie--Weiss, and the Ginzburg--Landau settings, as well as Hidden Markov Models and Latent Gaussian Spatial Models.

\section{Coordinate Ascent Variational Inference} \label{sec:pialtmin}

Let $\pi=e^{-f}\in\cal{P}(\times_{i=1}^M\cal{X}^i)$ be a target probability distribution for which we wish to form a tractable approximation. Coordinate ascent variational inference attempts to solve the mean-field variational inference problem
\begin{align} \label{eq:minimisation}
	 \mu_\star = \argmin \left\{ \KL(\mu^1\otimes \dots \otimes \mu^M\parallel\pi): \mu^i\in\cal{P}(\cal{X}^i), i=1,\dots,M \right\}.
\end{align}
$\KL$ denotes the relative entropy functional, or Kullback--Leibler divergence, defined as
\begin{align*}
	\KL(\varrho_1\parallel\varrho_2) = \int \log\left(\frac{\dif \varrho_1}{\dif \varrho_2}(x) \right) \varrho_1(\dif x) \text{ if } {\varrho_1\ll\varrho_2} \text{ and } \KL(\varrho_1\parallel\varrho_2)=\infty \text{, otherwise}.
\end{align*}
As detailed previously, there are multiple ways to perform coordinate descent on this objective function: the sequential, or systematic scan, CAVI algorithm iteratively minimises the Kullback--Leibler divergence with respect to one factor at a time, as
\begin{align} \label{eq:seqcavi}
	\mu_{k+1}^i = \argmin_{\mu\in\cal{P}(\cal{X}^i)} \KL(\mu_{k+1}^1 \otimes \dots \otimes \mu_{k+1}^{i-1} \otimes \mu \otimes \mu_k^{i+1} \otimes \dots \otimes \mu_k^M \parallel \pi), \quad i=1,\dots,M.
\end{align}
The parallel CAVI algorithm, instead, only uses the previous iterates to compute the new values simultaneously across blocks
\begin{align} \label{eq:parcavi}
	\mu_{k+1}^i = \argmin_{\mu\in\cal{P}(\cal{X}^i)} \KL(\mu_{k}^1 \otimes \dots \otimes \mu_{k}^{i-1} \otimes \mu \otimes \mu_k^{i+1} \otimes \dots \otimes \mu_k^M \parallel \pi), \quad i=1,\dots,M.
\end{align}
and, as such, is more amenable to parallelisation. Finally, the random CAVI algorithm selects a coordinate $I\in \{1,\dots,M\}$ uniformly at random, and updates only this coordinate, i.e.
\begin{align} \label{eq:rancavi}
	\mu_{k+1}^I = \argmin_{\mu\in\cal{P}(\cal{X}^I)} \KL(\mu_{k}^1 \otimes \dots \otimes \mu_{k}^{I-1} \otimes \mu \otimes \mu_k^{I+1} \otimes \dots \otimes \mu_k^M \parallel \pi), \quad \mu_{k+1}^{i}=\mu_k^i, \quad \forall i\neq I.
\end{align}
\paragraph{Preliminaries.} We require the following analytical tools and definitions.
Let $\cal{X}$ be a smooth, complete, Riemannian manifold (for instance, $\r^d$, or a closed smooth subset of it), with geodesic distance $\mathsf{d}$. We let $\cal{P}_p(\cal{X})$ be the space of probability measures on $\cal{X}$ that are absolutely continuous with respect to the standard volume measure $\dif x$ on $\cal{X}$, and that possess finite $p$ moments. For two probability measures $\varrho_1,\varrho_2\in\cal{P}_p(\cal{X})$, we define the relative Fisher Information functional as
\begin{align} \label{eq:fisher}
  \cal{I}(\varrho_1\parallel\varrho_2) = \int \Big| \nabla \log \frac{\dif \varrho_1}{\dif \varrho_2}(x) \Big|^2 \varrho_1(\dif x) \text{ if } {\varrho_1\ll\varrho_2} \text{ and } \cal{I}(\varrho_1\parallel\varrho_2)=\infty \text{, otherwise}.
\end{align}
In the above, $|\nabla f|$ denotes the norm of the Riemannian gradient.  This work leverages the Wasserstein distance and the related geometry of optimal transport~\citep{Villani2009}. For two Riemannian manifolds $\cal{X}$ and $\cal{X}'$ and $(\varrho_1,\varrho_2)\in\cal{P}_p(\cal{X})\times\cal{P}_p(\cal{X}')$, we denote $\Gamma(\varrho_1,\varrho_2)$ the set of couplings of $\varrho_1$ and $\varrho_2$, i.e.\ of joint distributions on $\cal{X}\times\cal{X}$ admitting $\varrho_1$ and $\varrho_2$ as marginals. We define the Wasserstein-$p$ distance on the space of probability distributions $\cal{P}_p(\cal{X})$:
\begin{align} \label{eq:wassy}
  \mathbb{W}_p(\varrho_1,\varrho_2):= \inf \left\{\int \mathsf{d}(x_1,x_2)^p \gamma(\dif x_1,\dif x_2):\gamma\in\Gamma(\varrho_1,\varrho_2)\right\}^{1/p}.
\end{align}
Under this geometry, we write $B_{\varepsilon}(\varrho) := \{\mu\in\cal{P}_p(\cal{X}):\bb{W}_p(\mu,\varrho)<\varepsilon\}$ for the open ball of radius $\varepsilon$ around $\varrho$, making the dependency on the order $p$ implicit as it will be clear from context.

\subsection{Wasserstein contraction of the two-block algorithm} \label{sec:2blocks}
In this section, we prove Wasserstein contraction of the two-block sequential coordinate ascent variational inference algorithm:
\begin{align} \label{eq:cavi2blocks}
	\mu^1_{k+1} = \argmin_{\mu^1\in\cal{P}(\cal{X}^1)} \KL( \mu^1\otimes\mu_k^2 \parallel \pi), \quad \mu^2_{k+1} = \argmin_{\mu^2\in\cal{P}(\cal{X}^2)} \KL( \mu^1_{k+1}\otimes\mu^2 \parallel\pi).
\end{align}
for which the analysis is particularly transparent.
Define operators $\mu_\star^1[\cdot]:\cal{P}(\cal{X}^2)\mapsto \cal{P}(\cal{X}^1)$ and $\mu_\star^2[\cdot]:\cal{P}(\cal{X}^1)\mapsto \cal{P}(\cal{X}^2)$ by
\begin{equation} \label{eq:optimality2blocks}
  \begin{aligned}
    \mu_\star^1[\nu^2](x_1)&\propto \exp\left(-\mathbb{E}_{\nu^2} f(x_1,X_2) \right), \quad
    \mu_\star^2[\nu^1](x_2)\propto \exp\left(-\mathbb{E}_{\nu^1}f(X_1,x_2)\right).
  \end{aligned}
\end{equation}
These operators are connected to the optimality conditions of~\eqref{eq:cavi2blocks}: since 
\begin{align*}
	\KL(\mu^1\otimes \mu^2\parallel\pi) = \bb{E}_{\mu^1\otimes \mu^2}[f(X_1,X_2)] + \int \log(\mu^1(x_1)\mu^2(x_2))\mu^1(\dif x_1)\mu^2(\dif x_2)
\end{align*}
from the first order conditions of~\eqref{eq:cavi2blocks} we have $\mu_k^1=\mu_\star^1[\mu_{k-1}^2]$ and $\mu_k^2=\mu_\star^2[\mu_k^1]$.

We study the contraction, in the sense of the Wasserstein distance, towards fixed points satisfying
\begin{align}\label{eq:fixedpointeqn}
  \mu_\star^1 = \mu_\star^1[\mu_\star^2] =\mu_\star^1[\mu_\star^2[\mu_\star^1[\dots]]], \qquad \mu^2_\star=\mu_\star^2[\mu_\star^1]=\mu_\star^2[\mu_\star^1[\mu_\star^2[\dots]]].
\end{align}
\begin{figure}[!tbp]
	\centering
	\begin{tikzcd}[cells={nodes={draw}}]
	{\mu_0^1} & {\mu_1^1} & \cdots & \cdots & {\mu_\star^1} \\
\\
{\mu_0^2} & {\mu_1^2} & \cdots & \cdots & {\mu_\star^2}
\arrow["{{{\mu_\star^2[\cdot]}}}"{description}, from=1-1, to=3-2]
\arrow["{{{\mu_\star^2[\cdot]}}}"{description}, from=1-2, to=3-3]
\arrow[from=1-3, to=3-4]
\arrow[from=1-4, to=3-5]
\arrow["{{{\mu_\star^2[\cdot]}}}"{description}, shift right, curve={height=-18pt}, from=1-5, to=3-5]
\arrow["{{{\mu_\star^1[\cdot]}}}"{description}, from=3-1, to=1-1]
\arrow["{{{\mu_\star^1[\cdot]}}}"{description}, from=3-2, to=1-2]
\arrow[from=3-3, to=1-3]
\arrow[from=3-4, to=1-4]
\arrow["{{{\mu_\star^1[\cdot]}}}"{description}, from=3-5, to=1-5]
	\end{tikzcd}
	\caption{Illustration of the sequential coordinate ascent variational inference algorithm with two blocks: we prove convergence provided the optimality maps, represented by the arrows, are smooth, and the fixed points (the rightmost boxes) satisfy a transportation-information inequality.}
	\label{fig:cavi2blocks}
\end{figure}
In general, these fixed points are not unique, and convergence towards either depends on the initialisation $\mu_0^2$; we leave this dependence implicit. Our main result says that coordinate ascent variational inference converges rapidly if the optimality maps $\mu_\star^1[\cdot]:\cal{P}(\cal{X}^2)\mapsto \cal{P}(\cal{X}^1)$ and $\mu_\star^2[\cdot]:\cal{P}(\cal{X}^1)\mapsto \cal{P}(\cal{X}^2)$ are smooth around the fixed points $\mu_\star^1=\mu_\star^1[\mu_\star^2[\mu_\star^1[\dots]]]\in\cal{P}(\cal{X}^1)$, and $\mu_\star^2=\mu_\star^2[\mu_\star^1[\mu_\star^2[\dots]]]\in\cal{P}(\cal{X}^2)$, and these fixed-point distributions satisfy a transportation-information inequality. Let $p\geq 1$ be given, we define (local) smoothness for an operator between probability distributions as follows.

\begin{definition*}[Fisher-smoothness]\label{def:crosssmooth2blocks}
  Let $\cal{X}$ and $\cal{X}'$ be smooth Riemannian manifolds. An operator $\gamma_\star[\cdot]:\cal{P}(\cal{X})\mapsto \cal{P}(\cal{X}')$ is said to be Fisher-smooth with constant $L$ at $\ball{\nu}{\varepsilon}$ if, for all $\mu\in \ball{\nu}{\varepsilon}$, there holds the Lipschitz-type estimate
  \begin{align}\label{eq:Icrosssmooth2blocks}
    \cal{I}(\gamma_\star[\mu]\parallel\gamma_\star[\nu]) \leq L^2 \bb{W}_p^2(\mu, \nu).
  \end{align}
\end{definition*}
Notice the analogy with the local (log) smoothness of a function $h:\r^d\mapsto \r$, which requires $|\nabla \log h(x)-\nabla \log h(x')|\leq L|x-x'|$. Fisher-smoothness may then be regarded as its infinite-dimensional analogue.
\begin{definition*}[Transportation-information inequality]\label{def:TI}
  Let $\cal{X}$ be a smooth Riemannian manifold. A probability measure $\gamma\in\cal{P}_p(\cal{X})$ satisfies a  transportation-information inequality with constant $\lambda$ if $\forall \varrho\in \cal{P}(\cal{X})$ such that $\mu\ll \gamma$, it holds that
  \begin{align}\label{eq:transportinfo}
    \lambda^2\,\bb{W}_p^2(\mu,\gamma)\leq\cal{I}(\mu\parallel \gamma).
  \end{align}
\end{definition*}
\begin{theorem} \label{thm:cavi}
  Let $\cal{X}^1$ and $\cal{X}^2$ be smooth Riemannian manifolds, and let $p\geq 1$. Consider $\pi=e^{-f}\in\cal{P}_p(\cal{X}^1\times\cal{X}^2)$ and assume the following.
  \begin{itemize}
    \item The optimality maps $\mu_\star^1[\cdot]:\cal{P}(\cal{X}^2)\mapsto \cal{P}(\cal{X}^1)$ and $\mu_\star^2[\cdot]:\cal{P}(\cal{X}^1)\mapsto \cal{P}(\cal{X}^2)$ are Fisher-smooth with constants $L_{12},L_{21}$ at $\ball{\mu_\star^2}{\varepsilon_{2}},\ball{\mu_\star^1}{\varepsilon_{1}}$, respectively.
    \item The fixed points $\mu_\star^1:=\mu_\star^1[\mu_\star^2[\mu_\star^1[\dots]]]$ and $\mu_\star^2:=\mu_\star^2[\mu_\star^1[\mu_\star^2[\dots]]]$ satisfy a transportation-information inequality with constants $\lambda_{1},\lambda_{2}$, respectively.
  \end{itemize}
  Suppose that $L_{12}L_{21}<\lambda_{1}\lambda_{2}$ and let $k\geq 1$. 
  Then, whenever $\mu_k^1\in \ball{\mu_\star^1}{\varepsilon_{1}}$ and $\mu_k^2 \in \ball{\mu_\star^2}{\varepsilon_{2}}$, we have
  \begin{align*}
    \bb{W}_p(\mu_{k+1}^1,\mu^1_\star) \leq \frac{L_{12}L_{21}}{\lambda_{1}\lambda_{2}} \bb{W}_p(\mu_k^1,\mu^1_\star)  \quad \text{and} \quad  \bb{W}_p(\mu_{k+1}^2,\mu_\star^2) \leq \frac{L_{12}L_{21}}{\lambda_{1}\lambda_{2}} \bb{W}_p(\mu_k^2,\mu_\star^2).
  \end{align*}
  In particular,  $(\mu_{k+1}^1,\mu_{k+1}^2)\in \ball{\mu_\star^1}{\varepsilon_{1}} \times \ball{\mu_\star^2}{\varepsilon_{2}}$ and the 2-block systematic coordinate ascent variational inference algorithm~\eqref{eq:cavi2blocks} converges exponentially fast to $(\mu_\star^1,\mu_\star^2)$ in Wasserstein-$p$ distance with rate $\smash{\tfrac{L_{12}L_{21}}{\lambda_{1}\lambda_{2}}}$.
\end{theorem}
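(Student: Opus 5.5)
The plan is to chain the two assumptions along one CAVI sweep. By the first-order conditions recorded after \eqref{eq:optimality2blocks}, $\mu_{k+1}^1=\mu_\star^1[\mu_k^2]$ and $\mu_{k+1}^2=\mu_\star^2[\mu_{k+1}^1]$. The fixed points satisfy $\mu_\star^1=\mu_\star^1[\mu_\star^2]$ and $\mu_\star^2=\mu_\star^2[\mu_\star^1]$. So each update compares the image of an iterate with the image of a fixed point under the same optimality map. For each block we first apply the transportation-information inequality at the fixed point, turning a Wasserstein distance into a relative Fisher information. We then apply Fisher-smoothness of the optimality map, turning it back into a Wasserstein distance of the input.

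Concretely, for the first block, since $\mu_k^2\in\ball{\mu_\star^2}{\varepsilon_2}$,
\begin{align*}
\lambda_1^2\,\bb{W}_p^2(\mu_{k+1}^1,\mu_\star^1)\le \cal{I}(\mu_\star^1[\mu_k^2]\parallel\mu_\star^1[\mu_\star^2])\le L_{12}^2\,\bb{W}_p^2(\mu_k^2,\mu_\star^2),
\end{align*}
so $\bb{W}_p(\mu_{k+1}^1,\mu_\star^1)\le \tfrac{L_{12}}{\lambda_1}\bb{W}_p(\mu_k^2,\mu_\star^2)$. The same argument gives $\bb{W}_p(\mu_{k+1}^2,\mu_\star^2)\le \tfrac{L_{21}}{\lambda_2}\bb{W}_p(\mu_{k+1}^1,\mu_\star^1)$, provided $\mu_{k+1}^1\in\ball{\mu_\star^1}{\varepsilon_1}$. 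Likewise, since $k\ge1$, we have $\mu_k^2=\mu_\star^2[\mu_k^1]$, and using $\mu_k^1\in\ball{\mu_\star^1}{\varepsilon_1}$ we get $\bb{W}_p(\mu_k^2,\mu_\star^2)\le\tfrac{L_{21}}{\lambda_2}\bb{W}_p(\mu_k^1,\mu_\star^1)$.

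Composing the first and last of these bounds gives the claim for block $1$:
\[
\bb{W}_p(\mu_{k+1}^1,\mu_\star^1)\le \frac{L_{12}L_{21}}{\lambda_1\lambda_2}\,\bb{W}_p(\mu_k^1,\mu_\star^1).
\]
Since the rate is below one, this also gives $\mu_{k+1}^1\in\ball{\mu_\star^1}{\varepsilon_1}$. That membership is what licenses the smoothness step for block $2$. Composing the block-$2$ bound with the first-block bound then gives
\[
\bb{W}_p(\mu_{k+1}^2,\mu_\star^2)\le \frac{L_{12}L_{21}}{\lambda_1\lambda_2}\,\bb{W}_p(\mu_k^2,\mu_\star^2),
\]
and hence $\mu_{k+1}^2\in\ball{\mu_\star^2}{\varepsilon_2}$. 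The ball memberships therefore propagate, and induction on $k$ yields the exponential convergence.

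The main obstacle is the bookkeeping of the localisation. Each smoothness estimate may only be used when its input lies in the correct ball, and the order of the argument must respect this. In particular, $\mu_{k+1}^1$'s membership has to be established before bounding $\mu_{k+1}^2$. This is also why $k\ge1$ is needed: it guarantees that $\mu_k^2$ is itself the image $\mu_\star^2[\mu_k^1]$. A secondary technical point is that the transportation-information inequality requires $\mu_{k+1}^i\ll\mu_\star^i$. This holds because both are Gibbs-type densities of the form \eqref{eq:optimality2blocks}, positive wherever finite. Otherwise the Fisher information is $+\infty$, and the smoothness assumption would already exclude that case.
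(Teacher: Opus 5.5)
Your proposal is correct and follows essentially the same route as the paper. The paper chains Fisher-smoothness of $\mu_\star^1[\cdot]$, the transportation-information inequality at $\mu_\star^2$, Fisher-smoothness of $\mu_\star^2[\cdot]$ and the transportation-information inequality at $\mu_\star^1$ into one inequality for $\mu_{k+1}^1=\mu_\star^1[\mu_\star^2[\mu_k^1]]$, and then mirrors it for $\mu_{k+1}^2=\mu_\star^2[\mu_\star^1[\mu_k^2]]$ once $\mu_{k+1}^1\in\ball{\mu_\star^1}{\varepsilon_1}$ is known. Your splitting into one-step bounds such as $\bb{W}_p(\mu_{k+1}^1,\mu_\star^1)\le\tfrac{L_{12}}{\lambda_1}\bb{W}_p(\mu_k^2,\mu_\star^2)$ is the same argument, with the same order of localisation, just written more modularly.
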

\begin{proof}
  Since $\mu_\star^2 = \mu_\star^2[\mu_\star^1]$, $\mu_{k}^2 = \mu_\star^2[\mu^1_{k}] \in \ball{\mu_\star^2}{\varepsilon_{2}}$ and $\mu_k^1\in \ball{\mu_\star^1}{\varepsilon_{1}}$, using the Fisher-smoothness assumptions and the transportation-information inequality of $\mu_\star^2$,
  \begin{align*}
    \cal{I}(\mu_\star^1[\mu_\star^2[\mu_k^1]]\parallel\mu_\star^1) \leq L_{12}^2 \bb{W}_p^2(\mu_\star^2[\mu_k^1],\mu_\star^2) \leq\frac{L_{12}^2}{\lambda_{2}^2} \cal{I}(\mu^2_\star[\mu_k^1]\parallel\mu^2_\star) \leq\frac{L_{21}^2 L_{12}^2}{\lambda_{2}^2} \bb{W}_p^2(\mu_k^1,\mu_\star^1).
  \end{align*}
  By an application of the transportation-information inequality for $\mu_\star^1$ we find
  \begin{align*}
    \bb{W}_p^2 (\mu_\star^1[\mu_\star^2[\mu_k^1]], \mu_\star^1)  \leq \frac{1}{\lambda_{1}^2} \cal{I}(\mu_\star^1[\mu_\star^2[\mu_k^1]]\parallel\mu_\star^1) \leq \frac{L_{12}^2L_{21}^2}{\lambda_{1}^2\lambda_{2}^2} \bb{W}_p^2(\mu_k^1,\mu_\star^1).
  \end{align*}
  Since $L_{12}L_{21}<\lambda_{1}\lambda_{2}$, this shows that $\mu_{k+1}^1 = \mu_\star^1[\mu_\star^2[\mu_k^1]]\in \ball{\mu_\star^1}{\varepsilon_{1}}$, and hence the first result. Mirroring the same argument for $\mu_{k+1}^2 = \mu^2_\star[\mu^1_\star[\mu_k^2]]$ then completes the proof.
\end{proof}

Theorem~\ref{thm:cavi} characterises the Wasserstein contraction of coordinate ascent variational inference to the limiting fixed point.
If the cross-smoothness assumptions~\eqref{eq:Icrosssmooth2blocks} are verified globally, i.e.\ $\varepsilon_i = \infty$ so that $\ball{\mu_\star^i}{\varepsilon_{i}} = \cal{P}_p(\cal{X}^i)$ for both $i = 1, 2$, then the convergence is similarly global.
The convergence occurs if the small interaction condition $L_{12}L_{21}<\lambda_{1}\lambda_{2}$ holds.
This signals that the interaction terms within $\pi$ (represented by $L_{12} L_{21}$) need to be smaller than the curvature at the fixed points ($\lambda_{1} \lambda_{2}$).
For example, in the Gaussian case,
$\smash{\tfrac{L_{12}L_{21}}{\lambda_{1}\lambda_{2}}}$, coincides with the  correlation coefficient.

It is also possible to consider functional smoothness and transportation-information inequalities with more general transport distances beyond metric costs, and weak versions thereof~\citep{Gozlan2010}. We leave these extensions for future work.

\paragraph{Data augmentation.}
Often, $\pi(\dif x_1,\dif x_2)$ represents a joint posterior distribution where $x_1$ is an auxiliary random vector introduced for the sole purpose of facilitating computation, and the interest lies only in $x_2$. This framework is known as data augmentation~\citep{Tanner1987,Gelfand1990}. In these cases, the following result is helpful. It shows that a smoothness condition for the \textit{composed} optimality map $\mu_\star^2[\mu_\star^1[\cdot]]:\cal{P}(\cal{X}^2)\mapsto \cal{P}(\cal{X}^2)$ and the transportation-information inequality on $\mu_\star^2$ are all that is required to investigate the convergence of $(\mu_k^2)_{k\geq 0}$, when that is what we are interested in.

\begin{theorem} \label{thm:secondcoordinate}
  Let $\cal{X}^2$ be a smooth Riemannian manifold. For some $p\geq 1$, consider $\pi=e^{-f}\in\cal{P}_p(\cal{X}^1\times\cal{X}^2)$ and assume the following.
  \begin{itemize}
    \item  The composed optimality map $\mu_\star^2[\mu_\star^1[\cdot]]:\cal{P}(\cal{X}^2)\mapsto \cal{P}(\cal{X}^2)$ is Fisher-smooth with constant $L_2$ at $\ball{\mu_\star^2}{\varepsilon_{2}}$.
    \item The fixed point $\mu_\star^2:=\mu_\star^2[\mu_\star^1[\mu^2_\star[\dots]]]$ satisfies a transportation-information inequality with constant $\lambda_{2}$.
  \end{itemize}
  Suppose that $L_2<\lambda_{2}$ and let $k\geq 1$. Then, for all $\mu_k^2\in \ball{\mu_\star^2}{\varepsilon_{2}}$, we have
  \begin{align*}
    \bb{W}_p(\mu_{k+1}^2,\mu_\star^2) \leq \frac{L_2}{\lambda_{2}} \bb{W}_p(\mu_k^2,\mu_\star^2).
  \end{align*}
  In particular, $\mu^2_{k+1}\in \ball{\mu_\star^2}{\varepsilon_{2}}$ and $(\mu^2_k)_{k\geq 0}$ converges exponentially fast to $\mu^2_\star$ in Wasserstein-$p$ distance with rate $\smash{\tfrac{L_2}{\lambda_{2}}}$.
\end{theorem}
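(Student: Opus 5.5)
The argument mirrors the proof of Theorem~\ref{thm:cavi}, with the composed map playing the role of the two separate optimality maps. I first identify the recursion for the second block. By the first-order conditions of~\eqref{eq:cavi2blocks}, $\mu_{k+1}^1=\mu_\star^1[\mu_k^2]$ and $\mu_{k+1}^2=\mu_\star^2[\mu_{k+1}^1]$. Hence
\begin{align*}
\mu_{k+1}^2=\mu_\star^2[\mu_\star^1[\mu_k^2]].
\end{align*}
By the fixed-point relation~\eqref{eq:fixedpointeqn}, $\mu_\star^2=\mu_\star^2[\mu_\star^1[\mu_\star^2]]$. So $(\mu_k^2)_{k\geq 0}$ is the orbit of the single operator $T:=\mu_\star^2[\mu_\star^1[\cdot]]$ on $\cal{P}(\cal{X}^2)$, and $\mu_\star^2$ is a fixed point of $T$. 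Nothing about $\cal{X}^1$ or $\mu_\star^1$ enters beyond the definition of $T$, which is why no smoothness of $\cal{X}^1$ is assumed.

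Next, with $\mu_k^2\in\ball{\mu_\star^2}{\varepsilon_2}$, I chain the two assumptions. Fisher-smoothness of $T$ at $\ball{\mu_\star^2}{\varepsilon_2}$, applied with $\nu=\mu_\star^2$ and $T[\mu_\star^2]=\mu_\star^2$, gives
\begin{align*}
\cal{I}(\mu_{k+1}^2\parallel\mu_\star^2)=\cal{I}(T[\mu_k^2]\parallel T[\mu_\star^2])\leq L_2^2\,\bb{W}_p^2(\mu_k^2,\mu_\star^2).
\end{align*}
The transportation-information inequality for $\mu_\star^2$ gives $\lambda_2^2\bb{W}_p^2(\mu_{k+1}^2,\mu_\star^2)\leq\cal{I}(\mu_{k+1}^2\parallel\mu_\star^2)$. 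Combining the two and taking square roots yields the claimed bound with ratio $L_2/\lambda_2$.

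Finally, since $L_2<\lambda_2$, the new distance is strictly smaller than $\bb{W}_p(\mu_k^2,\mu_\star^2)<\varepsilon_2$, so $\mu_{k+1}^2\in\ball{\mu_\star^2}{\varepsilon_2}$. Induction then gives $\bb{W}_p(\mu_{k+n}^2,\mu_\star^2)\leq (L_2/\lambda_2)^n\bb{W}_p(\mu_k^2,\mu_\star^2)$, which is exponential convergence.

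The only point needing care is the transportation-information inequality: it requires $\mu_{k+1}^2\ll\mu_\star^2$, or else must be read as trivially true with infinite Fisher information. Here that is automatic. The Fisher information bound above is finite, and by the definition~\eqref{eq:fisher} a finite relative Fisher information forces absolute continuity. Moreover, both measures are Gibbs-type densities of the form~\eqref{eq:optimality2blocks}. So this step is routine, and I expect no genuine obstacle: the proof is a two-line chaining of the assumptions.
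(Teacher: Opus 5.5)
Your proposal is correct and is essentially the paper's proof. The paper gives exactly the chain $\lambda_2^2\,\mathbb{W}_p^2(\mu_{k+1}^2,\mu_\star^2)\le\mathcal{I}(\mu_\star^2[\mu_\star^1[\mu_k^2]]\parallel\mu_\star^2)\le L_2^2\,\mathbb{W}_p^2(\mu_k^2,\mu_\star^2)$, applying the transportation-information inequality at $\mu_\star^2$ and then Fisher-smoothness of the composed map at $\nu=\mu_\star^2$; you additionally make explicit the recursion, the fixed-point identity, the absolute-continuity point and the ball-invariance/induction step that the paper leaves implicit.
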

\begin{proof}
  Applying the transportation-information inequality for $\mu^2_\star$ and then Fisher-smoothness,
  \begin{align*}
    \lambda_{2}^2 \bb{W}_p^2(\mu_{k+1}^2,\mu^2_\star)=\lambda_{2}^2 \bb{W}_p^2(\mu_\star^2[\mu_\star^1[\mu_k^2]],\mu^2_\star) \leq \cal{I}(\mu_\star^2[\mu_\star^1[\mu^2_k]]\parallel\mu^2_\star) \leq L_2^2 \bb{W}_p^2(\mu^2_k,\mu^2_\star).
  \end{align*}
\end{proof}
By inspecting the proof of Theorem~\ref{thm:cavi}, we notice that the Fisher-smoothness of the optimality maps $\mu_\star^1[\cdot]$, $\mu_\star^2[\cdot]$ and the transportation-information inequality for $\mu_\star^1$ imply Fisher-smoothness of the composed map $\mu_\star^2[\mu_\star^1[\cdot]]$  with $L_2=L_{12}L_{21}\lambda_{1}^{-1}$.

Theorem~\ref{thm:secondcoordinate} is thus helpful when $\cal{X}^1$ involves non-smooth manifolds, or when the transportation-information inequality for $\mu_\star^1$ does not hold, or becomes complicated to verify.
We will apply this result to Gaussian Mixture Models, where $\cal{X}^1=\{0,1\}^n$ is the hypercube, and to P\'olya--Gamma augmentations in Bayesian logistic regression.
Theorem~\ref{thm:secondcoordinate} concludes exponential convergence of $(\mu_k^2)_{k\geq 0}$ without requirements on the curvature of the augmentation fixed point $\mu_\star^1$. The augmentations $x_1$ play a role only within the smoothness constant $L_2$, which captures the degree of correlation between these latent variables and $x_2$.

\subsection{Wasserstein contraction of the multi-block algorithm} \label{sec:Mblocks}

Having established contractivity of the sequential coordinate ascent variational inference in the two-block setting, we now demonstrate that the same perspective extends to the multi-block case. In this setting, we further consider the parallel and random schemes as well. This reveals links with the Gauss--Seidel and Jacobi methods for solving linear systems, and the related Stein--Rosenberg Theorem for their analysis. Our argument for coordinate ascent variational inference also suggests a novel convergence analysis for the standard coordinate descent algorithm under cross-smoothness and gradient quadratic growth conditions, which might be of independent interest; we give a more detailed discussion of this connection in Appendix~\ref{app:cd}.

The multi-block analysis naturally incurs some additional notational overhead, which we develop here.

Let $\pi = e^{-f} \in \cal{P}(\times_{i=1}^M \cal{X}^i)$ be the target distribution. Write $x = (x^1, \dots, x^M)$ for a generic point and $x^{-i} = (x^j)_{j \neq i}$ for the collection of all blocks except for $i$; write also $x^{<i} = (x^1, \dots, x^{i-1})$ and $x^{>i} = (x^{i+1}, \dots, x^M)$.  Define hence the optimality maps  $\mu^i_\star[\cdot] \colon \times_{j \neq i} \cal{P}(\cal{X}^j) \to \cal{P}(\cal{X}^i)$ by
\begin{equation}\label{eq:optimality}
	\mu^i_\star[\mu^{-i}] \propto \exp\Bigl( -\mathbb{E}_{\bigotimes_{j \neq i} \mu^j}\bigl[f(X^{<i},\cdot,X^{>i})\bigr] \Bigr).
\end{equation}
We can write down the sequential, the parallel, and the random CAVI algorithms using the optimality map as follows. The systematic scan (or sequential) CAVI processes the different blocks in cyclic order, i.e.
\begin{equation}\label{eq:sequential}
	\mu^i_{k+1} = \mu^i_\star\bigl[\mu^{<i}_{k+1}, \mu^{>i}_k\bigr], \qquad i = 1, \dots, M.
\end{equation}
The parallel CAVI instead updates all blocks simultaneously using values from the previous iteration, i.e.
\begin{equation}\label{eq:parallel}
	\mu^i_{k+1} = \mu^i_\star\bigl[\mu^{-i}_k\bigr], \qquad i = 1, \dots, M.
\end{equation}
Finally, the random-scan CAVI operates by, at each iteration, selecting a coordinate $I \in \{1,\dots,M\}$ uniformly at random, and updates only this coordinate, i.e.
\begin{align}\label{eq:random}
	\mu_{k+1}^I = \mu_\star^I[\mu_k^{-I}], \quad \mu_{k+1}^{-I}=\mu_k^{-I}, \quad I \sim \textup{Unif}(\{1,\dots,M\}).
\end{align}
The fixed-point associated to this collection of optimality maps is \begin{align*}
	\mu^i_\star = \mu^i_\star[\mu^{-i}_\star], \quad \textup{for all} \quad i=1,\dots,M,
\end{align*}
and we study the Wasserstein contraction of these various coordinate ascent variational inference schemes towards $\mu_\star=\mu_\star^1 \otimes \dots \otimes \mu_\star^M$. As in the two-block case, our assumptions will be just that each optimality map $\mu^i_\star[\cdot] \colon \times_{j \neq i} \cal{P}(\cal{X}^j) \to \cal{P}(\cal{X}^i)$ is Fisher-smooth, and each of the fixed points $\mu^i_\star\in\cal{P}(\cal{X}^i)$ satisfies a transportation-information inequality. Because in the multi-block setting the optimality maps $\mu^i_\star[\cdot] \colon \times_{j \neq i} \cal{P}(\cal{X}^j) \to \cal{P}(\cal{X}^i)$ act on a product space, we have flexibility to allow for a per-block Fisher-smoothness constant, and so we slightly generalise the previous definition.

\begin{definition*}[Fisher-smoothness]\label{def:crosssmooth}
	Let $\{\cal{X}^i\}$, $i=1,\dots,M$, be a collection of smooth Riemannian manifolds. An operator $\gamma_\star^i[\cdot]\colon \times_{j=1}^J\cal{P}(\cal{X}^j)\to\cal{P}(\cal{X}^i)$ is said to be Fisher-smooth with constants $\{L_{ij}\}_{j=1}^J$, at $\ball{\nu}{\varepsilon}=\times_{j=1}^J\ball{\nu^j}{\varepsilon_{j}} \subseteq \times_{j=1}^J \cal{P}(\cal{X}^j)$ if, for all  $\mu=\otimes_{j=1}^J \mu^j\in \ball{\nu^j}{\varepsilon_{j}}$, there holds the Lipschitz-type estimate
	\begin{align}\label{eq:Icrosssmooth}
	\cal{I}(\gamma_\star[\mu]\parallel\gamma_\star[\nu])\leq \left( \sum_{j=1}^J L_{ij} \bb{W}_p(\mu^j,{\nu^{j}}) \right)^2.
	\end{align}
\end{definition*}

We define also the vector of Wasserstein-$p$ distances of the coordinate ascent variational inference iterates at $k$ towards the fixed points as
\begin{align*}
	W_{k,p} := \bigl(\bb{W}_p(\mu^1_k, \mu^1_\star), \dots, \bb{W}_p(\mu^M_k, \mu^M_\star)\bigr)^\top \in \r^M.
\end{align*}
We also use $\preceq$ to denote element-wise partial ordering between vectors in $\mathbb{R}^M$, and
define the rate matrix $R \in \r^{M \times M}$ by
\begin{equation}\label{eq:interaction_matrix}
	R_{ij} := \frac{L_{ij}}{\lambda_i} \quad (i \neq j), \qquad R_{ii} := 0.
\end{equation}
$\rho(\cdot)$ denotes the spectral radius of a matrix. In the two-block case, one computes without difficulty that $\rho(R)^2 = \smash{\frac{L_{12}L_{21}}{\lambda_1 \lambda_2}}$, and, in analogy, we will require that $\rho(R) < 1$ here. Similarly to Theorem~\ref{thm:cavi}, we will assume that each block $i$ of the algorithm is started in a $\varepsilon_i$-ball around the fixed point, wherein the Fisher-smoothness assumption holds. In the multi-block case, however, we need to ensure that these balls are compatible with the rate matrix $R$: we assume that the Fisher-smoothness condition holds on $\varepsilon_i$-balls for some $\varepsilon_i>0$. Since $\rho(R)<1$, a small adaptation of the Perron--Frobenius Theorem guarantees that for each $\rho \in \left( \rho(R), 1 \right)$, one can identify a sub-eigenvector\footnote{In practice, one can often take $\delta$ to be an exact eigenvector of $R$, but in full generality, this requires some discussion about the sparsity pattern of $R$, which we prefer not to discuss here in details but can be verified easily on a case-by-case basis.} $\delta = (\delta_1, \dots, \delta_M) \in \r^M_+$ of $R$ with strictly positive entries such that $R\delta \prec \rho \delta$, and in particular, such that $R \delta \prec \delta$.

We now present our guarantees for the aforementioned CAVI implementations one-by-one. The proofs differ slightly from one to the next.
\begin{theorem}[Parallel CAVI] \label{thm:Dparallel}
	Let $\{\cal{X}^i\}_{i=1}^M$ be smooth Riemannian manifolds, and let $p\geq 1$. Consider $\pi=e^{-f}\in \cal{P}_p(\times_{i=1}^M \cal{X}^i)$ and assume the following.
	\begin{itemize}
		\item For all $i = 1, \dots, M$, the optimality maps $\mu^i_\star[\cdot]\colon \times_{j \neq i} \cal{P}(\cal{X}^j) \to \cal{P}(\cal{X}^i)$ are Fisher-smooth with constants $\left\{ L_{ij} \right\}_{j \neq i}$ on $\times_{j \neq i} \ball{\mu_\star^j}{\varepsilon_{j}}$.
		\item For all $i=1,\dots,M$, the fixed-point marginals $\mu_\star^i$ satisfy a transportation-information inequality with constant $\lambda_i$, respectively.
	\end{itemize}
	Suppose furthermore that $\rho(R)<1$ and consider any $c>0$ such that $c \delta_i\leq \varepsilon_i$. Then, for all $\mu_k^i\in \ball{\mu_\star^i}{c\delta_i}$, the parallel CAVI algorithm~\eqref{eq:parallel}  satisfies
	\begin{align*}
		W_{k+1,p} \preceq R W_{k,p}
	\end{align*}
	In particular,  $\mu_{k+1}^i \in \ball{\mu_\star^i}{c\delta_i}$ for all $i=1,\dots,M$, and the parallel CAVI algorithm~\eqref{eq:parallel} converges exponentially fast to $\mu_\star$ in Wasserstein-$p$ distance with rate $\rho(R)$.
\end{theorem}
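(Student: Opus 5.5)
The plan is to mirror the proof of Theorem~\ref{thm:cavi}, one block at a time, and then iterate the resulting vector inequality. Fix $i\in\{1,\dots,M\}$. The parallel update gives $\mu^i_{k+1}=\mu^i_\star[\mu^{-i}_k]$, and the fixed point satisfies $\mu^i_\star=\mu^i_\star[\mu^{-i}_\star]$. The hypothesis $\mu_k^j\in\ball{\mu_\star^j}{c\delta_j}\subseteq\ball{\mu_\star^j}{\varepsilon_j}$ for every $j\neq i$ places $\mu_k^{-i}$ in the region where Fisher-smoothness of $\mu^i_\star[\cdot]$ holds. Chaining the transportation-information inequality for $\mu^i_\star$ with Fisher-smoothness, I expect
\begin{align*}
  \lambda_i^2\,\bb{W}_p^2(\mu^i_{k+1},\mu^i_\star)\leq \cal{I}\bigl(\mu^i_\star[\mu^{-i}_k]\parallel\mu^i_\star[\mu^{-i}_\star]\bigr)\leq \Bigl(\sum_{j\neq i}L_{ij}\,\bb{W}_p(\mu^j_k,\mu^j_\star)\Bigr)^2 .
\end{align*}
Taking square roots (all quantities are nonnegative) and dividing by $\lambda_i$ gives $\bb{W}_p(\mu^i_{k+1},\mu^i_\star)\leq\sum_{j\neq i}R_{ij}\bb{W}_p(\mu^j_k,\mu^j_\star)$. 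Since $R_{ii}=0$, this is exactly the $i$-th row of $W_{k+1,p}\preceq RW_{k,p}$.

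Next I will show the balls are preserved. By assumption $W_{k,p}\preceq c\delta$ entrywise, with strict inequalities because the balls are open. The matrix $R$ has nonnegative entries, so it is monotone for $\preceq$. Hence $W_{k+1,p}\preceq RW_{k,p}\preceq cR\delta\preceq c\rho\,\delta\prec c\delta$, using the sub-eigenvector property $R\delta\prec\rho\delta$ with $\rho<1$. Therefore $\mu^i_{k+1}\in\ball{\mu^i_\star}{c\delta_i}$ for every $i$, and by induction the hypotheses hold at every later step.

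Iterating monotonically gives $W_{k+n,p}\preceq R^nW_{k,p}$, and hence $\|W_{k+n,p}\|\leq\|R^n\|\,\|W_{k,p}\|$ in any norm. Gelfand's formula then gives a rate $\rho(R)+\eta$ for every $\eta>0$. To state the rate without loss, I will use the weighted sup-norm $\|x\|_\delta=\max_i|x_i|/\delta_i$. Applying the sub-eigenvector bound at level $\rho$ gives $\|W_{k+1,p}\|_\delta\leq\rho\|W_{k,p}\|_\delta$ for every $\rho\in(\rho(R),1)$. This is exponential convergence at rate arbitrarily close to $\rho(R)$, and exactly $\rho(R)$ whenever $\delta$ can be taken to be a Perron eigenvector.

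The main obstacle is bookkeeping rather than analysis. I must check that the invariance of the product of balls uses only the strict sub-eigenvector inequality and the nonnegativity of $R$. I must also interpret the claimed rate $\rho(R)$ as the asymptotic (Gelfand) rate, or as any rate $\rho>\rho(R)$ in the $\delta$-weighted norm. The per-block inequality itself is a direct copy of Theorem~\ref{thm:cavi}'s argument, with no composition needed because all blocks update from step $k$.
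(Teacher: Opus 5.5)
Your proposal is correct and follows the paper's proof essentially line by line. You chain the transportation-information inequality for $\mu^i_\star$ with Fisher-smoothness at $\mu^{-i}_\star$ block by block, then get invariance of the balls from $R\delta\prec\rho\delta$. Your Gelfand-formula and $\delta$-weighted sup-norm remarks usefully make precise the sense in which the rate is $\rho(R)$, which the paper leaves implicit.
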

\begin{proof}
	Suppose that $\mu_k^i\in \ball{\mu_\star^i}{c\delta_i}$ $i=1,\dots,M$. Then, for all $i=1,\dots,M$, by the transportation-information inequality satisfied by $\mu_\star^i$ and then Fisher-smoothness, one computes that
	\begin{align*}
		\lambda_i \bb{W}_p(\mu_{k+1}^i,\mu_\star^i) \leq \cal{I}(\mu_{k+1}^i\parallel\mu_\star^i)^{1/2} = \cal{I}(\mu_\star^i[\mu_k^{-i}]\parallel\mu_\star^i[\mu_\star^{-i}])^{1/2} \leq \sum_{j\neq i} L_{ij} \bb{W}_p(\mu_{k}^j, \mu_\star^j)
	\end{align*}
	Dividing by $\lambda_i$ shows $W_{k+1,p}\preceq R W_{k,p}$ by the definition of $R$. Moreover, $\bb{W}_p(\mu_{k+1}^i,\mu_\star^i)\leq \sum_{j\neq i}R_{ij}c\delta_j \leq c(R\delta)_i < c\delta_i$, so that $\mu_{k+1}^i\in \ball{\mu_\star^i}{c\delta_i}$.
\end{proof}
\begin{theorem}[Random-Scan CAVI] \label{thm:Drandom}
    Consider the same premises of Theorem~\ref{thm:Dparallel}. Suppose that $\rho(R)<1$, and consider any $c>0$ such that $c\delta_i\leq\varepsilon_i$ for all $i=1,\dots,M$. Then, for all $\mu_k^i\in \ball{\mu_\star^i}{c\delta_i}$, $i=1,\dots,M$, the random-scan CAVI~\eqref{eq:random} satisfies
	\begin{align*}
		\bb{E}[W_{k+1,p}] \preceq \left(\frac{1}{M}R+\frac{M-1}{M}I_M \right)W_{k,p}.
	\end{align*}
	In particular,  $\mu_{k+1}^i \in \ball{\mu_\star^i}{c\delta_i}$ for all $i=1,\dots,M$, and the random-scan CAVI algorithm~\eqref{eq:random} converges exponentially fast to $\mu_\star$ in expected Wasserstein-$p$ distance with rate $\rho(\smash{\frac{1}{M}R+\frac{M-1}{M}I_M})$.
\end{theorem}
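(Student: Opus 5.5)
The proof conditions on the random index $I$ and reuses the one-step estimate from the proof of Theorem~\ref{thm:Dparallel}. Suppose $\mu_k^i\in\ball{\mu_\star^i}{c\delta_i}$ for all $i$. Fix a realisation $I=i$. The updated block is $\mu_{k+1}^i=\mu_\star^i[\mu_k^{-i}]$. The hypotheses hold: $\mu_k^{-i}\in\times_{j\neq i}\ball{\mu_\star^j}{\varepsilon_j}$ because $c\delta_j\leq\varepsilon_j$, and $\mu_\star^i=\mu_\star^i[\mu_\star^{-i}]$. So the transportation-information inequality for $\mu_\star^i$ followed by Fisher-smoothness gives, exactly as before,
\begin{align*}
	\bb{W}_p(\mu_{k+1}^i,\mu_\star^i)\leq \frac{1}{\lambda_i}\,\cal{I}(\mu_\star^i[\mu_k^{-i}]\parallel\mu_\star^i[\mu_\star^{-i}])^{1/2}\leq \sum_{j\neq i}R_{ij}\,\bb{W}_p(\mu_k^j,\mu_\star^j)=(RW_{k,p})_i .
\end{align*}
The other blocks are unchanged, so $\bb{W}_p(\mu_{k+1}^j,\mu_\star^j)=(W_{k,p})_j$ for $j\neq i$.

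Next I average over $I\sim\textup{Unif}(\{1,\dots,M\})$, conditionally on the current iterate. The $i$-th entry of $W_{k+1,p}$ equals the bound $(RW_{k,p})_i$ on the event $\{I=i\}$, which has probability $1/M$. It equals $(W_{k,p})_i$ on the complement, which has probability $(M-1)/M$. Hence
\begin{align*}
	\bb{E}\bigl[(W_{k+1,p})_i \,\big|\, \mu_k\bigr]\leq \frac{1}{M}(RW_{k,p})_i+\frac{M-1}{M}(W_{k,p})_i ,
\end{align*}
which is the claimed entrywise inequality with the matrix $\frac1M R+\frac{M-1}{M}I_M$. If $W_{k,p}$ is itself random, the same inequality holds after taking a further expectation, since the matrix has nonnegative entries.

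Invariance of the ball holds pathwise, not just in expectation. On $\{I=i\}$, using $W_{k,p}\preceq c\delta$ and $R\ge 0$, we get $\bb{W}_p(\mu_{k+1}^i,\mu_\star^i)\leq c(R\delta)_i<c\delta_i$. The blocks $j\neq i$ remain in $\ball{\mu_\star^j}{c\delta_j}$ trivially. By induction every iterate stays in the product of balls almost surely, so the one-step bound applies at every $k$.

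Write $A:=\frac1M R+\frac{M-1}{M}I_M$. The step I expect to need the most care is iterating the expectation bound. The tower property together with $A\ge 0$ entrywise gives $\bb{E}[W_{k+n,p}]\preceq A^n W_{k,p}$. Then $\|A^n\|^{1/n}\to\rho(A)$ yields exponential convergence at rate $\rho(A)$. Finally, $\rho(A)=\frac{\rho(R)+M-1}{M}<1$: by Perron--Frobenius, $\rho(R)$ is an eigenvalue of the nonnegative matrix $R$, and the eigenvalues of $A$ are $\frac1M\lambda+\frac{M-1}{M}$ over the eigenvalues $\lambda$ of $R$. Alternatively, one can note directly that $A\delta\prec\delta$.
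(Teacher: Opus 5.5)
Your proposal is correct and follows the paper's proof. You condition on the sampled index, reuse the transportation-information plus Fisher-smoothness estimate from the parallel case for the updated block, keep the untouched blocks fixed, and average over $I$. Beyond that, you fill in steps the paper leaves implicit: pathwise invariance of the balls, iteration of the bound via the tower property using nonnegativity of the matrix, and the identity $\rho(\frac{1}{M}R+\frac{M-1}{M}I_M)=\frac{\rho(R)+M-1}{M}<1$.
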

\begin{proof}
	At iteration $k+1$, an index $I \in \{1, \dots, M\}$ is sampled uniformly at random.
	If $I = i$, the coordinate $i$ is updated as $\mu_{k+1}^i = \mu_\star^i[\mu_k^{-i}]$. As in the proof of Theorem~\ref{thm:Dparallel}, we have
	\begin{align*}
		\bb{W}_p(\mu_{k+1}^i, \mu_\star^i) \leq \sum_{j \neq i} R_{ij} \bb{W}_p(\mu_{k}^j\parallel\mu_\star^j) = (R W_{k,p})_i
	\end{align*}
	If $I \neq i$, then the coordinate is untouched and $\mu_{k+1}^i = \mu_k^i$, meaning that $\bb{W}_p(\mu_{k+1}^i, \mu_\star^i) = (W_{k,p})_i$.
	Taking the expectation with respect to the random index choice $I$ then gives
	\begin{align*}
		\mathbb{E}[\bb{W}_p(\mu_{k+1}^i, \mu_\star^i)] \leq \frac{1}{M} (R W_{k,p})_i + \frac{M-1}{M} (W_{k,p})_i,
	\end{align*}
	and vectorising this inequality across $i$ yields the result. Similarly to the parallel case, we can show $\mu_{k+1}^i\in \ball{\mu_\star^i}{c\delta_i}$.
\end{proof}
The rate admits the following interpretation: for any fixed coordinate, the random algorithm leaves it unchanged with probability $(M-1)/M$, and updates it with probability $1/M$ using the same one-coordinate update as the parallel algorithm.

For the sequential algorithm~\eqref{eq:sequential}, which updates the blocks systematically, the multi-block setting introduces additional complications, necessitating a slightly more careful inductive argument to ensure that the iterates remain in the domain of attraction of the update dynamics.

\begin{theorem}[Sequential CAVI] \label{thm:Dsequential}
    Consider the same premises of Theorem~\ref{thm:Dparallel}. Suppose that $\rho(R)<1$, and write $R=R_L+R_U$, where $R_L$ and $R_U$ are lower and upper triangular parts of $R$. Consider any $c>0$ such that $c\delta_i\leq \varepsilon_i$ for all $i=1,\dots,M$. Then, for all $\mu_k^i\in \ball{\mu_\star^i}{c\delta_i}$, $i=1,\dots,M$, the sequential CAVI~\eqref{eq:sequential} algorithm satisfies
	 \begin{align*}
	 	W_{k+1,p} \preceq (I_M-R_L)^{-1}R_U W_{k,p}.
	 \end{align*}
	 In particular,  $\mu_{k+1}^i \in \ball{\mu_\star^i}{c\delta_i}$ for all $i=1,\dots,M$, and the systematic CAVI algorithm~\eqref{eq:sequential} converges exponentially fast to $\mu_\star$ in Wasserstein-$p$ distance with rate $\rho((I_M-R_L)^{-1}R_U)$.
\end{theorem}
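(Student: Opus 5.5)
The plan is to run the argument of Theorem~\ref{thm:Dparallel} block by block, in the order $i=1,\dots,M$ used by \eqref{eq:sequential}. The only new ingredient is an induction on $i$ within the sweep. It guarantees that when block $i$ is updated, the already-updated blocks $\mu_{k+1}^{<i}$ still lie in their balls $\ball{\mu_\star^j}{c\delta_j}\subseteq \ball{\mu_\star^j}{\varepsilon_j}$, so that Fisher-smoothness of $\mu_\star^i[\cdot]$ may be invoked at the mixed argument $(\mu_{k+1}^{<i},\mu_k^{>i})$.

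\textbf{Step 1: one-block estimate.} Fix $i$ and suppose that $\mu_{k+1}^j\in \ball{\mu_\star^j}{c\delta_j}$ for all $j<i$; the blocks $\mu_k^j$ with $j>i$ lie in their balls by hypothesis. Apply the transportation-information inequality for $\mu_\star^i$, then Fisher-smoothness and the fixed-point identity $\mu_\star^i=\mu_\star^i[\mu_\star^{-i}]$. This gives
\begin{align*}
\lambda_i \bb{W}_p(\mu_{k+1}^i,\mu_\star^i)\leq \cal{I}\bigl(\mu_\star^i[\mu_{k+1}^{<i},\mu_k^{>i}]\parallel \mu_\star^i[\mu_\star^{-i}]\bigr)^{1/2}\leq \sum_{j<i}L_{ij}\bb{W}_p(\mu_{k+1}^j,\mu_\star^j)+\sum_{j>i}L_{ij}\bb{W}_p(\mu_k^j,\mu_\star^j).
\end{align*}
Dividing by $\lambda_i$ yields $(W_{k+1,p})_i\leq (R_LW_{k+1,p})_i+(R_UW_{k,p})_i$.

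\textbf{Step 2: staying in the balls (the main obstacle).} We need $\bb{W}_p(\mu_{k+1}^i,\mu_\star^i)<c\delta_i$ in order to continue the induction. Use the inductive bounds $\bb{W}_p(\mu_{k+1}^j,\mu_\star^j)<c\delta_j$ for $j<i$ together with the hypotheses $\bb{W}_p(\mu_k^j,\mu_\star^j)<c\delta_j$ for $j>i$. Since $R_L,R_U$ have nonnegative entries, the bound of Step 1 gives
\[
\bb{W}_p(\mu_{k+1}^i,\mu_\star^i)\leq c\,(R_L\delta+R_U\delta)_i=c(R\delta)_i<c\delta_i,
\]
where the last inequality is the sub-eigenvector property $R\delta\prec\delta$. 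This closes the induction over $i=1,\dots,M$ (the base case $i=1$ uses only old blocks). In particular, every iterate of the sweep remains in $\times_j\ball{\mu_\star^j}{c\delta_j}$, and hence so does $\mu_{k+1}$.

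\textbf{Step 3: solving the triangular system.} Vectorising Step 1 gives $(I_M-R_L)W_{k+1,p}\preceq R_UW_{k,p}$. Because $R_L$ is strictly lower triangular, it is nilpotent, so
\[
(I_M-R_L)^{-1}=\sum_{m=0}^{M-1}R_L^m
\]
has nonnegative entries. Equivalently, forward substitution propagates the inequality through the rows. Either way we obtain $W_{k+1,p}\preceq (I_M-R_L)^{-1}R_UW_{k,p}$.

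\textbf{Step 4: the rate.} Iterating this bound, which is legitimate by Step 2, and using monotonicity of nonnegative matrices gives
\[
W_{k+n,p}\preceq \bigl((I_M-R_L)^{-1}R_U\bigr)^nW_{k,p}.
\]
Convergence at rate $\rho((I_M-R_L)^{-1}R_U)$ then follows from Gelfand's formula. It remains to note that this spectral radius is $<1$ whenever $\rho(R)<1$. This is the Stein--Rosenberg-type fact that for a nonnegative splitting $I_M-R=(I_M-R_L)-R_U$, the Gauss--Seidel iteration matrix satisfies $\rho((I_M-R_L)^{-1}R_U)<1$ if and only if $\rho(R)<1$. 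I would cite it or, more directly, check that $\delta$ itself is a sub-eigenvector: from $R_U\delta\preceq\rho\delta-R_L\delta\preceq \rho(I_M-R_L)\delta$ one gets $(I_M-R_L)^{-1}R_U\delta\preceq\rho\delta$ with $\rho<1$.
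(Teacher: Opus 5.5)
Your proposal is correct and follows essentially the same route as the paper. Both run a block-by-block induction that combines the transportation-information inequality with Fisher-smoothness at the mixed argument $(\mu_{k+1}^{<i},\mu_k^{>i})$, use the bound $c(R\delta)_i<c\delta_i$ to keep each updated block in its ball, and invert $I_M-R_L$ using nonnegativity of its inverse. The one difference is that the paper cites the Stein--Rosenberg theorem to get $\rho((I_M-R_L)^{-1}R_U)<1$, whereas you check directly that $\delta$ is a sub-eigenvector of the Gauss--Seidel matrix with factor $\rho<1$; this is a valid, self-contained substitute for the citation.
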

\begin{proof}
	We first prove by induction over the blocks $i=1,\dots,M$  that $\mu_{k+1}^i \in  \ball{\mu_\star^i}{c\delta_i}$ for all iterations. Let $i=1$.  Since $\ball{\mu_\star^j}{c\delta_j}\subseteq \ball{\mu_\star^j}{\varepsilon_{j}}$, we can use the Fisher assumption on $\times_{i=1}^M \ball{\mu_\star^i}{c\delta_i}$ to obtain
	\begin{align*}
		\lambda_1 \bb{W}_p(\mu_{k+1}^1, \mu_\star^1) \leq \cal{I}(\mu_\star^1[\mu_k^{-1}]\parallel\mu_\star^1)^{1/2} \leq \sum_{j=2}^M L_{1j}\bb{W}_p(\mu_k^{j},\mu_\star^j),
	\end{align*}
	whereby
	\begin{align*}
		 \bb{W}_p(\mu_{k+1}^1, \mu_\star^1) \leq  \sum_{j=2}^M cR_{1j}\delta_j = c(R\delta)_1 = c\rho(R)\delta_1 < c\delta_1,
	\end{align*}
	and hence implying that $\mu_{k+1}^1 \in  \ball{\mu_\star^1}{c\delta_1}$. Assume now that $(\mu_{k+1}^{j})_{j=1}^{i-1} \in  \times_{j=1}^{i-1} \ball{\mu_\star^j}{c\delta_j}$. Since $\ball{\mu_\star^j}{c\delta_j}\subseteq \ball{\mu_\star^j}{\varepsilon_{j}}$, we can use our smoothness assumption together with the induction hypothesis to obtain that
	\begin{align} \label{eq:s9bfvf}
		\lambda_i \bb{W}_p(\mu_{k+1}^i, \mu_\star^i) \leq \cal{I}(\mu_\star^i[\mu_{k+1}^{<i},\mu_{k}^{>i}]\parallel\mu_\star^i)^{1/2} \leq  \sum_{j<i} L_{ij}\bb{W}_p(\mu_{k+1}^j, \mu_\star^j) + \sum_{j>i} L_{ij}\bb{W}_p(\mu_{k}^j, \mu_\star^j).
	\end{align}
	In particular, 
	\begin{align*}
		\bb{W}_p(\mu_{k+1}^i, \mu_\star^i) \leq c \sum_{j \neq i}R_{ij}\delta_j = c(R\delta)_i = c\rho(R)\delta_i < c \delta_i 
	\end{align*}
	and the induction is complete.~\eqref{eq:s9bfvf} then holds for all $i = 1, \dots, M$, and we obtain (noting that since $R_L$ is strictly lower triangular with non-negative entries, multiplication by $(I_M-R_L)^{-1}$ is permissible) that
	\begin{align*}
		W_{k+1,p} \preceq R_L W_{k+1,p} + R_U W_{k,p} \Rightarrow W_{k+1,p} \preceq (I_M-R_L)^{-1}R_U W_{k,p}.
	\end{align*}
    The result then follows by the Stein--Rosenberg Theorem~\citep[Section 3.3]{Varga2000}, which ensures that whenever $\rho(R) < 1$, it follows that $\rho((I_M-R_L)^{-1}R_U) < 1$.
\end{proof}

If the cross-smoothness assumptions are verified globally, i.e.\ $\ball{\mu_\star^i}{\varepsilon_{i}} = \cal{P}_p(\cal{X}^i)$ and $\varepsilon_i = \infty$, then the convergence is global. Moreover, the proof also simplifies, in that no additional work is required in order to ensure that the iterates remain within the appropriate neighbourhoods.  

In practice, the spectral radius $\rho(R)$ may be conveniently bounded by use of the Gershgorin Disk Theorem, whereby one controls
\begin{align*}
    \rho\! \left( R \right) \leq \max_i \sum_{j=1}^M R_{ij}.
\end{align*}
For relatively homogeneous systems, this tends to give reasonable results; for inhomogeneous systems in which certain variables are of distinguished importance (e.g.\ star graphs), some refinements are typically required in order to obtain good estimates.

\begin{remark}[Gauss--Seidel, Jacobi, Gaussian distributions and sharpness of the rate] \label{remark:gaussian}
    Consider the coordinate ascent variational inference algorithm on the Gaussian distribution $\pi=\cal{N}(0,Q^{-1})$, $Q\in\r^{d\times d}$. For simplicity, consider $M=d$ blocks. In this case, the optimal mean-field approximation is known to be
    \begin{align*}
        \mu_\star = \bigotimes_{i=1}^M \cal{N}(0,Q_{ii}^{-1}).
    \end{align*}
    The sequential coordinate ascent variational inference iterates consists in updating the mean of Gaussians distributions as
    \begin{equation} \label{eq:cavigaussSeidel}
        \mu_{k+1}^i=\mu_\star^i[\mu^{<i}_{k+1},\mu^{>i}_k] = \cal{N}(m_{k+1}^i, Q_{ii}^{-1}), \quad m^i_{k+1}:=-Q_{ii}^{-1}\sum_{j<i}Q_{ij}m^j_{k+1}-Q_{ii}^{-1}\sum_{j>i}Q_{ij}m^j_{k}
    \end{equation}
    
    \citep[see, for instance,][Appendix A, for a derivation]{Arnese2024}, whereas the parallel algorithm computes
    \begin{equation} \label{eq:cavijacobi}
        \mu_{k+1}^i = \mu_\star^i[\mu^{-i}_{k}] = \cal{N}(m_{k+1}^i,Q_{ii}^{-1}), \quad m^i_{k+1}:=-Q_{ii}^{-1}\sum_{j\neq i}Q_{ij}m^j_{k}.
    \end{equation}
    It is immediate to compute (directly, or via Lemmas~\ref{lemma:useful} and~\ref{lemma:crosssmooth} below), that $L_{ij}=|Q_{ij}|$ and $\lambda_i=|Q_{ii}|$.
     The sequential CAVI iterates~\eqref{eq:cavigaussSeidel} correspond to a Gauss--Seidel algorithm on the means, and the parallel CAVI iterates~\eqref{eq:cavijacobi} correspond to the Jacobi algorithm. Let $R_{ij}=|Q_{ij}|/|Q_{ii}|$ and let $R_U$ and $R_L$ be its upper and lower triangular components. The Gauss--Seidel and Jacobi schemes are known to converge at rates $\rho((I-R_L)^{-1}R_U)$ and $\rho(R)$, respectively~\citep[Section 3.3]{Varga2000}. This shows that our results are sharp on Gaussian targets.
\end{remark}
\begin{remark}[Comparison of the rates and of relative speed of the algorithms] \label{remark:rosenberg}
    The Stein--Rosenberg Theorem shows that, either $\rho(R), \rho((I-R_L)^{-1}R_U) < 1$ or that $\rho(R) = \rho((I-R_L)^{-1} R_U) = 1$ both, i.e.\ that sequential, parallel and random scan CAVI algorithms are in solidarity with regard to their exponential convergence or lack thereof. Moreover, one also has $\rho((I-R_L)^{-1}R_U)<\rho(R)$, which means that the sequential CAVI algorithm is faster, in terms of progress made per-iteration, than parallel CAVI, provided that they both converge exponentially.
    
    In practice, real-time performance also depends on computational cost and hardware. Parallel CAVI's updates are naturally parallelizable, whereas systematic scan CAVI is sequential and often more storage-efficient. However, in many sparse graphical models (hidden Markov models, Markov random fields, hierarchical models, etc.), the variables can be coloured using a small number $C$ of colours so that all variables of the same colour can be updated simultaneously without conflict. Such strategies are well-known in the context of Gibbs sampling, where naive parallel implementation compromises the convergence guarantees usually associated to the method. When colouring is possible, it yields a `best-of-both-worlds' implementation of the sequential algorithm: near-parallel execution in $C$ rounds, while retaining the convergence advantages of the sequential CAVI scheme.
\end{remark}

\subsection{On the assumptions} \label{sec:hps}

We discuss some sufficient conditions for the transportation-information and Fisher-smoothness assumptions, their interpretation in the geometry of the optimal transport, their relationship with the Dobrushin criterion, and a comparison with assumptions previously considered in the literature.

\paragraph{Curvature.} The transportation-information inequalities~\eqref{eq:transportinfo} are curvature conditions on the fixed points $\mu_\star^i\in\cal{P}(\cal{X}^i)$ of the optimality operators $\mu_\star^i[\cdot]:\times_{j \neq i}\cal{P}(\cal{X}^j)\mapsto \cal{P}(\cal{X}^i)$. They are weaker than Logarithmic Sobolev inequalities (which are themselves weaker than strong log-concavity), and they tensorise, making them suitable for high-dimensional applications.
\begin{lemma}[\citet{Guillin2009}] \label{lemma:useful}
  Let $\cal{X}$ be a smooth Riemannian manifold with volume measure $\dif x$ and Ricci curvature $\Ric$, and let $\gamma=e^{-V(x)}\dif x\in\cal{P}(\cal{X})$, $V\in\cal{C}^2(\cal{X})$. The following statements are true.
  \begin{itemize}
    \item If $\gamma$ satisfies a Logarithmic Sobolev inequality with constant $\lambda$:
      \begin{align*}
        2\lambda \KL(\varrho\parallel\gamma) \leq \cal{I}(\varrho\parallel\gamma) \quad \forall \varrho\in\cal{P}(\cal{X}),
      \end{align*}
      then $\gamma$ also satisfies the transportation-information inequality for all $p\leq2$.
    \item In particular, if $\nabla^2 V + \Ric \geq \lambda I$, then the transportation-information inequality is satisfied with constant $\lambda$.
    \item Let $\{\cal{X}^i\}_{i=1}^n$ be a collection of complete connected Riemannian manifolds with $\cal{X}=\times_{i=1}^n \cal{X}^i$. If $\gamma_1,\dots,\gamma_n\in\cal{P}(\cal{X}^i)$ each satisfy a transportation-information inequality with constants $\lambda_1,\dots,\lambda_n$, respectively, then $\gamma=\otimes_{i=1}^n\gamma_i\in\cal{P}(\cal{X})$ also satisfies a transportation-information inequality, with constant $\lambda=\min_{i\in[n]}\lambda_i$.
  \end{itemize}
\end{lemma}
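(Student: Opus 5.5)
The lemma has three parts, attributed to \citet{Guillin2009}, and I would establish them in turn, leaning on classical functional inequalities.

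\textbf{First item.} I would chain two known inequalities. By Otto--Villani, a Logarithmic Sobolev inequality with constant $\lambda$ implies Talagrand's inequality $\bb{W}_2^2(\varrho,\gamma)\leq \frac{2}{\lambda}\KL(\varrho\parallel\gamma)$. Combining this with the LSI itself gives
\begin{align*}
\lambda^2\bb{W}_2^2(\varrho,\gamma)\leq 2\lambda\KL(\varrho\parallel\gamma)\leq \cal{I}(\varrho\parallel\gamma),
\end{align*}
which is the transportation-information inequality with $p=2$. For $p\leq 2$, Jensen's inequality gives $\bb{W}_p\leq\bb{W}_2$, so the same inequality holds.

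\textbf{Second item.} Under the curvature-dimension condition $\nabla^2V+\Ric\geq\lambda I$, the Bakry--\'Emery criterion yields the LSI with constant $\lambda$, and the first item then gives the claim. Alternatively, and more directly, I would use the HWI inequality
\begin{align*}
\KL(\varrho\parallel\gamma)\leq \bb{W}_2\sqrt{\cal{I}}-\tfrac{\lambda}{2}\bb{W}_2^2 .
\end{align*}
Since $\KL\geq0$, this gives $\tfrac{\lambda}{2}\bb{W}_2\leq\sqrt{\cal{I}}$. That constant is worse by a factor of $2$, so the LSI route is the one to use for the sharp constant.

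\textbf{Third item (tensorisation).} This is the main step. I would use the \citet{Guillin2009} approach: for $\varrho$ on the product, disintegrate $\varrho$ along coordinates as $\varrho=\varrho_1\otimes\varrho_2(\cdot\mid x_1)\otimes\cdots$. I would then build a Knothe--Rosenblatt-type coupling, which gives
\begin{align*}
\bb{W}_2^2(\varrho,\gamma)\leq\sum_i\bb{E}_\varrho\,\bb{W}_2^2\bigl(\varrho_i(\cdot\mid x_{<i}),\gamma_i\bigr).
\end{align*}
Applying each marginal inequality bounds the $i$-th term by $\lambda_i^{-2}\,\bb{E}\,\cal{I}(\varrho_i(\cdot\mid x_{<i})\parallel\gamma_i)$. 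The remaining task is to show that the sum of these conditional Fisher informations is bounded by $\cal{I}(\varrho\parallel\gamma)$.

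This is where I expect the difficulty. The bound holds for the full-conditional decomposition $\sum_i\bb{E}\,\cal{I}(\varrho_i(\cdot\mid x_{-i})\parallel\gamma_i)=\cal{I}(\varrho\parallel\gamma)$, but not obviously for sequential conditionals. Note also that the Knothe coupling bound carries cross terms from the dependence of later conditionals on earlier coordinates, so that route is not clean.

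The fallback I would take is the Guillin et al.\ characterisation of transportation-information inequalities through Feynman--Kac and large-deviation functionals. There, $\lambda^2\bb{W}_p^2\leq\cal{I}$ is shown to be equivalent to a bound on the Dirichlet form restricted to Lipschitz potentials. For a product measure the Dirichlet form splits as a sum, and the Feynman--Kac semigroup factorises over independent coordinates. This tensorises the dual condition with constant $\min_i\lambda_i$, and converting back gives the primal statement.
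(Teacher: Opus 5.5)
Your first two items are correct and standard. The paper gives no proof of this lemma and only cites \citet{Guillin2009}. Otto--Villani together with the LSI itself gives $\lambda^2\bb{W}_2^2\le 2\lambda\KL\le\cal{I}$, Jensen extends this to $p\le 2$, and Bakry--\'Emery reduces the curvature case to the first item.

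The third item, however, is not proved by what you wrote. Your fallback relies on the Feynman--Kac semigroup factorising over coordinates. That holds only for additive potentials $V(x)=\sum_i V_i(x_i)$. The dual characterisation of a transportation-information inequality quantifies over Kantorovich potentials on the product space, and these are not sums of one-coordinate functions. So the claim that the dual condition tensorises with constant $\min_i\lambda_i$ is unsupported.

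The route you abandoned is the right one, and both of your objections to it are mistaken.

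\emph{There are no cross terms in the coupling.} Since $\gamma=\otimes_i\gamma_i$, its conditionals do not depend on the past. Take $(X_1,Y_1)$ an optimal coupling of $(\varrho_1,\gamma_1)$, and then, given $X_{<i}$, take $(X_i,Y_i)$ an optimal coupling of $(\varrho_i(\cdot\mid X_{<i}),\gamma_i)$, measurably selected. This produces $X\sim\varrho$, $Y\sim\gamma$, and exactly $\bb{W}_2^2(\varrho,\gamma)\le\sum_i\bb{E}_\varrho\bb{W}_2^2(\varrho_i(\cdot\mid X_{<i}),\gamma_i)$ for the product metric $\mathsf{d}^2=\sum_i\mathsf{d}_i^2$. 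Cross terms only arise when the reference measure itself has non-trivial conditionals.

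\emph{Sequential conditionals are fine.} Let $h=\dif\varrho/\dif\gamma$ and $h_{\le i}=\int h\,\dif\gamma_{>i}=h_{<i}(x_{<i})\,\frac{\dif\varrho_i(\cdot\mid x_{<i})}{\dif\gamma_i}(x_i)$. Cauchy--Schwarz in $\gamma_{>i}$ gives
\begin{align*}
\bb{E}_\varrho\,\cal{I}\bigl(\varrho_i(\cdot\mid X_{<i})\parallel\gamma_i\bigr)=\int\frac{|\nabla_i h_{\le i}|^2}{h_{\le i}}\,\dif\gamma_{\le i}\le\int\frac{|\nabla_i h|^2}{h}\,\dif\gamma=\int|\nabla_i\log h|^2\,\dif\varrho .
\end{align*}
On a Riemannian product $|\nabla\log h|^2=\sum_i|\nabla_i\log h|^2$, so summing gives $\sum_i\bb{E}_\varrho\,\cal{I}(\varrho_i(\cdot\mid X_{<i})\parallel\gamma_i)\le\cal{I}(\varrho\parallel\gamma)$.

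Chaining the coupling bound, the factor inequalities and this Fisher bound gives $(\min_i\lambda_i)^2\,\bb{W}_2^2(\varrho,\gamma)\le\cal{I}(\varrho\parallel\gamma)$. This is a $p=2$ argument, and it gives $p\le 2$ by Jensen when the factors satisfy the $p=2$ inequality. If the factors only satisfy the $p=1$ version, the same coupling loses a factor $\sqrt{n}$, so you should state which $p$ is being tensorised.
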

\begin{remark} \label{remark:inducednorms}
  When $\cal{X}=\r^d$, in the applications we will often work with the geometry induced by a positive definite matrix $B\succ 0$, namely $|v|_B^2=v^\intercal B v$. If, in this geometry, $V$ is $1$-strongly convex $\nabla^2 V\succeq B$, then $\lambda=1$ and the corresponding Fisher information uses the \textit{dual} norm
  \begin{align*}
    \cal{I}(\varrho\parallel\gamma) = \int \Big|B^{-1/2} \nabla \log \frac{\dif \varrho}{\dif \gamma}(x) \Big|^2_2 \varrho(\dif x) \quad \text{if} \quad {\varrho\ll\gamma} \quad \text{and} \quad \cal{I}(\varrho\parallel\gamma)=\infty \quad \text{otherwise}.
  \end{align*}
\end{remark}
\paragraph{Smoothness.} The Fisher-smoothness condition is a functional-level smoothness requirement on the optimality operators $\mu_\star^i[\cdot]:\times_{j \neq i}\cal{P}(\cal{X}^j)\mapsto \cal{P}(\cal{X}^i)$. 
Say $M=2$ for a moment. If we decompose the potential of the target $\pi=e^{-f}$ as
\begin{align*}
  f(x_1,x_2) =  f_1(x_1)+f_2(x_2) + \omega(x_1,x_2)
\end{align*}
for some functions $f_1:\cal{X}^1\mapsto \r$, $f_2:\cal{X}^2\mapsto \r$ and $\omega:\cal{X}^1\times\cal{X}^2\mapsto \r$, then the Fisher-smoothness of $\mu_\star^i[\cdot]$, $i=1,2$ is a requirement only on $\omega$ (which represent the interaction terms). More generally, we can always form the decomposition
\begin{align*}
	f(x_1,\dots,x_M) = \sum_{S\subseteq \{1,\dots,M\}}f_S(x_S)
\end{align*}
and only the mappings $f_S$ corresponding to subsets $S$ of $\{1, \dots, M\}$ with more than one element play a role in the Fisher-smoothness constants. In particular, only the subsets $S$ containing both $i,j$ play a role in the Lipschitz constant $L_{ij}$.

In the case $p=1$, Fisher-smoothness is related to cross-smoothness of $f$. Recall Kantorovich's duality formula: for a metric space $(\cal{X},\mathsf{d})$, the Lipschitz semi-norm of a mapping $h:\cal{X}\mapsto \r$ is
\begin{align*}
  \lip(h):=\sup\left\{\frac{|h(x_1)-h(x_2)|}{\mathsf{d}(x_1,x_2)}:x_1\neq x_2\in\cal{X} \right\},
\end{align*}
and one has the following variational characterisation of the Wasserstein-$1$ distance:
\begin{align*}
  \mathbb{W}_1(\varrho,\gamma)=\sup\left\{ \int h(x)(\varrho-\gamma)(\dif x): \lip(h)\leq 1 \right\}.
\end{align*}

\begin{lemma} \label{lemma:crosssmooth}
  Let $\pi=e^{-f}\in \cal{P}_1(\times_{i=1}^M \cal{X}^i)$. The operator $\mu_\star^i[\cdot]:\times_{j \neq i}\cal{P}(\cal{X}^j)\mapsto \cal{P}(\cal{X}^i)$  is Fisher-smooth on $\times_{j \neq i} \cal{P}_1(\cal{X}^j)$ with constants $L_{ij}:=\lip_{x_j} (\nabla_i f)$ whenever these quantities are finite.
\end{lemma}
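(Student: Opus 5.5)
The plan is to compute the log-density ratio between the two optimality outputs explicitly and then bound its gradient pointwise. Fix $\mu^{-i}=\otimes_{j\neq i}\mu^j$ and $\nu^{-i}=\otimes_{j\neq i}\nu^j$ in $\times_{j\neq i}\cal{P}_1(\cal{X}^j)$. By the definition~\eqref{eq:optimality}, the normalising constants do not depend on $x_i$, so
\begin{align*}
  \nabla_{x_i}\log\frac{\dif \mu_\star^i[\mu^{-i}]}{\dif \mu_\star^i[\nu^{-i}]}(x_i)
  = \bb{E}_{\nu^{-i}}\bigl[\nabla_i f(X^{<i},x_i,X^{>i})\bigr]-\bb{E}_{\mu^{-i}}\bigl[\nabla_i f(X^{<i},x_i,X^{>i})\bigr].
\end{align*}
Here I interchange differentiation and integration, which is justified because $\lip_{x_j}(\nabla_i f)<\infty$ together with finite first moments gives integrable domination. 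The Fisher information $\cal{I}(\mu_\star^i[\mu^{-i}]\parallel\mu_\star^i[\nu^{-i}])$ is the $\mu_\star^i[\mu^{-i}]$-average of the squared norm of this vector. It therefore suffices to show, uniformly in $x_i$, that
\begin{align*}
  \Bigl|\bb{E}_{\nu^{-i}}[\nabla_i f(\cdot,x_i,\cdot)]-\bb{E}_{\mu^{-i}}[\nabla_i f(\cdot,x_i,\cdot)]\Bigr|\leq \sum_{j\neq i}\lip_{x_j}(\nabla_i f)\,\bb{W}_1(\mu^j,\nu^j),
\end{align*}
since integrating the square against a probability measure then yields~\eqref{eq:Icrosssmooth} with $p=1$.

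To get this bound, I telescope between the two product measures one block at a time. Order the indices $j\neq i$ as $j_1<\dots<j_{M-1}$ and define hybrid products $\eta_m$ that use $\mu^{j}$ in the first $m$ slots and $\nu^{j}$ in the rest. Then $\eta_0=\nu^{-i}$, $\eta_{M-1}=\mu^{-i}$, and the difference is $\sum_m(\bb{E}_{\eta_{m-1}}-\bb{E}_{\eta_m})\nabla_i f$. Consecutive hybrids differ only in the $j_m$-th factor. Fubini then reduces each term to an integral over the other coordinates of $\int g(y)(\nu^{j_m}-\mu^{j_m})(\dif y)$, where $g(y)=\nabla_i f(\dots,y,\dots)$ has all other arguments frozen.

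The main obstacle is that $g$ is vector-valued, since it lives in the tangent space at $x_i$, whereas Kantorovich duality applies to scalar functions. I will handle this by duality in the tangent space at the fixed $x_i$. For any unit vector $e$, the scalar function $y\mapsto\langle e,g(y)\rangle$ has Lipschitz constant at most $\lip_{x_j}(\nabla_i f)$. Kantorovich's formula bounds its integral against $\nu^{j_m}-\mu^{j_m}$ by $\lip_{x_{j_m}}(\nabla_i f)\,\bb{W}_1(\mu^{j_m},\nu^{j_m})$. This bound is uniform in the frozen coordinates, so averaging over them preserves it. Taking the supremum over $e$ gives the norm bound for each term, and the triangle inequality over $m$ gives the display above.

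Finally, squaring, integrating against $\mu_\star^i[\mu^{-i}]$, and using $L_{ij}=\lip_{x_j}(\nabla_i f)$ completes the argument. Absolute continuity holds automatically because both outputs have positive densities whenever the expectations are finite.
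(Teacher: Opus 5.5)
Your proposal is correct and follows the paper's proof. You differentiate the log-density ratio under the integral, justified by the Lipschitz growth bound and finite first moments, then bound the difference of expected gradients uniformly in $x_i$ via Kantorovich duality and integrate the square. Your hybrid-product telescoping and the unit-vector reduction to scalar test functions simply make explicit what the paper compresses into ``by Kantorovich's duality formula, since $\mu^{-i}$ and $\nu^{-i}$ are product measures.''
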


\begin{proof}
  From the optimality conditions~\eqref{eq:optimality}, we have $\mu_\star[\mu^{-i}](x_i) \propto \exp\left(-\bb{E}_{\mu^{-i}}[f(X_{<i},x_i,X_{>i})]\right)$.
  Taking the logarithm and computing the gradient with respect to $x_i$ yields $\nabla_i \log \mu_\star[\mu^{-i}](x_i) = -\nabla_i \bb{E}_{\mu^{-i}}[f(X_{<i},x_i,X_{>i})]$.
  By the definition of $L_{ij}$, the mappings $x_j \mapsto \nabla_i f$ are $L_{ij}$-Lipschitz (uniformly in the other coordinates).
  This implies the growth bound $\|\nabla_i f(x_{<i},x_i,x_{>i})\| \leq \|\nabla_i f(x_{<i},x_i,x_{>i}')\| + \sum_{j\neq i} L_{ij}\mathsf{d}(x_j,x_j')$ for any fixed $x_{-i}'\in\cal{X}^{-i}$.
  Since $\mu^{-i} \in \cal{P}_1(\cal{X}^{-i})$, this dominating function is $\mu^{-i}$-integrable. This justifies passing the derivative inside the integral: $\nabla_i \log \mu_\star^i[\mu^{-i}](x_i) = -\bb{E}_{\mu^{-i}}[\nabla_i f(X_{<i},x_i,X_{>i})]$.
  Substituting this into the Fisher Information~\eqref{eq:fisher}, we obtain that
  \begin{align*}
    \cal{I}(\mu_\star^i[\mu^{-i}] \| \mu_\star^i[\nu^{-i}]) &=  \bb{E} \left[ \left\| \bb{E}_{\mu^{-i}}[\nabla_i f(X_{<i},X_i,X_{>i})] - \bb{E}_{\nu^{-i}}[\nabla_i f(X_{<i},X_i,X_{>i})] \right\|^2 \right] \\
    &= \bb{E} \left[ \Big\| \int \nabla_i f(x_{<i},X_i,x_{-i}) (\mu^{-i} - \nu^{-i})(\dif x_{-i}) \Big\|^2 \right] \\
    &\leq \Big(\sum_{j \neq i} L_{ij}\bb{W}_1(\mu^j,\nu^j) \Big)^2.
  \end{align*}
  The last line follows by Kantorovich's duality formula, since $\mu^{-i}$ and $\nu^{-i}$ are product measures.
\end{proof}

In other words, when $p=1$, Fisher-smoothness is related to Lipschitz assumptions on the gradient of $f$ in the opposite coordinates: it requires $\nabla_i f$ to be uniformly Lipschitz with respect to all of the remaining coordinates.

The assumptions of smoothness and curvature at the fixed points only of Theorem~\ref{thm:cavi} should be contrasted to the requirements of gradient descent-type algorithms, where one requires global smoothness and global curvature~\citep{Dalalyan2017,Durmus2017,Vempala2019,Caprio2025a,Lacker2023}.

In the coordinate ascent variational inference algorithm, we evolve an approximation of $\pi$ by iterating the optimality operators $\mu_\star^i[\cdot]$ (it is a good idea to have in mind Figure~\ref{fig:cavi2blocks}). In gradient descent type algorithms such as Langevin Monte Carlo, we evolve an approximation of $\pi$ by repeated application of one kernel, say $\cal{K}$. Somewhat analogously, in CAVI we require the operators $\mu_\star^i[\cdot]$ to be smooth, and we require curvature at the limiting points. In an algorithm such as Langevin Monte Carlo, we require the operator $\cal{K}$ to be smooth, and to exhibit curvature at the limiting point~\citep{Vempala2019}. The smoothness of the optimality operators $\mu_\star^i[\cdot]$ is related to cross-smoothness at the level of the potential $f$, whereas the smoothness of $\cal{K}$ is related to block smoothness of $f$.

\begin{remark}[Log-concavity] \label{remark:logconc}
  The transportation-information inequalities for the fixed points are weaker than Logarithmic Sobolev inequality assumptions (Lemma~\ref{lemma:useful}), which are weaker than strong log-concavity for the fixed points (the Bakry--{\'E}mery Theorem), and which are in turn weaker than the log-concavity of $\pi=e^{-f}$ when the underlying space is Euclidean. In fact, if $\nabla^2 f\succeq \lambda I_{M}$, then
  \begin{align*}
    -\nabla^2 \log \mu_\star^i(x_i)= -\nabla^2 \log \mu_\star^i[\mu^{-i}_\star](x_i) = \bb{E}_{\mu_\star^{-i}}[ \nabla^2_i f(X^{<i},x_i,X^{>i}) ] \succeq \lambda I_{M_i}
  \end{align*}
  implying $\lambda_{i}\geq \lambda$. Strong log-concavity requirements on $\pi$ are considered in many other works on CAVI~\citep{Arnese2024, Lavenant2024, Sheng2025}, and on sampling algorithms~\citep{Dalalyan2017, Durmus2017, Chewi2026}. Other than being a strict and significant relaxation of the log-concavity of $\pi$, Theorems~\ref{thm:cavi} and~\ref{thm:secondcoordinate} also allow for the possibility of multiple fixed points and analysis on more general smooth and non-smooth Riemannian manifolds.
\end{remark}

\begin{remark}[Block-smoothness and cross-smoothness]
  If $f$ is twice differentiable, then when $p=1$, the Lipschitz constants $L_{ij}$ are finite whenever the \textit{off-diagonal} elements of $f$'s Hessian are bounded, and they coincide with the respective upper bounds (Lemma~\ref{lemma:crosssmooth}). This should be contrasted with the classical block-smoothness assumptions, considered for CAVI in \citet{Arnese2024,Lavenant2024}, which are instead Lipschitz conditions on $\nabla_i f$ with respect to the same $i$th coordinate. These require, instead, the  \textit{on-diagonal} elements of $f$'s Hessian to be bounded. Neither of these implies the other.
\end{remark}
\begin{remark}[Accuracy-convergence duality] \label{remark:accconv} The  conditions under which Theorem~\ref{thm:cavi} establishes the exponential convergence of CAVI also ensure that the product variational approximation to $\pi$ is of a high quality, and vice-versa. Consider the case $\cal{X}^i=\r$ for simplicity. \citet{Lacker2024} shows that
  \begin{align*}
    0\leq  \inf_{\mu^1 \otimes \dots \otimes \mu^M}\KL(\mu^1 \otimes \dots \otimes \mu^M \parallel \pi) \leq \frac{1}{\lambda^2}\sum_{i\neq j}\bb{E}_{\mu_\star}[|\partial_{ij}f|^2]
  \end{align*}
  whenever $\pi=e^{-f}$ is $\lambda$-strongly log-concave. As we have argued in the above remarks and Lemma~\ref{lemma:crosssmooth}, $L_{ij}$ is an uniform upper bound to $|\partial^2_{ij}f|$, and when $\pi=e^{-f}$ is $\lambda$-log-concave, we always have $\lambda_{i}\geq \lambda$. Theorems~\ref{thm:Dparallel},~\ref{thm:Drandom}, or~\ref{thm:Dsequential}'s rates in the squared Wasserstein-$1$ distance can be bounded by the spectral norm-Frobenius norm inequality as
  \begin{align*}
  	\rho((I_M-R_L)^{-1}R_U)^2\leq \rho(R)^2 \leq \frac{1}{\lambda^2}\sum_{(i,j):i\neq j}L_{ij}^2
  \end{align*}
  (for the first bound, see Remark~\ref{remark:rosenberg}).
  A smaller left hand side thus implies both a fast CAVI (either sequential, parallel, or random) and a better variational approximation.
  This phenomenon has been termed the \textit{accuracy-convergence duality}~\citep{Goplerud2025} between the convergence speed of CAVI and the quality of the accuracy of the variational approximation. \citet{Goplerud2025} introduces and identifies this fact for Gaussian distributions and sequential CAVI. Theorem~\ref{thm:cavi} establishes this more generally. We will see this in action also in the applications we consider next.
\end{remark}

\begin{remark}[Wasserstein geometry]
    The assumptions under which coordinate ascent variational inference is shown to converge fast have a natural interpretation in the geometry of the Wasserstein space~\citep{Villani2009}. In this geometry, we understand the Fisher information function $\varrho\mapsto \cal{I}(\varrho \parallel \gamma)$ as the square norm of the gradient of the relative entropy $\varrho \mapsto \KL( \varrho \parallel \gamma)$. Therefore, the transport information inequalities at the fixed points are gradient quadratic growth conditions on the optimal value functions $\mu^i \mapsto \KL(\mu^i\parallel \mu_\star^i)$, $i=1,\dots,M$. On the other hand, the Fisher-smoothness conditions are contractivity assumptions on the relative entropy's gradient of the CAVI optimality maps.
    Both these observations are very natural in light of the novel proof for coordinate descent we give in Appendix~\ref{app:cd}.
\end{remark}

\subsection{Contraction on discrete spaces} \label{sec:discrete}

The preceding proofs might leave the reader forgivably thinking that this functional-analytic line of attack is rather particular to the setting of probability measures on continuous spaces. The present section will demonstrate that at a sufficient level of abstraction, the same strategy can be used to obtain results in the (a priori very different) setting of fully discrete models. In this setting, we are naturally led to consider the discrete distance $\mathsf{d}$ on $\cal{X}$ in the definition of the Wasserstein distance~\eqref{eq:wassy}. In such case, the Wasserstein distance coincides with the total variation distance, which, for two probability measures $\gamma,\varrho\in\cal{P}(\cal{X})$ is also expressible as
\begin{align*}
	\mathsf{TV} \left( \varrho,\gamma \right):= \frac{1}{2}\int \left|\frac{\dif \varrho}{\dif \gamma}(x)-1 \right| \dif \gamma(x) = \int \left( \frac{\dif \gamma}{\dif \varrho}(x)-1\right)_+ \dif \varrho(x).
\end{align*}
Because of this relationship, and because context permits, we still write $\ball{\varrho}{\varepsilon}$ for balls under the total variation metric in this section and application sections.

Let now $\mathrm{osc}$ denotes the oscillation semi-norm of a function $h:\cal{X}\mapsto \r$:
\begin{align*}
	\mathrm{osc} (h ) := \sup \{ h(x) -h (x^\prime) : x, x^\prime \in \mathcal{X} \}.
\end{align*}
The transportation-information inequality for discrete measures is meant in the following sense.
\begin{definition*}[Discrete Transportation-Information Inequality]\label{def:disc-TI}
	Let $\cal{X}$ be a discrete space. A probability measure $\gamma \in \mathcal{P} \left( \mathcal{X} \right)$ satisfies a discrete transportation-information inequality with constant $\lambda > 0$ if for all $\mu \ll \gamma$, there holds the inequality
	\begin{align*}
		\lambda \mathsf{TV} \left( \mu, \gamma \right) \leq \mathrm{osc} \left( \log \frac{\mathrm{d} \mu}{\mathrm{d} \gamma} \right).
	\end{align*}
\end{definition*}

As opposed to the continuous case,  \emph{any} probability measure satisfies a discrete transportation-information inequality with constant at least $\lambda = 4$. This should be compared with the analogous result for transportation-entropy inequalities with respect to the discrete distance, i.e.\ Pinsker's inequality; see Appendix~\ref{app:dfi} for some informal discussion on these points.

\begin{lemma} \label{lemma:usefuldiscrete}
	Let $\cal{X}$ be a countable space. Then, any probability measure $\gamma\in\cal{P}(\cal{X})$ satisfies a discrete transportation-information inequality, with constant at least $\lambda=4$. 
\end{lemma}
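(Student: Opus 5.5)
The plan is to reduce the inequality to an extremal problem about a single density ratio, and then solve it explicitly. Fix $\mu\ll\gamma$ and write $r=\frac{\mathrm{d}\mu}{\mathrm{d}\gamma}$. We take $r$ to be defined on the support of $\gamma$, where it is nonnegative and satisfies $\int r\,\mathrm{d}\gamma=1$. Set $a=\inf r$, $b=\sup r$ and $u=\mathrm{osc}(\log r)=\log(b/a)$.

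\textbf{Trivial cases.} If $a=0$ or $b=\infty$, then the right-hand side is $+\infty$ and there is nothing to prove. If $a=b$, then $r\equiv 1$, so $\mu=\gamma$ and both sides vanish. So we may assume $0<a<1<b<\infty$; the strict inequalities follow from the normalisation $\int r\,\mathrm{d}\gamma=1$ when $r$ is not constant.

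\textbf{Reduction to two-valued densities.} We bound $\mathsf{TV}(\mu,\gamma)=\int (r-1)_+\,\mathrm{d}\gamma$ by a quantity depending only on $a$ and $b$. The map $s\mapsto (s-1)_+$ is convex on $[a,b]$, so it lies below its chord:
\begin{align*}
(s-1)_+\leq \frac{(b-1)(s-a)}{b-a}\qquad\text{for all } s\in[a,b].
\end{align*}
We apply this pointwise with $s=r$ and integrate against $\gamma$, using $\int r\,\mathrm{d}\gamma=1$. This gives
\begin{align*}
\mathsf{TV}(\mu,\gamma)\leq \frac{(1-a)(b-1)}{b-a},
\end{align*}
which is exactly the value attained when $r$ takes only the two values $a$ and $b$.

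\textbf{Optimising with the ratio fixed.} This is the one step where a computation matters. Fix $u$ and write $b=ae^u$. Then
\begin{align*}
\frac{(1-a)(ae^u-1)}{a(e^u-1)}=\frac{e^u+1-ae^u-a^{-1}}{e^u-1}.
\end{align*}
The numerator is concave in $a$, and setting its derivative to zero gives the maximiser $a=e^{-u/2}$, hence $b=e^{u/2}$. Substituting this symmetric choice, the bound becomes
\begin{align*}
\frac{2\cosh(u/2)-2}{2\sinh(u/2)}=\tanh(u/4).
\end{align*}
The identity follows from the half-angle formulas.

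\textbf{Conclusion.} We have shown $\mathsf{TV}(\mu,\gamma)\leq\tanh(u/4)$. Since $\tanh x\leq x$ for $x\geq 0$, this gives $4\,\mathsf{TV}(\mu,\gamma)\leq u=\mathrm{osc}\bigl(\log\frac{\mathrm{d}\mu}{\mathrm{d}\gamma}\bigr)$, which is the discrete transportation-information inequality with $\lambda=4$. Countability of $\mathcal{X}$ is used only so that $r$ is a genuine function and the infimum and supremum are taken pointwise over the support of $\gamma$. The hyperbolic-tangent form also shows that $\lambda=4$ is the best constant valid uniformly as $u\to 0$.
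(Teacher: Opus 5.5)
Your proof is correct and follows essentially the same route as the paper: the chord bound for the convex map $s\mapsto(s-1)_+$ between the infimum and supremum of the density ratio gives $\frac{(1-a)(b-1)}{b-a}$, which is then maximised over the position of $[a,b]$ at fixed $\log(b/a)$ to reach $\tanh(\tfrac14\log(b/a))$, and finally $\tanh x\leq x$. The differences are cosmetic. You work with $\mathrm{d}\mu/\mathrm{d}\gamma$ integrated against $\gamma$ rather than $\mathrm{d}\gamma/\mathrm{d}\varrho$ against $\varrho$, and you maximise by concavity in $a$ instead of the paper's $\cosh/\sinh$ parametrisation with $|c|\leq s$. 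You also treat the degenerate cases (infinite oscillation, $r\equiv 1$) explicitly, which the paper leaves implicit.
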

\begin{proof}
	Set $h:=\smash{\frac{\dif \gamma}{\dif \varrho}}$ and define $m:=\inf h$, $M:=\sup h$. Since $\int h \dif \varrho=1$, we have $0<m\leq 1 \leq M$. Since $\mathsf{TV}(\varrho,\gamma)=\bb{E}_\varrho(h-1)_+$, for all $x\in[m,M]$, 
	\begin{align*}
		(x-1)_+ \leq \frac{M-1}{M-m}(x-m).
	\end{align*}
	Taking expectations with respect  to $\varrho$, we obtain
	\begin{align*}
		\mathsf{TV}(\varrho,\gamma) \leq \frac{M-1}{M-m}(1-m),
	\end{align*}
	Let $s:=0.5(\log M-\log m).$ and $c=0.5(\log M+\log m)$. Since $|c|\leq s$, argue that
	\begin{align*}
		\frac{M-1}{M-m}(1-m) = \frac{\cosh s - \cosh c }{\sinh s} \leq \frac{\cosh s- 1}{\sinh s} = \tanh (s/2) = \tanh\left(\frac{1}{4}\log\frac{M}{m}\right);
	\end{align*}
	the result then follows by definition of oscillation semi-norm and the fact that $\tanh(x)\leq x$.
\end{proof}
In particular, the constant $\lambda=4$ is achieved by the law of an uniform two-points Bernoulli distribution. Discrete Fisher-smoothness then follows in analogy with the continuous case, and it is similarly ensured under a boundedness condition on the cross-terms of the `Hessian' of the potentials.
\begin{definition*}[Discrete Fisher-Smoothness]\label{def:disc-smooth}
	Let $\{\cal{X}^i\}$, $i=1,\dots,M$ be a collection of discrete spaces.
	An operator $\gamma_\star^i[\cdot]\colon \times_{j=1}^J \cal{P}(\cal{X}^j) \to \cal{P}(\cal{X}^i)$ is said to be discretely Fisher-smooth with constants $\{L_{ij}\}_{j=1}^J$, at $\ball{\nu}{\varepsilon}=\times_{j=1}^J\ball{\nu^j}{\varepsilon_{j}} \subseteq \times_{j=1}^J \cal{P}(\cal{X}^j)$ if, for all  $\mu=\otimes_{j=1}^J \mu^j\in \ball{\nu^j}{\varepsilon_{j}}$, there holds the Lipschitz-type estimate
	\begin{align*}
		\mathrm{osc} \left( \log \frac{\mathrm{d} \gamma_\star^i \left[ \mu \right]}{\mathrm{d} \gamma_\star^i \left[ \nu \right]} \right) \leq \sum_{j\neq i} L_{ij} \mathsf{TV} ( \mu^j, \nu^j ).
	\end{align*}
\end{definition*}
\begin{definition*}[Discrete Lipschitz Conditions]\label{def:disc-lips}
	Given a mapping $f: \times_{i = 1}^M \mathcal{X}^i\to\mathbb{R}$,
	\begin{enumerate}
\item For $1 \leq j \leq M$, define the $j$th coordinate-wise Lipschitz constant of $f$ by
        \begin{align*}
            \delta_{j}^{\left(1\right)} \left(f\right) &= \sup_{x^{-j}}\sup_{x^j_1, x^j_2} \Delta^{(1)} \\
            \Delta^{(1)} &= f\left(x^{-j}, x^j_1 \right)-f\left(x^{-j},   x^j_2\right).
        \end{align*}
        \item For $1 \leq i,j \leq M$, define the $(i,j)$th cross-smoothness constant of $f$ by
        \begin{align*}
            \delta_{i,j}^{\left(2\right)} \left(f\right) &= \sup_{x^{-\{i, j  \}}}\sup_{x^i_1, x^i_2, x^j_1, x^j_2} \Delta^{(2)} \\
			\Delta^{(2)} &= f\left(x^{-\{i, j  \}}, x^i_1, x^j_1  \right) + f\left(x^{-\{i, j  \}}, x^i_2, x^j_2\right) \\
			&\qquad -f\left(x^{-\{i, j  \}}, x^i_1, x^j_2\right)-f\left(x^{-\{i, j  \}}, x^i_2, x^j_1 \right),
		\end{align*}
	\end{enumerate}
\end{definition*}
These should be thought of as analogous to bounds on the gradients and mixed second-order partial derivatives of functions on smooth spaces. From this perspective, it is not so surprising that a sort of discrete Clairaut--Schwarz--Young Theorem persists, whereby the `order of differentiation' is ignorable.
\begin{lemma}
	There holds the equality
	\begin{align*}
		\delta_{jj^\prime }^{\left(2\right)} \left(f\right) = \sup_{x^i_1, x^i_2} \delta_{j^\prime}^{(1)} \left( f \left( x^i_1,\cdot \right) - f \left( x^i_2,\cdot \right) \right),
	\end{align*}
	where $f\left( x^j,\cdot \right) : \prod_{1 \leq j^\prime \leq J, j^\prime \neq j} \mathcal{X}^{j^\prime} \to \mathbb{R}$ maps $x^{-j} = \left\{ x^k : k \neq j \right\}$ to $f(x^j, x^{-j})$.
\end{lemma}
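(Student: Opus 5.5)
The plan is to unfold both sides into a supremum over the same set of second-difference expressions. Consider the right-hand side first. For fixed $x^j_1, x^j_2$, set $g := f(x^j_1,\cdot) - f(x^j_2,\cdot)$, a function of $x^{-j}$. Apply the definition of the $j'$th coordinate-wise Lipschitz constant to $g$, viewed as a function on $\times_{k\neq j}\mathcal{X}^k$ with distinguished coordinate $j'$. This gives
\begin{align*}
\delta^{(1)}_{j'}(g) = \sup_{x^{-\{j,j'\}}}\sup_{x^{j'}_1,x^{j'}_2} \bigl[ g(x^{-\{j,j'\}},x^{j'}_1) - g(x^{-\{j,j'\}},x^{j'}_2) \bigr].
\end{align*}
Substituting the definition of $g$, the bracket becomes
\begin{align*}
f(x^{-\{j,j'\}},x^j_1,x^{j'}_1) - f(x^{-\{j,j'\}},x^j_2,x^{j'}_1) - f(x^{-\{j,j'\}},x^j_1,x^{j'}_2) + f(x^{-\{j,j'\}},x^j_2,x^{j'}_2).
\end{align*}
This is exactly $\Delta^{(2)}$ in the definition of $\delta^{(2)}_{j,j'}(f)$, with the roles of $(i,j)$ played by $(j,j')$ and the same arguments. (The index $i$ in the displayed statement is evidently meant to be $j$, i.e.\ the block being differenced first.)

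Next, take the outer supremum over $x^j_1,x^j_2$. The right-hand side then becomes a supremum of $\Delta^{(2)}$ over the full collection of variables $x^{-\{j,j'\}}, x^j_1, x^j_2, x^{j'}_1, x^{j'}_2$. Iterated suprema over disjoint groups of variables can be merged and reordered freely; no measurability or finiteness issue arises, since these are plain suprema valued in $(-\infty,\infty]$. The result is precisely the expression defining $\delta^{(2)}_{j,j'}(f)$, which proves the equality.

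The only step requiring care is bookkeeping. First, one must identify that differencing in $x^j$ first and then in $x^{j'}$ produces the same four-term combination as the symmetric definition $\Delta^{(2)}$. Second, one must check that the ordering of the sign pattern matches: the terms with indices $(1,1)$ and $(2,2)$ carry $+$, and the mixed terms carry $-$. I would verify this sign pattern explicitly, since a mismatch would only produce the supremum of $-\Delta^{(2)}$. Even then, swapping $x^{j'}_1 \leftrightarrow x^{j'}_2$ maps one set of values onto the other, so the supremum is unchanged. This symmetry is the discrete Clairaut--Schwarz observation: the order of the two differences is irrelevant.
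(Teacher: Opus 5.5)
Your argument is correct: unfolding $\delta^{(1)}_{j'}$ of the difference $f(x^j_1,\cdot)-f(x^j_2,\cdot)$ gives exactly the four-term $\Delta^{(2)}$ with the right signs (so the sign-swapping fallback is never needed), and merging the iterated suprema yields $\delta^{(2)}_{j,j'}(f)$. The paper states this lemma without proof, treating it as immediate from the definitions, and your unfolding is precisely the check it leaves to the reader; that includes reading the stray index $i$ as $j$ and taking $j\neq j'$.
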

\begin{lemma} \label{lemma:discretecrosssmooth}
	Let $\pi=e^{-f}\in \cal{P}_1(\times_{i=1}^M \cal{X}^i)$. The operator $\mu_\star^i[\cdot]:\times_{j \neq i}\cal{P}(\cal{X}^j)\mapsto \cal{P}(\cal{X}^i)$  is discretely Fisher-smooth on $\times_{j \neq i} \cal{P}_1(\cal{X}^j)$ with constants $L_{ij}:=\delta^{(2)}_{ij}(f)$ whenever these quantities are finite.
\end{lemma}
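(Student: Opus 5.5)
The plan is to mirror the proof of Lemma~\ref{lemma:crosssmooth}, with gradients replaced by finite differences and Kantorovich duality replaced by its total-variation analogue. The three steps are: write the log-density ratio explicitly, bound its oscillation by a supremum over pairs $x^i_1,x^i_2$, and decompose the difference of expectations one coordinate at a time.

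First, by the optimality conditions~\eqref{eq:optimality}, for product measures $\mu^{-i},\nu^{-i}$ we have
\begin{align*}
\log \frac{\dif \mu_\star^i[\mu^{-i}]}{\dif \mu_\star^i[\nu^{-i}]}(x^i) = -\int f(x^i,x^{-i})\,(\mu^{-i}-\nu^{-i})(\dif x^{-i}) + C,
\end{align*}
where $C$ is a constant (a difference of log normalising constants). This constant does not affect the oscillation. Hence
\begin{align*}
\osc\Big(\log \tfrac{\dif \mu_\star^i[\mu^{-i}]}{\dif \mu_\star^i[\nu^{-i}]}\Big) = \sup_{x^i_1,x^i_2} \int g_{x^i_1,x^i_2}(x^{-i})\,(\nu^{-i}-\mu^{-i})(\dif x^{-i}),
\end{align*}
with $g_{x^i_1,x^i_2}:=f(x^i_1,\cdot)-f(x^i_2,\cdot)$. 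Finiteness of the integrals will follow from the finiteness of the $\delta^{(2)}_{ij}(f)$ together with the moment assumption. This is the same role the integrability step plays in the continuous proof.

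Second, fix $x^i_1,x^i_2$ and telescope the product measures by replacing one coordinate at a time:
\begin{align*}
\nu^{-i}-\mu^{-i}=\sum_{j\neq i} \mu^{<j}\otimes(\nu^j-\mu^j)\otimes\nu^{>j},
\end{align*}
where the ordering runs over $j\neq i$. For each term, integrate first over all coordinates other than $j$. This gives the function $h_j(x^j)=\int g_{x^i_1,x^i_2}(x^j,\cdot)\,\dif(\mu^{<j}\otimes\nu^{>j})$.

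Third, use the elementary inequality $\int h\,\dif(\nu-\mu)\le \osc(h)\,\mathsf{TV}(\mu,\nu)$, valid for bounded $h$ on any measurable space since $\nu-\mu$ has zero mass and total positive part $\mathsf{TV}$. This inequality is the discrete substitute for Kantorovich duality. Since an average of functions has oscillation at most the supremum of their oscillations, we get
\begin{align*}
\osc(h_j)\le \sup_{x^{-\{i,j\}}}\sup_{x^j_1,x^j_2} \big(g_{x^i_1,x^i_2}(x^j_1,\cdot)-g_{x^i_1,x^i_2}(x^j_2,\cdot)\big)\le \delta^{(2)}_{ij}(f),
\end{align*}
by the definition of $\delta^{(2)}_{ij}$; this is exactly the discrete Clairaut-type identity stated just before. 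Summing over $j$ and taking the supremum over $x^i_1,x^i_2$ gives the bound $\sum_{j\neq i}\delta^{(2)}_{ij}(f)\,\mathsf{TV}(\mu^j,\nu^j)$, as required.

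The main obstacle is the third step: justifying that the oscillation survives the averaging over the other coordinates, and that the $\osc$–$\mathsf{TV}$ duality applies with the constant $1$, without a factor $2$. The latter requires using the normalisation of $\mathsf{TV}$ adopted in Section~\ref{sec:discrete}, under which $\mathsf{TV}=\sup_A(\nu-\mu)(A)$. I would verify this by centring $h$ at its midpoint, so that $\sup|h-c|=\osc(h)/2$, and then using $\int|\dif(\nu-\mu)|=2\,\mathsf{TV}$.
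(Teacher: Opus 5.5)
Your proof is correct and follows the route the paper intends. The paper states this lemma without proof, saying only that discrete Fisher-smoothness ``follows in analogy with the continuous case'', and you carry out exactly that analogy with Lemma~\ref{lemma:crosssmooth}: finite differences replace gradients, and the $\osc$--$\mathsf{TV}$ bound, with the constant $1$ correctly checked, replaces Kantorovich duality. You also spell out the one-coordinate-at-a-time telescoping that the continuous proof compresses into ``since $\mu^{-i}$ and $\nu^{-i}$ are product measures''.

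One small correction. Under the discrete metric $\cal{P}_1$ is all of $\cal{P}$, so the moment assumption plays no role. Boundedness of $g_{x^i_1,x^i_2}$, and hence every Fubini step, follows from finiteness of the $\delta^{(2)}_{ij}(f)$ alone, because changing one coordinate at a time gives $\osc(g_{x^i_1,x^i_2})\le\sum_{j\neq i}\delta^{(2)}_{ij}(f)$.
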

We now present a simple contraction estimate for CAVI under these discretised assumptions. To state our result, we define the vector of marginal total variation distances of the coordinate ascent variational inference iterates at $k$ towards the fixed points as
\begin{align*}
	T_k := \bigl( \mathsf{TV}(\mu^1_k, \mu^1_\star), \dots, \mathsf{TV}(\mu^M_k, \mu^M_\star)\bigr)^\top \in \r^M.
\end{align*}
Analogous guarantees for parallel CAVI, random-scan CAVI, and sequential CAVI follow from the same steps outlined in Section 2.4; we state the corresponding results without proof.
\begin{theorem} \label{thm:cavidiscrete}
	Let $\left\{ \mathcal{X}^i \right\}_{i = 1}^M$ be discrete spaces. Consider $\pi = e^{-f} \in \mathcal{P} (\times_{i=1}^M \mathcal{X}^i )$ and define $\ball{\mu_\star^i}{\varepsilon_{i}} = \{ \mu^i \in \mathcal{P} ( \mathcal{X}^i ) : \mathsf{TV} ( \mu^i, \mu_\star^i ) < \varepsilon_i \}$. Assume the following.
	\begin{itemize}
		\item For all $i = 1, \ldots, M$, the optimality map $\mu_\star^i \left[ \cdot \right]$ is discretely Fisher-smooth with constants $L_{ij}$ on $\times_{j \neq i} \ball{\mu_\star^j}{\varepsilon_{j}}$.
		\item For all $i = 1, \ldots, M$, the fixed-point marginal $\mu_\star^i$ satisfies a discrete transportation-information inequality with constant $\lambda_i$.
	\end{itemize}
	Define the discrete rate matrix $R\in\r^{M\times M}$ as $R_{ij}=L_{ij}/\lambda_i$ for $i\neq j$ and $R_{ii}=0$. Suppose furthermore that $\rho(R)<1$, let $\delta\in(0,\infty)^M$ be such that $R\delta\prec\delta$, and consider any $c>0$ such that $c\delta_i \leq \varepsilon_i$ for all $i=1,\dots,M$. Then, 
	\begin{enumerate}
		\item For all $\mu_k \in \times_{i=1}^M \ball{\mu_\star^i}{c\delta_i}$, the parallel CAVI~\eqref{eq:parallel} satisfies
		\begin{align*}
			T_{k+1} \preceq R T_{k}
		\end{align*}
		In particular, $\mu_{k+1} \in  \times_{i=1}^M \ball{\mu_\star^i}{c\delta_i}$ and the parallel CAVI algorithm~\eqref{eq:parallel} converges exponentially fast to $\mu_\star$ in total variation distance with rate $\rho(R)$.
		\item For all $\mu_k \in \times_{i=1}^M \ball{\mu_\star^i}{c\delta_i}$, the random-scan CAVI algorithm~\eqref{eq:random} satisfies
		\begin{align*}
			\mathbb{E}[T_{k+1}] \preceq \left( \frac{1}{M} R + \frac{M-1}{M} I_M\right) T_k
		\end{align*}
		In particular, $\mu_{k+1} \in  \times_{i=1}^M \ball{\mu_\star^i}{c\delta_i}$ and the random-scan CAVI algorithm~\eqref{eq:random} converges exponentially fast to $\mu_\star$ in expected total variation distance with rate $\rho(\smash{\frac{1}{M}R+\frac{M-1}{M}I})$.
		\item Writing $R = R_L + R_U$, where $R_L$ and $R_U$ are the lower- and upper-triangular parts of $R$, for all $\mu_k\in \times_{i=1}^M \ball{\mu_\star^i}{c\delta_i}$, the sequential CAVI~\eqref{eq:sequential} algorithm satisfies
		\begin{align*}
			T_{k+1} \preceq (I_M-R_L)^{-1}R_U T_k.
		\end{align*}
		In particular,  $\mu_{k+1} \in  \times_{i=1}^M \ball{\mu_\star^i}{c\delta_i}$ and the sequential CAVI algorithm~\eqref{eq:sequential} converges exponentially fast to $\mu_\star$ in total variation distance with rate $\rho((I_M-R_L)^{-1}R_U)$.
	\end{enumerate}
\end{theorem}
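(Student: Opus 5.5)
The plan is to transplant the proofs of Theorems~\ref{thm:Dparallel}, \ref{thm:Drandom} and~\ref{thm:Dsequential} verbatim into the discrete setting. The only change is that the pair (Fisher information, $\bb{W}_p$) is replaced by the pair (oscillation of the log-density ratio, $\mathsf{TV}$). Everything rests on one per-coordinate inequality. Fix $i$ and suppose every argument block $\mu^j$, $j\neq i$, lies in $\ball{\mu_\star^j}{\varepsilon_j}$. Since $\mu_\star^i=\mu_\star^i[\mu_\star^{-i}]$, we can chain the discrete transportation-information inequality for $\mu_\star^i$ with discrete Fisher-smoothness:
\begin{align*}
\lambda_i\,\mathsf{TV}\bigl(\mu_\star^i[\mu^{-i}],\mu_\star^i\bigr)\leq \mathrm{osc}\Bigl(\log\frac{\dif \mu_\star^i[\mu^{-i}]}{\dif \mu_\star^i[\mu_\star^{-i}]}\Bigr)\leq \sum_{j\neq i}L_{ij}\,\mathsf{TV}(\mu^j,\mu_\star^j).
\end{align*}
Dividing by $\lambda_i$ gives $\mathsf{TV}(\mu_\star^i[\mu^{-i}],\mu_\star^i)\leq \sum_{j\neq i}R_{ij}\mathsf{TV}(\mu^j,\mu_\star^j)$. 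This is the exact analogue of the one-line estimate in the proof of Theorem~\ref{thm:Dparallel}. One point needs checking here: the discrete transportation-information inequality requires $\mu_\star^i[\mu^{-i}]\ll\mu_\star^i$. This holds automatically, because both measures are of the form $\exp(-\bb{E}[f])$ normalised, so they share the support $\{f<\infty\}$. If the oscillation is infinite the bound is vacuous, and finiteness is supplied by smoothness.

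\textbf{Parallel.} Apply the inequality with $\mu^{-i}=\mu_k^{-i}$ for every $i$ simultaneously; this gives $T_{k+1}\preceq RT_k$. Invariance of the balls follows from $T_k\preceq c\delta$ together with monotonicity of the nonnegative matrix $R$: we get $T_{k+1}\preceq cR\delta\prec c\delta$.

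\textbf{Random scan.} Condition on $I$. The updated coordinate obeys the inequality above, and the untouched coordinates keep their distance. Averaging over $I$ gives $\bb{E}[T_{k+1}]\preceq(\tfrac1M R+\tfrac{M-1}{M}I_M)T_k$. Containment holds pathwise, since each realised step is either the identity or a parallel-type update of a single coordinate.

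\textbf{Sequential.} Run the induction over blocks exactly as in Theorem~\ref{thm:Dsequential}. Assume $\mu_{k+1}^{<i}$ and $\mu_k^{>i}$ lie in the balls $\ball{\mu_\star^j}{c\delta_j}\subseteq\ball{\mu_\star^j}{\varepsilon_j}$. The inequality then bounds $\mathsf{TV}(\mu_{k+1}^i,\mu_\star^i)$ by $c(R\delta)_i<c\delta_i$, which closes the induction. Collecting the per-block bounds yields $T_{k+1}\preceq R_LT_{k+1}+R_UT_k$. Since $R_L$ is strictly lower triangular and nonnegative, $(I_M-R_L)^{-1}=\sum_{n<M}R_L^n$ is nonnegative, so the inequality may be multiplied through by it. The rate statement then follows from the Stein--Rosenberg theorem, as in Remark~\ref{remark:rosenberg}.

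\textbf{Exponential convergence.} In each case, iterating the vector inequality gives $T_k\preceq A^kT_0$ with $A$ nonnegative. The assumption $\rho(R)<1$ transfers to $\rho(A)<1$ for each of the three matrices, so $A^k\to0$ geometrically; in the random case this uses the tower property for the expectation.

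I expect no step to be a genuine obstacle. The only points needing care are the absolute-continuity and support issue noted above and the sub-eigenvector bookkeeping ($R\delta\prec\delta$) for ball invariance, both of which are identical to the continuous case.
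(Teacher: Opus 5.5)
Your proposal is correct and follows the route the paper intends. The paper states this theorem without proof, saying it follows from the same steps as Theorems~\ref{thm:Dparallel}, \ref{thm:Drandom} and~\ref{thm:Dsequential} with $(\cal{I},\bb{W}_p)$ replaced by $(\mathrm{osc},\mathsf{TV})$; that is exactly your chained per-coordinate inequality together with the same ball-invariance, nonnegativity of $(I_M-R_L)^{-1}$, and Stein--Rosenberg bookkeeping. One small correction: the support of $\mu_\star^i[\mu^{-i}]$ can in general depend on $\mu^{-i}$ when $f$ takes infinite values, so the claim that it automatically equals that of $\mu_\star^i$ is too strong; absolute continuity should instead rest, as your fallback says, on the finiteness of the oscillation granted by discrete Fisher-smoothness.
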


\section{Applications to statistical physics and Bayesian models} \label{sec:appls}

\subsection{Bayesian Gaussian Mixtures} \label{sec:gmm}
We consider the unbalanced two-component Bayesian Gaussian Mixture Model
\begin{align} \label{eq:gmm}
  Y_i \sim p\cal{N}(\beta,\tau^{-1}I_d) + (1-p)\cal{N}(-\beta,\tau^{-1}I_d), \quad i=1,\dots, n,
\end{align}
for some given $p\in(0,1)$. The latent auxiliary variables  $Z_i\in\{0,1\}$ indicate the mixture components, i.e.\ $Y_i|Z_i=1 \sim \cal{N}(\beta,\tau^{-1}I_d)$ and $Y_i|Z_i=0 \sim \cal{N}(-\beta,\tau^{-1}I_d)$. We posit a Gaussian prior on the mean $\beta\sim\cal{N}(0;\tau_0^{-1}I_d)$. We focus on this class of simple models to facilitate comparison with some related results which are available for the expectation-maximisation (EM) algorithm, and to more cleanly illustrate the arguments.

Let $z=(z_1,\dots,z_n)$, $y=(y_1,\dots,y_n)$. We would like to approximate the intractable posterior
\begin{align*}
  \pi(z,\beta) = \prod_{i=1}^n \left\{ p\delta_{\{z_i=1\}}\cal{N}(y_i;\beta,\tau^{-1}I_d) + (1-p) \delta_{\{z_i=0\}}\cal{N}(y_i;-\beta,\tau^{-1}I_d) \right\}\cal{N}(\beta,0,\tau_0^{-1}I_d)
\end{align*}
with a product distribution of the form $\mu^1(\dif z)\otimes\mu^2(\dif \beta)$ to be fit via the coordinate ascent variational inference algorithm~\citep[Chapter 10]{Bishop2006}.
The CAVI updates for $\mu$ consist in updating Bernoulli allocation probabilities and the mean of the Gaussian distribution as
\begin{equation} \label{eq:cavigmm}
  \begin{aligned}
    \mu_\star^1[\mu^2] &= \bigotimes_{i=1}^n \Ber(r_{i,\mu^2}), \quad
    \mu_\star^2[\mu^1] = \cal{N}(m_{\mu^1},(\tau_0 + n\tau)^{-1} I_d),
  \end{aligned}
\end{equation}
where $\logit(r_{i,\mu^2}) = \log (p/(1-p)) + 2\tau y_i^\intercal \mathbb{E}_{\mu^2} [\beta] $ and $m_{\mu^1}:=(\tau_0 + n\tau)^{-1}\tau\sum_{i=1}^n \bb{E}_{\mu^1}[2Z_i-1]y_i$.
To analyse CAVI's exponential convergence for this class of models, we aim to apply Theorem~\ref{thm:secondcoordinate} with $p=2$, noting that our interest only lies in the component $\beta$. We emphasise that the auxiliary variables $z$ are only introduced for computational reasons, and are of only secondary interest inferentially.

\begin{proposition}
  Suppose that the starting distribution satisfies $\mu_0^2\in \{\mu^2\in \cal{P}_2(\cal{B}):\bb{W}_2(\mu^2,\mu_{\star}^2)<\varepsilon\}$. Then, $(\mu^2_k)$ converges exponentially fast in Wasserstein-$2$ distance with rate $r_\varepsilon$ if
  \begin{align} \label{eq:gmmlocalcond}
    r_{\varepsilon}:=\frac{\tau^2}{\tau_0+\tau n} \sup_{|x-\bb{E}_{\mu_\star^2}[\beta]|\leq \varepsilon} \lambda_{\max}\Bigg(\sum_{i=1}^n \sech^2\Big(\frac{1}{2}\log \frac{p}{1-p}+\tau y_i^\intercal x\Big) y_i y_i^\intercal \Bigg) < 1.
  \end{align}
\end{proposition}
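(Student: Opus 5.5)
The plan is to apply Theorem~\ref{thm:secondcoordinate} with $p=2$ and Euclidean geometry on $\cal{B}=\r^d$. Theorem~\ref{thm:secondcoordinate} needs two inputs: a transportation-information constant $\lambda_2$ for $\mu_\star^2$, and a Fisher-smoothness constant $L_2$ for the composed map $\nu\mapsto\mu_\star^2[\mu_\star^1[\nu]]$ on $\ball{\mu_\star^2}{\varepsilon}$. I will show that $\lambda_2=\tau_0+n\tau$ and $L_2=\tau^2\sup_{|x-\bb{E}_{\mu_\star^2}[\beta]|\leq\varepsilon}\lambda_{\max}(\cdots)$ work. Then $L_2/\lambda_2=r_\varepsilon$, and the conclusion, including that the iterates stay in the ball, follows from the theorem. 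The first update $\mu_1^2=\mu_\star^2[\mu_\star^1[\mu_0^2]]$ is already of the composed form, so starting from $\mu_0^2$ is covered.

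\textbf{Curvature.} By \eqref{eq:cavigmm}, every output of $\mu_\star^2[\cdot]$, and in particular the fixed point $\mu_\star^2$, is $\cal{N}(m,(\tau_0+n\tau)^{-1}I_d)$ for some mean $m$. Its potential has Hessian $(\tau_0+n\tau)I_d$. The second item of Lemma~\ref{lemma:useful} (with $\Ric=0$) then gives the transportation-information inequality with $\lambda_2=\tau_0+n\tau$.

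\textbf{Smoothness of the composed map.} Write $x_\nu:=\bb{E}_\nu[\beta]$, $x_\star:=\bb{E}_{\mu_\star^2}[\beta]$ and $a:=\tfrac12\log\frac{p}{1-p}$. From the logit formula, $\bb{E}_{\mu_\star^1[\nu]}[2Z_i-1]=2r_{i,\nu}-1=\tanh(a+\tau y_i^\intercal x_\nu)$. Hence the composed map sends $\nu$ to $\cal{N}(m(x_\nu),(\tau_0+n\tau)^{-1}I_d)$ with
\[
m(x)=(\tau_0+n\tau)^{-1}\tau\sum_{i=1}^n\tanh(a+\tau y_i^\intercal x)\,y_i .
\]
For two Gaussians with common covariance $\kappa^{-1}I_d$, where $\kappa=\tau_0+n\tau$, the log-density ratio has constant gradient $-\kappa(m-m')$. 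So the Fisher information is exactly $\kappa^2|m-m'|^2$, and
\[
\cal{I}\big(\mu_\star^2[\mu_\star^1[\nu]]\parallel\mu_\star^2\big)^{1/2}=\tau\Big|\sum_{i=1}^n\big(\tanh(a+\tau y_i^\intercal x_\nu)-\tanh(a+\tau y_i^\intercal x_\star)\big)y_i\Big| .
\]
Here I use $\mu_\star^2=\mu_\star^2[\mu_\star^1[\mu_\star^2]]$, so that $\mu_\star^2$ has mean $m(x_\star)$.

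\textbf{Main step: bounding the mean difference.} This is where the $\sech^2$ matrix appears, and it is the step needing care. Define $g(x)=\sum_i\tanh(a+\tau y_i^\intercal x)y_i$. Its Jacobian is
\[
J(x)=\tau\sum_i\sech^2(a+\tau y_i^\intercal x)\,y_iy_i^\intercal,
\]
which is symmetric positive semidefinite. By the fundamental theorem of calculus along the segment from $x_\star$ to $x_\nu$,
\[
g(x_\nu)-g(x_\star)=\Big(\int_0^1J\big(x_\star+t(x_\nu-x_\star)\big)\dif t\Big)(x_\nu-x_\star).
\]
The operator norm of an average of PSD matrices is at most the supremum of their $\lambda_{\max}$ along the segment. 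Jensen and the ordering $\bb{W}_1\le\bb{W}_2$ give $|x_\nu-x_\star|\le\bb{W}_2(\nu,\mu_\star^2)<\varepsilon$. The Euclidean ball $\{|x-x_\star|\le\varepsilon\}$ is convex, so the whole segment lies inside it, and the supremum is at most the one in \eqref{eq:gmmlocalcond}. Combining,
\[
\cal{I}^{1/2}\le\tau^2\sup_{|x-x_\star|\le\varepsilon}\lambda_{\max}\Big(\sum_i\sech^2(a+\tau y_i^\intercal x)y_iy_i^\intercal\Big)\,\bb{W}_2(\nu,\mu_\star^2).
\]
This is Fisher-smoothness with the claimed $L_2$. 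Dividing by $\lambda_2$ gives $r_\varepsilon<1$, and Theorem~\ref{thm:secondcoordinate} completes the argument.
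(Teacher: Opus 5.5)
Your proposal is correct and follows the paper's proof essentially step for step. Both apply Theorem~\ref{thm:secondcoordinate} with $\lambda_2=\tau_0+n\tau$, use the equal-covariance Gaussian Fisher-information formula and the $\tanh$ rewriting of $2r_{i}-1$, and finish with the mean value inequality on the $\sech^2$ Jacobian together with $|\bb{E}_{\nu}[\beta]-\bb{E}_{\mu_\star^2}[\beta]|\le\bb{W}_2(\nu,\mu_\star^2)$; your remarks on convexity of the ball and on $\mu_1^2$ already being of composed form only make explicit details the paper leaves implicit.
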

\begin{proof}
  We apply Theorem~\ref{thm:secondcoordinate}. We readily have $\lambda_{2}=n\tau+\tau_0$ (Lemma~\ref{lemma:useful}), so we only need to find $L_2$. Apply the formula for the Fisher information between Gaussians:
  \begin{align*}
    \cal{I}(\mu_\star^2[\mu_\star^1[\mu^2]]\parallel\mu^2_\star) = (\tau n + \tau_0)^2 |m_{\mu_\star^1[\mu^2]}-m_{\mu_\star^1}|^2
  \end{align*}
  Since $r_{i,\mu^2}=\sigma(\log(p/(1-p))+2\tau y_i^\intercal \bb{E}_{\mu^2}[\beta])$ and $\tanh(x)=2\sigma(2x)-1$, we have $2r_i[\mu^2]-1 = \tanh(0.5\log(p/(1-p))+\tau y_i^\intercal \bb{E}_{\mu^2}[\beta])$, hence
  \begin{align*}
    \cal{I}(\mu_\star^2[\mu_\star^1[\mu^2]]\parallel\mu_\star^2) = |\varphi(\bb{E}_{\mu^2}[\beta])-\varphi(\bb{E}_{\mu_\star^2}[\beta]|^2, \quad \varphi(x):=\tau \sum_{i=1}^n \tanh\Big(\frac{1}{2}\log \frac{p}{1-p}+ \tau y_i^\intercal x\Big)y_i\in\r^d.
\end{align*}
Computing the Jacobian of $\varphi$ as
\begin{align*}
    J\varphi(x)=\tau^2 \sum_{i=1}^n\sech^2\Big(\frac{1}{2}\log \frac{p}{1-p}+\tau y_i^\intercal x\Big)y_i y_i^\intercal
\end{align*}
and recalling that the Wasserstein-2 distance controls the means difference $|\bb{E}_{\mu^2}[\beta]-\bb{E}_{\mu_\star^2}[\beta]|\leq \bb{W}_2(\mu^2,\mu_\star^2)$, the result follows by a direct application of the mean value inequality.
\end{proof}

\noindent
With these results in hand, we can make the following observations. Consider first the balanced case $p=1/2$ for simplicity.
\begin{itemize}[leftmargin=.25cm]
\item Suppose that $\varepsilon \to \infty$, i.e.\ that we seek to characterise the global convergence of coordinate ascent variational inference. In this case, since $0<\sech^2(\tau y^\intercal x)\leq 1$, we obtain
    \begin{align*}
    r_\varepsilon \to r := \frac{\tau^2}{\tau_0+\tau n}  \lambda_{\max}\Big(\sum_{i=1}^n y_i y_i^\intercal \Big)
    \end{align*}
    and CAVI converges globally if $r<1$.
    This is a posterior phase transition criterion. Since the posterior satisfies $p(y_i|\beta)\propto e^{-\frac{\tau}{2}(|y_i|^2+|\beta|^2)}\cosh(\tau y_i^\intercal \beta)$, the log-posterior and its Hessian at $\beta=0$ reads
    \begin{align*}
    \log p(\beta|y) = -\frac{n\tau+\tau_0}{2}|\beta|^2 + \sum_{i=1}^n \log \cosh(\tau y_i^\intercal \beta), \quad \nabla^2 \log p(\beta|y) = - (n\tau + \tau_0)I_d + \tau^2 \sum_{i=1}^n y_i y_i^\intercal
    \end{align*}
    For $\beta=0$ to be a strict local maximum, one requires indeed $r<1$, in which case the posterior is also unimodal. In this case, the posterior landscape is very well-behaved, and CAVI is globally exponentially convergent, but the mixture elements are completely unidentifiable. If the model is well-specified, then this happens very rarely as the number of data points $n$ grows: letting $\beta_0$ denote the true population parameter, in the data-rich limit, the law of large numbers ensures that
    \begin{align*}
    r = \lambda_{\max} \Big(\frac{\tau^2}{\tau_0+\tau n} \sum_{i=1}^n y_i y_i^\intercal \Big) \to \lambda_{\max}(I_d+\tau \beta_0 \beta^\intercal_0) = 1+ \tau \lambda_{\max}(\beta_0 {\beta_0}^\intercal)  >1,
    \end{align*}
    almost-surely. As such, global convergence can only be expected with a large prior precision $\tau_0$ at zero, with few data points, or if the model is misspecified.
\item When $r>1$, the symmetric stationary point $\beta=0$ is not a strict local maximum. By symmetry, since the posterior is coercive, there must be a pair of symmetric maximisers. The variational posterior $\nu_\star$, however, can only be unimodal, and different initialisations can converge to different Gaussian approximations, each centred around a different posterior mode.
\item Suppose now that $\varepsilon< |\bb{E}_{\mu_\star^2}[\beta]|$, that is, the variational posterior mean (or the posterior mode separation) is large, relative to our initial displacement from the optimum. In this case, the $\sech$ term is active, and it will force $r_\varepsilon<1$ for a sufficiently large posterior mode separation  $|\bb{E}_{\mu_\star^2}[\beta]|$ or a sufficiently small $\varepsilon$, leading to local exponential convergence. As the posterior mode separation grows, the initialisation can be taken further away. Moreover, the convergence speed of CAVI depends positively on the magnitude of the posterior modes separation, and starting closer yields provably faster convergence. When the precision $\tau$ is large, the gap $|\bb{E}_{\mu_\star^2}[\beta]|-\varepsilon$ may be larger.
\item Suppose that the ball $\{x:|x-\bb{E}_{\mu_\star^2}[\beta]|\leq \varepsilon\}$ contains zero, i.e.\ that ~$\varepsilon\geq |\bb{E}_{\mu_\star^2}[\beta]|$. This corresponds to a situation where we start the algorithm in a neighbourhood of the posterior mode that is so large it contains the saddle point at zero. In this case,  $r_\varepsilon = r$, and fast convergence is possible only when it also happens globally.
\end{itemize}

In the unbalanced general case $p\in (0,1)$, the argument within the $\sech$ will typically be at least as large as in the $p=1/2$ case. Because $\sech$ decreases away from the origin, coordinate ascent variational inference can be expected to converge more rapidly in the asymmetric case, and the radius of exponential convergence will be wider. Unbalanced mixtures make the labels less ambiguous, facilitate inference, and typically lead to a faster Bayesian computation. Figure~\ref{fig:cavigmm} illustrates these phenomena numerically.

These types of results seem to be novel for coordinate ascent variational inference, and elicit parallels with some of the convergence theory of the expectation-maximisation (EM) algorithm for Gaussian Mixture Models~\citep{Xu1996,Balakrishnan2017,Daskalakis2017,Weinberger2022}. In this context, $r$ behaves as an empirical, Bayesian, signal-to-noise ratio, and the variational posterior mean $\bb{E}_{\mu_\star^2}[\beta]$ plays the role of the maximum likelihood estimator.

\begin{figure}[!tbp]
\caption{Log-Wasserstein-$2$ convergence of CAVI in Gaussian Mixture Models ($d=10$, $n=200$, averaged across 100 simulations). \textbf{Left Panel:} Local exponential convergence across varying mean separations $|\bb{E}_{\nu_\star}[\beta]|$ for a fixed $\varepsilon$ and precisions $\tau=\tau_0=0.1$. Greater mean separation accelerates convergence and induces an earlier onset of the exponential phase (characterised by linear trajectories). \textbf{Right Panel:} Local exponential convergence across varying precisions $\tau$ for a fixed mean separation and $\varepsilon$. Higher precision similarly yields faster convergence and an earlier transition to the exponential phase.}
\label{fig:cavigmm}
\centering
\begin{minipage}[b]{0.45\textwidth}
   \includegraphics[width=\textwidth]{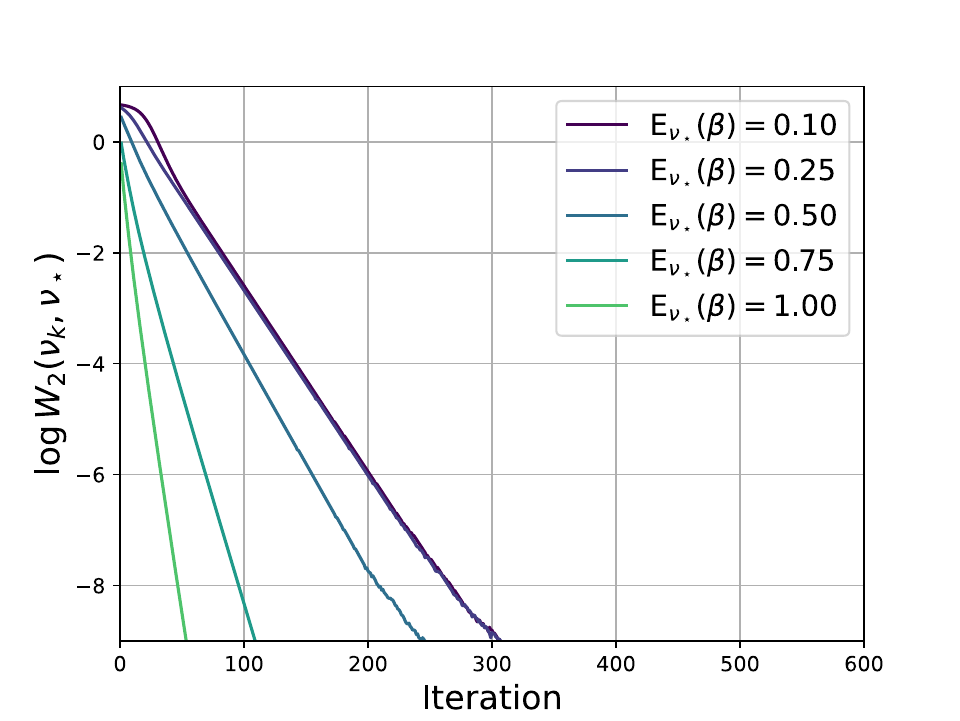}
\end{minipage}
\hfill
\begin{minipage}[b]{0.45\textwidth}
   \includegraphics[width=\textwidth]{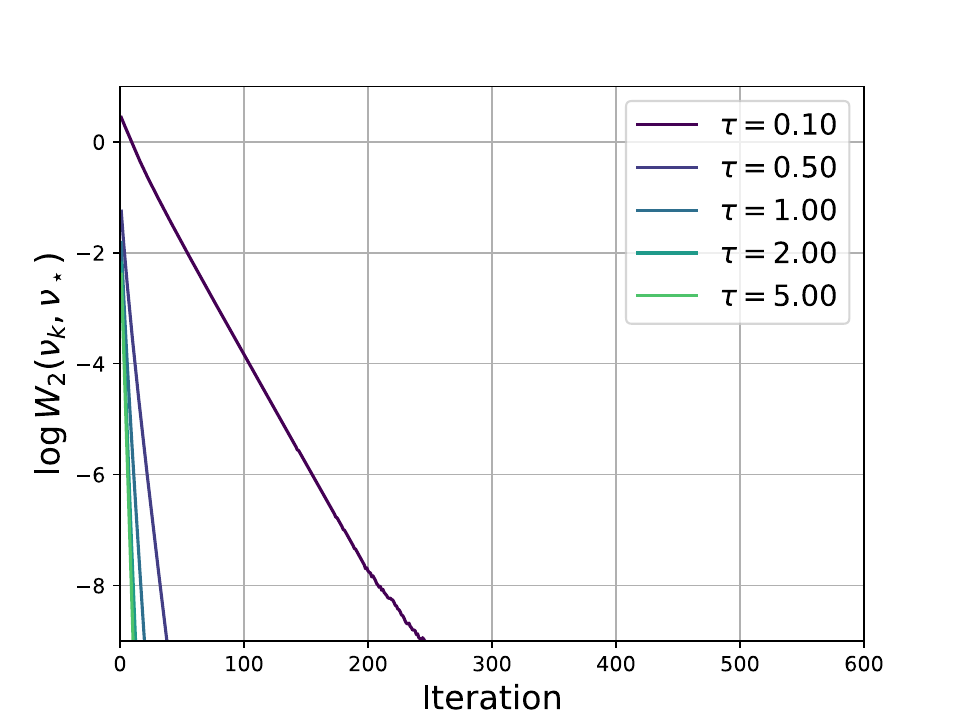}
\end{minipage}
\end{figure}

\subsection{Bayesian Classification} \label{sec:classi}
We assume that the observations are generated according to $$Y_i | \beta \sim \Ber(\Phi(x_i^\intercal \beta)), \quad i=1,\dots,n,$$ where $\Phi$ represents a link function of choice, and with $x_i$ indicating the $i$-th row of the design matrix $X\in\r^{n\times p}$. We posit a standard Gaussian prior $\beta \sim \cal{N}(m_0,Q_0^{-1})$, with $Q_0\in\r^{p\times p}$ potentially depending on the design.  Denote $y=(y_1,\dots,y_n)$. The posterior distribution
\begin{align}
\label{eq:bayclassposterior}
\pi(\dif \beta|y) = \cal{N}(\dif \beta;m_0,Q_0^{-1}) \prod_{i=1}^n \Phi(x_i^\intercal \beta)^{y_i}(1-\Phi(x_i^\intercal \beta))^{1-y_i}
\end{align}
of interest is typically intractable, and hard to approximate directly. A common strategy in Bayesian computation is data augmentation~\citep{Tanner1987,Gelfand1990}, which introduces a collection of random variables $(Z_i)$ and an augmented posterior distribution $\pi(\dif z,\dif \beta)$ admitting~\eqref{eq:bayclassposterior} as marginal, and that allow for standard Bayesian computation algorithms to be implemented directly. The choice of distribution of the auxiliary random variables $(Z_i)$ used for augmentation depends on the link function. We consider two common choices.

\paragraph{Probit Regression.} \label{subsec:probit}
Suppose that $\Phi$ is the cumulative distribution function of a Gaussian distribution. In this case, we may consider augmenting our parameter space with latent variables $Z_i\sim \cal{N}(x_i^\intercal \beta, 1)$. This allows us to exhibit our posterior of interest as the $\beta$-marginal of
\begin{align*}
\pi(\dif z, \dif \beta) = \cal{N}(\dif \beta;m_0,Q_0^{-1})  \cal{N}(\dif z;X\beta,I_n) \prod_{i=1}^n \{y_i\mathbf{1}{(z_i>0)} + (1-y_i)\mathbf{1}{(z_i\leq 0)} \}
\end{align*}
\citep{Albert1993}.
We consider a two-block coordinate ascent variational approach to approximate the augmented posterior. The CAVI updates are available in closed-form, and are given concretely by a Gaussian and a product-of-truncated-Gaussians, with explicit forms
\begin{equation} \label{eq:cavibayesprobit}
\begin{aligned}
    \mu_\star^1[\mu^2](\dif z)&\propto  \bigotimes_{i=1}^n \cal{N}(\dif z_i;x_i^\intercal \mathbb{E}_{\mu^2} [\beta],1)\{y_i\mathbf{1}{(z_i>0)} + (1-y_i)\mathbf{1}{(z_i\leq 0)} \}, \\
    \mu_\star^2[\mu^1](\dif \beta) &=\cal{N}(\dif \beta;m_{\mu^1},Q^{-1}),
\end{aligned}
\end{equation}
where $ m_{\mu^1}=Q^{-1}(Q_0 m_0 +X^\intercal \mathbb{E}_{\mu^1} [Z]), \quad Q=Q_0+X^\intercal X$.
We characterise the exponential convergence of CAVI in Bayesian Probit Regression problems~\eqref{eq:cavibayesprobit} in terms of the \textit{Bayesian fraction of missing information}
\begin{align*}
r:=\lambda_{\max}((Q_0+X^\intercal X)^{-1}X^\intercal X).
\end{align*}
\begin{proposition}    \label{cor:cavibayesprobit}
The iterates $(\mu_k^1,\mu_k^2)_{k\geq 0}$ from the CAVI algorithm~\eqref{eq:cavibayesprobit} for Bayesian Probit Regression converge globally exponentially fast in Wasserstein-$2$ distance with rate $r=\lambda_{\max}((Q_0+X^\intercal X)^{-1}X^\intercal X)$.
\end{proposition}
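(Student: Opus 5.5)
We apply Theorem~\ref{thm:secondcoordinate} to the $\beta$-block, with $p=2$ and the geometry on $\r^p$ induced by $B=Q=Q_0+X^\intercal X$ (Remark~\ref{remark:inducednorms}). In that geometry we have $\varepsilon_2=\infty$. Afterwards we transfer the result to the $z$-block by a short extra step.

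\textbf{Curvature.} Every $\mu_\star^2[\mu^1]$ is $\cal{N}(m_{\mu^1},Q^{-1})$, so the fixed point $\mu_\star^2$ has potential Hessian exactly $Q$. It is therefore $1$-strongly log-concave in the $|\cdot|_Q$ geometry, and Lemma~\ref{lemma:useful} gives $\lambda_2=1$. The Fisher information is then taken with respect to the dual norm $|Q^{-1/2}\cdot|$.

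\textbf{Smoothness of the composed map.} Both measures are Gaussians with the same precision $Q$. Hence the relative score is the constant vector $Q(m-m_\star)$, and
\begin{align*}
\cal{I}(\mu_\star^2[\mu_\star^1[\mu^2]]\parallel\mu_\star^2)=|Q^{1/2}(m_{\mu_\star^1[\mu^2]}-m_{\mu_\star^1})|^2=|Q^{-1/2}X^\intercal(\bb{E}_{\mu_\star^1[\mu^2]}[Z]-\bb{E}_{\mu_\star^1[\mu^2_\star]}[Z])|^2 .
\end{align*}
Under $\mu_\star^1[\mu^2]$ each $Z_i$ is a $\cal{N}(\eta_i,1)$ variable truncated to a half-line, with $\eta_i=x_i^\intercal\bb{E}_{\mu^2}[\beta]$. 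Write $g(\eta)$ for its mean. The standard exponential-family identity gives $g'(\eta)=\mathrm{Var}(Z_i)$. Truncating a Gaussian to a convex set (a half-line) gives a $1$-strongly log-concave law, so Brascamp--Lieb yields $\mathrm{Var}\le 1$ and $0\le g'\le 1$. By the mean value theorem, $\bb{E}_{\mu_\star^1[\mu^2]}[Z]-\bb{E}_{\mu_\star^1[\mu_\star^2]}[Z]=DX\Delta$, where $\Delta=\bb{E}_{\mu^2}[\beta]-\bb{E}_{\mu_\star^2}[\beta]$ and $D$ is diagonal with entries in $[0,1]$. Therefore
\begin{align*}
\cal{I}\le \sup_{0\preceq D\preceq I}|Q^{-1/2}X^\intercal D X Q^{-1/2}\,Q^{1/2}\Delta|^2\le \|Q^{-1/2}X^\intercal XQ^{-1/2}\|^2\,|\Delta|_Q^2 .
\end{align*}
The operator-norm step holds because $0\preceq X^\intercal DX\preceq X^\intercal X$ and both sides are PSD. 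Moreover $\|Q^{-1/2}X^\intercal XQ^{-1/2}\|=\lambda_{\max}(Q^{-1}X^\intercal X)=r$. Finally, Jensen's inequality gives $|\Delta|_Q\le \bb{W}_2(\mu^2,\mu_\star^2)$ in the $Q$-geometry. So $L_2=r$ globally. Since $r<1$ whenever $Q_0\succ0$, Theorem~\ref{thm:secondcoordinate} gives global contraction of $(\mu_k^2)$ at rate $r$.

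\textbf{Transfer to the $z$-block.} Take $\mu_{k+1}^1=\mu_\star^1[\mu_k^2]$ and $\mu_\star^1=\mu_\star^1[\mu_\star^2]$. These are products of truncated Gaussians that differ only by location. Couple them coordinatewise via the monotone (quantile) map: the shift of each truncated coordinate in the mean parameter is $1$-Lipschitz in $W_2$, which I would check by comparing the log-densities, which are tilts by $(\eta_i-\eta_i')z_i$, using strong log-concavity. This gives $\bb{W}_2(\mu_{k+1}^1,\mu_\star^1)\le |X\Delta_k|$. Since $|X\Delta_k|\le |\Delta_k|_Q$, the $z$-block distance is dominated by a constant multiple of $\bb{W}_2(\mu_k^2,\mu_\star^2)$ and inherits the rate $r$.

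\textbf{Main obstacle.} I expect the hardest part to be the variance bound $\mathrm{Var}\le 1$ for the truncated normal together with the quantile-coupling claim. I would handle both with Brascamp--Lieb and the Caffarelli contraction theorem, applied to one-dimensional strongly log-concave laws.
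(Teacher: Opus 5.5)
Your proposal is correct, but it takes a different route from the paper.

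\textbf{The paper's route.} It applies Theorem~\ref{thm:cavi} to both blocks at once. Each optimality map is a location family whose relative score is a constant vector: $Q^{-1/2}X^\intercal(\mathbb{E}_{\mu^1}[Z]-\mathbb{E}_{\nu^1}[Z])$ in the dual $Q$-norm for the $\beta$-update, and $X(\mathbb{E}_{\mu^2}[\beta]-\mathbb{E}_{\nu^2}[\beta])$ for the $z$-update. Hence both Fisher-smoothness constants equal $\sqrt{r}$. Strong log-concavity of the Gaussian and of the product of truncated Gaussians gives $\lambda_1=\lambda_2=1$. The result is a one-step contraction at rate $r$ in each block.

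\textbf{Your route.} You use Theorem~\ref{thm:secondcoordinate} and differentiate through the truncated-normal mean, whose derivative is a variance lying in $[0,1]$. This gives directly $L_2=\sup_{0\preceq D\preceq I}\|Q^{-1/2}X^\intercal DXQ^{-1/2}\|=r$. That is the same constant the paper's route produces as $L_{12}L_{21}/\lambda_1$, and you obtain it without a transportation-information inequality for $\mu_\star^1$ in that step.

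\textbf{The $z$-block transfer.} The inequality for $\mu_\star^1$ is not really avoided, only relocated. Your bound $\mathbb{W}_2(\mu^1_{k+1},\mu^1_\star)\le|X\Delta_k|$ is exactly the transportation-information inequality with constant $1$ for the $1$-strongly log-concave truncated normals (Lemma~\ref{lemma:useful} and tensorisation). It is applied to the constant relative score $\eta_i-\eta_i'$, and this is a cleaner justification than quantile couplings and Caffarelli.

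As written, this step only dominates the $z$-distance by the $\beta$-distance, which gives $O(r^k)$ decay. One more line upgrades it to the paper's per-step contraction $\mathbb{W}_2(\mu^1_{k+1},\mu^1_\star)\le r\,\mathbb{W}_2(\mu^1_k,\mu^1_\star)$. Write $\Delta_k=Q^{-1}X^\intercal(\mathbb{E}_{\mu^1_k}[Z]-\mathbb{E}_{\mu^1_\star}[Z])$ and use $\|XQ^{-1}X^\intercal\|=r$.

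\textbf{What your route buys.} It exposes the explicit diagonal matrix $D$ of truncated-normal variances. Localise to a $\mathbb{W}_2$-ball around $\mu_\star^2$, as the paper does for logistic regression in Proposition~\ref{cor:cavibayeslogit}. This yields a local rate tending to $\lambda_{\max}(Q^{-1/2}X^\intercal D_\star XQ^{-1/2})$, where $D_\star$ holds the variances under $\mu_\star^1$. That local rate is strictly smaller than $r$ whenever $X\neq 0$, because half-line truncated normal variances are strictly below one. The paper's two-block argument cannot see this refinement.
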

\begin{proof}
We aim to apply Theorem~\ref{thm:cavi} with $p=2$. It is convenient to work with the norm induced by $Q=Q_0+X^\intercal X$ on $\cal{B}$ and the standard Euclidean on $\cal{Z}$ (have a look at Remark~\ref{remark:inducednorms}). Recalling that the Logarithmic Sobolev inequality implies the transportation-information inequality (Lemma~\ref{lemma:useful}) and that the transportation-information inequality tensorises, the strong log-concavity of $\mu_\star^1[\mu^2]$ and $\mu_\star^2[\mu^1]$ yields that $\lambda_{1} = \lambda_{2}=1$ in this norm, so we only need to find $L_{12}$ and $L_{21}$. Let $A:=(Q_0+X^\intercal X)^{-1/2}X^\intercal$, and notice that $\lambda_{\max}((Q_0+X^\intercal X)^{-1}X^\intercal X)=\lambda_{\max}(A^\intercal A)$. The Fisher information functional between two Gaussians
$\varrho := \cal{N}(b,B^{-1})$ and $\gamma:= \cal{N}(c,B^{-1})$ with the same covariance is equal to $\cal{I}(\varrho\|\gamma) = \|B^{1/2}(b - c)\|^2_2$.
In particular, since $\mu_\star^2[\mu^1]$ and $\mu_\star^2[\nu^1]$ have the same covariance for arbitrary $\mu^1$ and $\nu^1$, in the norm induced by $Q$,
\begin{align*}
 \cal{I}(\mu_\star^2[\mu^1] \parallel \mu_\star^2[\nu^1]) = \|Q^{-1/2}QQ^{-1}X^\intercal( \bb{E}_{\mu^1}[Z]-\bb{E}_{\nu^1}[Z])\|^2_2 \leq \lambda_{\max}(A^\intercal A) \|\bb{E}_{\mu^1}[Z]-\bb{E}_{\nu^1}[Z]\|^2_2
\end{align*}
Since $\|\bb{E}_{\mu^1}[Z]-\bb{E}_{\nu^1}[Z]\|^2_2\leq \bb{W}_2^2(\mu^1,\nu^1)$, it follows that $L_{12}=\lambda_{\max}^{1/2}(A^\intercal A)$, and similarly for $L_{21}$. Theorem~\ref{thm:cavi} gives a convergence rate in Wasserstein-$2$ distance equal to $\lambda_{\max}(A^\intercal A)$.
\end{proof}

The Bayesian fraction of missing information is a ratio of posterior certainty provided by the data $X^\intercal X$ on the total $Q_0+X^\intercal X$. In particular, if there is little prior information and $Q_0\to 0$ in a suitable sense, then all of the information comes from the data, and CAVI converges slowly as a consequence. Equally, if $X^\intercal X$ is large in a suitable sense, then the data is very informative, and CAVI is slow. We remark that for this specific example, the results of \citet{Bhattacharya2025} are also applicable, and return the same convergence rate.

In the random design setting, following the approach in \citet{Lee2024,Ascolani2025} for Gibbs sampling, we can use results from random matrix theory to show the rate stays bounded away from $1$ almost surely, even as $n, p\to \infty$, provided that we scale the prior precision $Q_0$ appropriately with $p$ (or, alternatively, the design matrix). We consider two choices of importance in Bayesian statistics: (i) a direct scaling with $p$, $Q_0 = (p/c) I_p$ for some $c > 0$, common in the context of high-dimensional Bayesian regression~\citep{Simpson2017}, and (ii) the generalised g-prior $Q_0=(X^\intercal X/g + c I_{p})$~\citep{Zellner1986,Ascolani2025}, again for some $c > 0$.
\begin{corollary} \label{cor:cavibayesprobitrates} The following statements hold true.
\begin{itemize}
    \item Let $Q_0=(p/c)I_p$. Then,
    \begin{align*}
        r =\frac{(c/p)\lambda_{\max}(X^\intercal X)}{1+(c/p)\lambda_{\max}(X^\intercal X)} < 1.
    \end{align*}
    In particular, suppose that $X_{i,j}$ are i.i.d.~with unit variance. If $n,p\to \infty$ and $n/p\to a\in(0,\infty)$, we have
    \begin{align*}
        \lim r = \frac{c(1+\sqrt{a})^2}{1+c(1+\sqrt{a})^2} < 1
    \end{align*}
    \item  Let $Q_0=(X^\intercal X/g + c I_{p})$. Then,
    \begin{align*}
        r =  \frac{\lambda_{\max}(X^\intercal X)}{(1+1/g)\lambda_{\max}(X^\intercal X)+c} < 1.
    \end{align*}
    In particular, suppose that $X_{i,j}$ are i.i.d.~with unit variance. If $n,p\to \infty$ and $n/p\to a\in(0,\infty)$, we have
    \begin{align*}
        \lim r = \frac{1}{(1+1/g)} < 1
    \end{align*}
\end{itemize}
\end{corollary}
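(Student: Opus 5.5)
The plan is to substitute each choice of $Q_0$ into $r=\lambda_{\max}((Q_0+X^\intercal X)^{-1}X^\intercal X)$, as given by Proposition~\ref{cor:cavibayesprobit}, and obtain a closed form. In both cases $Q_0$ commutes with $X^\intercal X$, so a single orthonormal eigenbasis of $X^\intercal X$ suffices. Write $X^\intercal X=U\Lambda U^\intercal$ with eigenvalues $s_1,\dots,s_p\geq 0$. Then $(Q_0+X^\intercal X)^{-1}X^\intercal X=U\,g(\Lambda)\,U^\intercal$, where $g$ is a scalar function applied entrywise. The quantity $r$ is then $\max_k g(s_k)$.

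For $Q_0=(p/c)I_p$ we have $g(s)=s/(p/c+s)=\frac{(c/p)s}{1+(c/p)s}$. This function is increasing in $s\ge 0$, so the maximum is attained at $s=\lambda_{\max}(X^\intercal X)$, which gives the stated formula. It is $<1$ because $\lambda_{\max}<\infty$.

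For $Q_0=X^\intercal X/g+cI_p$, with the $g$-prior parameter (not to be confused with the scalar function above), we get $h(s)=s/((1+1/g)s+c)$. Its derivative is proportional to $c>0$, so $h$ is again increasing and the maximum is at $\lambda_{\max}(X^\intercal X)$. Strict inequality holds because $c>0$, or because $1+1/g>1$.

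For the asymptotics I would invoke the Bai--Yin theorem. For an $n\times p$ matrix with i.i.d.\ unit-variance entries (assuming the needed finite fourth moment, which I would add as an implicit hypothesis), as $n/p\to a$ we have $\lambda_{\max}(X^\intercal X)/p\to(1+\sqrt a)^2$ almost surely. In the first case, $(c/p)\lambda_{\max}\to c(1+\sqrt a)^2$, and continuity of $t\mapsto t/(1+t)$ yields the limit. In the second case, divide numerator and denominator by $\lambda_{\max}$. Since $\lambda_{\max}\to\infty$ almost surely (it grows like $p(1+\sqrt a)^2$), the ratio $c/\lambda_{\max}\to 0$ and $r\to 1/(1+1/g)$.

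The only delicate step is the random-matrix limit: Bai--Yin normalisation is usually stated in terms of $n$ or $p$. I would check that $\lambda_{\max}(X^\intercal X/p)\to(1+\sqrt{a})^2$ under $n/p\to a$, either by rescaling $\lambda_{\max}(X^\intercal X/n)\to(1+\sqrt{1/a})^2$ by the factor $n/p\to a$, or by applying the theorem to $X^\intercal$. The rest is elementary monotonicity.
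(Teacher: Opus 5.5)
Your proposal is correct and follows essentially the same route as the paper. The paper writes $r=\max_{\lambda\in\sigma(X^\intercal X)}h(\lambda)$ via the spectral mapping theorem, which is equivalent to your common-eigenbasis diagonalisation; it then uses monotonicity of the scalar map and invokes Bai's theorem for $\lambda_{\max}(X^\intercal X)/p\to(1+\sqrt a)^2$. Your normalisation check is right, with one addition: for the first limit the entries must also be centred, not just have a finite fourth moment (otherwise $\lambda_{\max}(X^\intercal X)/p\to\infty$ and $r\to 1$), whereas the $g$-prior limit only needs $\lambda_{\max}(X^\intercal X)\to\infty$.
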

\begin{proof}
Let $Q_0=(p/c)I_d$ and define $h(A):=((p/c)I_d+A)^{-1}A$. By the spectral mapping theorem~\citep[Theorem 1.13]{Higham2008},
\begin{align} \label{eq:sad89ahdsbfvf}
    r = \lambda_{\max}(h(X^\intercal X)) = \max_{\lambda\in\sigma(X^\intercal X)}h(\lambda),
\end{align}
where $\sigma(A)$ denotes the spectrum of $A$. Because $t\mapsto h(t)$ is increasing, we have
\begin{align*}
    r =  \frac{\lambda_{\max}(X^\intercal X)}{(p/c)+\lambda_{\max}(X^\intercal X)}
\end{align*}
and the first result follows by rearranging this expression.
The limiting expression when $n,p\to \infty$ and $n/p\to a\in(0,\infty)$ follows from the fact that $\lambda_{\max}(X^\intercal X)/p \to (1+\sqrt{a})^2$~\citep[Theorem~2]{Bai1993}. For $Q_0=(X^\intercal X/g + c I_{p})$ we follow the same steps.
\end{proof}
Figure~\ref{fig:cavigs} illustrates these results numerically.

\begin{figure}[!tbp]
\caption{Log-Wasserstein-$2$ convergence of CAVI in Bayesian Probit Regression with a $g$-prior (averaged across 100 simulations). \textbf{Left Panel:} Convergence behaviour for a fixed $g$ across varying values of $(n, p)$. The rate of convergence is independent of both sample size and dimensionality. \textbf{Right Panel:} Convergence behaviour across varying values of $g$ at a fixed $(n, p)$. The convergence rate exhibits an inverse relationship with the magnitude of $g$.}
\label{fig:cavigs}
\centering
\begin{minipage}[b]{0.45\textwidth}
   \includegraphics[width=\textwidth]{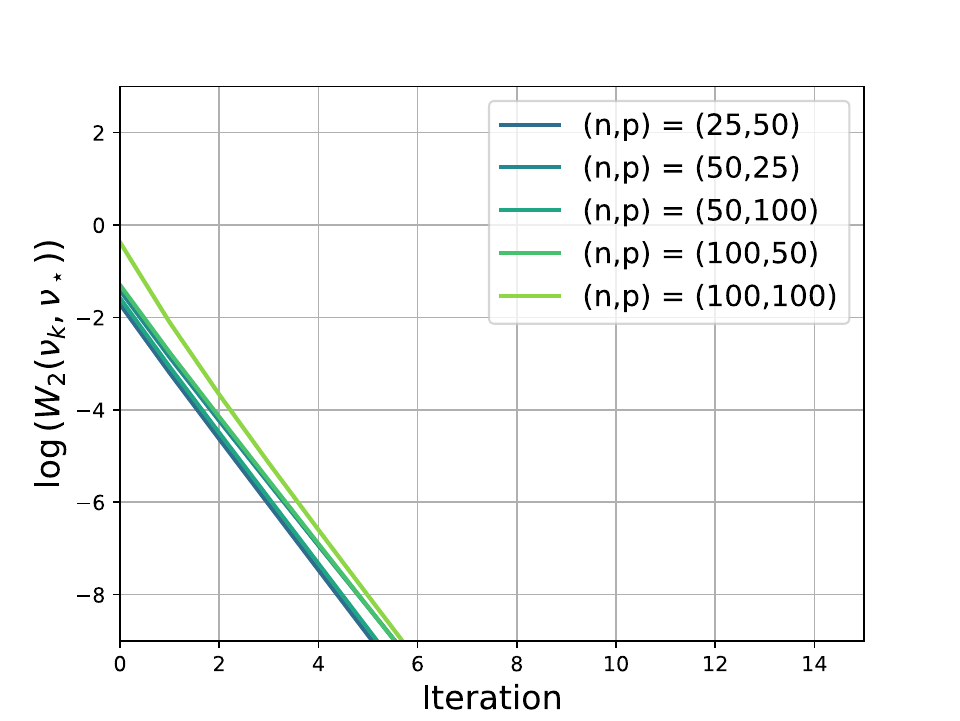}
\end{minipage}
\hfill
\begin{minipage}[b]{0.45\textwidth}
   \includegraphics[width=\textwidth]{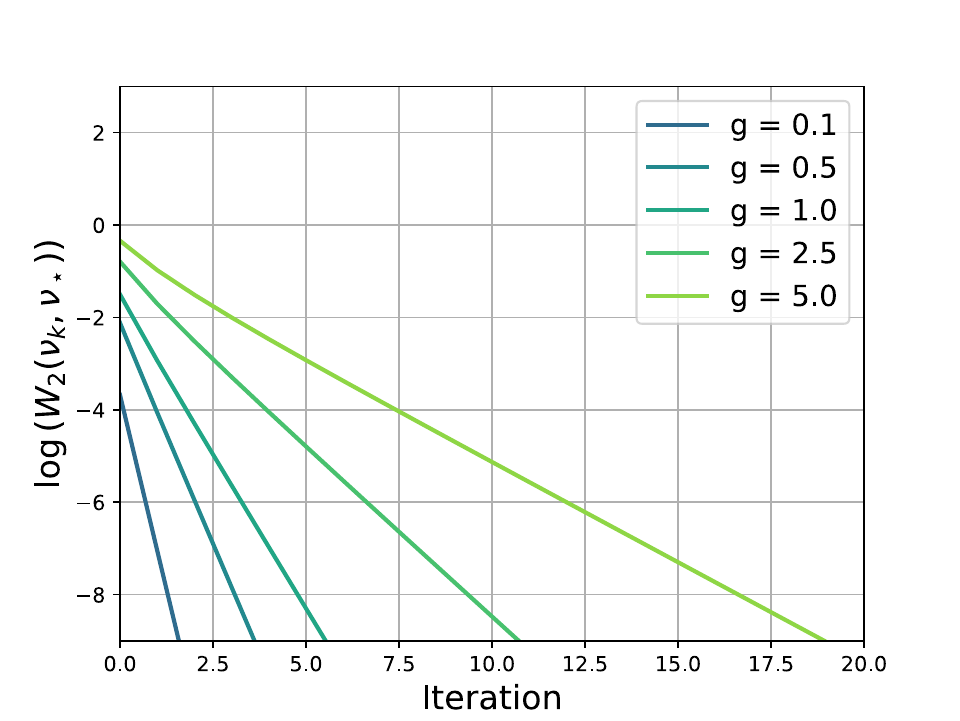}
\end{minipage}
\end{figure}

\paragraph{Logistic Regression.} \label{subsec:logistic}
Suppose now that $\Phi(x)$ is the logistic function. In this case, state-of-the-art methods consider \emph{Pólya--Gamma} augmentations $Z_i\sim \pg(1,x_i^\intercal \beta)$~\citep{Polson2013}.

The resulting augmented posterior $\pi(\dif z,\dif \beta)$ admits a two-block CAVI scheme given by
\begin{equation} \label{eq:cavilogit}
\begin{aligned}
    \mu_\star^1[\mu^2](\dif z ) &=  \bigotimes_{i=1}^n \pg(\dif z_i; 1, c_{i,\mu^2}), \quad \mu_\star^2[\mu^1](\dif \beta) &= \cal{N}(\dif \beta; m_{\mu^1},Q_{\mu^1}^{-1})
\end{aligned}
\end{equation}
with $c_{i,\mu^2}:= \left(\bb{E}_{\mu^2}[(x_i^\intercal\beta)^2]\right)^{1/2} \quad m_{\mu^1} = Q_{\mu^1}^{-1}m, \quad  Q_{\mu^1}=X^\intercal \diag(\bb{E}_{\mu^1}[Z_i])X + Q_0, \quad m=(X^\intercal(y-1/2) + Q_0 m_0)$. This coincides with the celebrated algorithm given in \citet{Jaakkola2000}, see \citet[Lemma 1]{Durante2019} for details on this equivalence.
The target distribution $\pi$ in this case is log-concave, but not strongly so. Nevertheless, our exponential convergence results apply.
Recall that $\bb{E}[Z]=\phi(c):=\tanh(c/2)(1/2c)$ when $Z\sim \pg(1,c)$~\citep{Polson2013}. Define the second moment matrix of $\mu^2_\star[\mu^1]$: $S(c)=Q(c)^{-1}+m(c)m(c)^\intercal$, where $Q(c)=X^\intercal \diag(\phi(c_i)) X + Q_0$, $m(c)=Q(c)^{-1}m$.
We have the following convergence result.
\begin{proposition}\label{cor:cavibayeslogit}
Let $\varepsilon > 0$, let $J_{i,\varepsilon}:=\biggl[(c_{i,\star}-\sqrt{x_i^\intercal Q_\star^{-1} x_i}\varepsilon)_+,\,(c_{i,\star}+\sqrt{x_i^\intercal Q_\star^{-1} x_i}\varepsilon)\biggr]$, and define
\begin{align*}
    A_{\varepsilon}:=Q_\star^{-1/2}X^\intercal \diag\Bigg( \frac{\sup_{c\in \times_{i=1}^n J_{i,\varepsilon}}|\phi'(c)|^2x_i^\intercal S(c)x_i}{\phi(c_{i,\star})}\Bigg) X Q_\star^{-1/2}.
\end{align*}
Suppose that the starting distribution satisfies $\mu_0^2\in \ball{\mu_\star^2}{\varepsilon}$. Then, the iterate $(\mu_k^2)_{k\geq 0}$ from the CAVI algorithm~\eqref{eq:cavilogit} for Bayesian Logit Regression converges exponentially fast in Wasserstein-$2$ distance with rate $r_\varepsilon$ if
\begin{align} \label{eq:cavilogitrate}
    r_\varepsilon^2:=\lambda_{\max}(A_\varepsilon) <1.
\end{align}
\end{proposition}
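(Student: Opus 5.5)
We apply Theorem~\ref{thm:secondcoordinate} with $p=2$ to the composed map $\mu^2\mapsto\mu_\star^2[\mu_\star^1[\mu^2]]$. As in the probit case (Remark~\ref{remark:inducednorms}), we work on $\cal{B}=\r^p$ with the norm induced by $Q_\star:=Q_{\mu_\star^1}=Q(c_\star)$, where $c_{i,\star}:=c_{i,\mu_\star^2}$. Since $\mu_\star^2=\cal{N}(m(c_\star),Q_\star^{-1})$ is Gaussian with precision $Q_\star$, Lemma~\ref{lemma:useful} gives $\lambda_2=1$ in this geometry. It then suffices to establish Fisher-smoothness of the composed map on $\ball{\mu_\star^2}{\varepsilon}$ with $L_2^2=\lambda_{\max}(A_\varepsilon)$. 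Since $\mu_\star^1[\mu^2]$ depends on $\mu^2$ only through the vector $c=c_{\mu^2}$, and $\mu_\star^2[\mu_\star^1[\mu^2]]=\cal{N}(m(c),Q(c)^{-1})$, the problem reduces to finite-dimensional estimates.

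\textbf{Step 1: localisation.} We show that $\mu^2\in\ball{\mu_\star^2}{\varepsilon}$ implies $c_{\mu^2}\in\times_i J_{i,\varepsilon}$. Each $c_{i,\mu^2}=\|x_i^\intercal\beta\|_{L^2(\mu^2)}$ is an $L^2$ norm of the linear functional $\beta\mapsto x_i^\intercal\beta$. Under an optimal coupling $(\beta,\beta')$ of $\mu^2$ and $\mu_\star^2$, the reverse triangle inequality and Cauchy--Schwarz in the dual norm give
\begin{align*}
|c_{i,\mu^2}-c_{i,\star}|\leq \|x_i^\intercal(\beta-\beta')\|_{L^2}\leq \sqrt{x_i^\intercal Q_\star^{-1}x_i}\,\bb{W}_2(\mu^2,\mu_\star^2)
\end{align*}
(with $\bb{W}_2$ measured in the $Q_\star$-norm). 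This gives the intervals $J_{i,\varepsilon}$ and the positive-part truncation.

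\textbf{Step 2: the Fisher information.} For Gaussians $\varrho=\cal{N}(a,A^{-1})$ and $\gamma=\cal{N}(b,B^{-1})$ we have $\nabla\log(\dif\varrho/\dif\gamma)(\beta)=-(A-B)\beta - Ab + Bb'$, up to the appropriate affine terms. Rewriting this as $-(A-B)\beta+(\text{const})$ and using $Q(c)m(c)=m$ for both measures, the score difference is $(Q_\star-Q(c))\beta$ with $\beta\sim\mu_\star^2[\mu_\star^1[\mu^2]]$. Taking the dual norm,
\begin{align*}
\cal{I}=\bb{E}\,\big|Q_\star^{-1/2}(Q(c)-Q_\star)\beta\big|^2,\qquad Q(c)-Q_\star=X^\intercal\diag(\phi(c_i)-\phi(c_{i,\star}))X.
\end{align*}
The second moment of $\beta$ under this law is $S(c)$. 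This is where $S(c)$ enters.

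\textbf{Step 3: bounding by $\lambda_{\max}(A_\varepsilon)$ — the main obstacle.} Write $D=\diag(\phi(c_i)-\phi(c_{i,\star}))$. Then $\cal{I}=\tr\big(Q_\star^{-1/2}X^\intercal D\,XS(c)X^\intercal D\,XQ_\star^{-1/2}\big)$. The plan is to bound $XS(c)X^\intercal$ through its diagonal, $x_i^\intercal S(c)x_i$, using a Cauchy--Schwarz inequality of the form $(\sum_i d_i x_i u_i)(\ldots)^\intercal\preceq \big(\sum_i w_i x_ix_i^\intercal\big)\big(\sum_i d_i^2u_i^2/w_i\big)$ with weights $w_i=\phi(c_{i,\star})$.

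Since $X^\intercal\diag(\phi(c_{i,\star}))X\preceq Q_\star$, this yields
\begin{align*}
\cal{I}\leq \sum_i \frac{d_i^2\, x_i^\intercal S(c)x_i}{\phi(c_{i,\star})}.
\end{align*}
The mean value theorem then gives $|d_i|\leq\sup_{J_{i,\varepsilon}}|\phi'|\,|c_i-c_{i,\star}|$, and Step~1 gives $|c_i-c_{i,\star}|\leq |x_i^\intercal v|$ in coupling form, with $\bb{E}|Q_\star^{1/2}v|^2=\bb{W}_2^2$. Reassembling the sum as the quadratic form $\bb{E}\,v^\intercal X^\intercal\diag(\cdot)Xv$ gives the bound $\lambda_{\max}(A_\varepsilon)\,\bb{W}_2^2(\mu^2,\mu_\star^2)$.

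Getting the order of these Cauchy--Schwarz steps right, so that exactly $A_\varepsilon$ emerges, is the delicate part. In particular, the inequality $|c_i-c_{i,\star}|^2\leq\bb{E}(x_i^\intercal(\beta-\beta'))^2$ must be combined with the sum over $i$ before the supremum over $c$ is taken. With $L_2^2=\lambda_{\max}(A_\varepsilon)<1=\lambda_2^2$, Theorem~\ref{thm:secondcoordinate} concludes the proof with rate $r_\varepsilon$.
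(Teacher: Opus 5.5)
Your proposal is correct and follows the paper's proof essentially step for step: Theorem~\ref{thm:secondcoordinate} in the $Q_\star$-geometry with $\lambda_2=1$, the score identity $-(Q(c)-Q_\star)\beta$ from $Q(c)m(c)=m$, the localisation of $c_{\mu^2}$ in $\times_i J_{i,\varepsilon}$, the mean value theorem, and reassembly under a single optimal coupling. Your pointwise weighted Cauchy--Schwarz with $X^\intercal\diag(\phi(c_{i,\star}))X\preceq Q_\star$ is just the dual form of the paper's trace-cyclicity step with $XQ_\star^{-1}X^\intercal\preceq \diag(\phi(c_{i,\star}))^{-1}$; the one caveat, which the paper shares, is that the mean-value point differs from the $c$ at which $S(c)$ is evaluated, so the supremum in $A_\varepsilon$ must be read as decoupled, $\sup_{J_{i,\varepsilon}}|\phi'|^2\cdot\sup_{c}x_i^\intercal S(c)x_i$, which is exactly what your separate bound on $|d_i|$ gives.
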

\begin{proof}
We apply Theorem~\ref{thm:secondcoordinate} with $p=2$, working in the norm induced by $Q_\star$ on $\r^p$. Log-concavity of $\mu_\star^2$ and Lemma~\ref{lemma:useful} give the transportation-information inequality with constant $\lambda_{2}=1$ in this norm.
It remains to find a $L_2>0$ such that $\cal{I}(\mu_\star^2[\mu_\star^1[\mu^2]]\parallel \mu_\star^2)\leq L_2 \bb{W}_2^2(\mu^2,\mu_\star^2)$ when $\mu^2\in \ball{\mu_\star^2}{\varepsilon}$.

For this proof, define $T(\mu^2)=\mu_\star^2[\mu_\star^1[\mu^2]]$ (so that $T(\mu^2)=\cal{N}(m(c_{\mu^2}),Q(c_{\mu^2})^{-1})$), $D(c_{\mu^2}):=\diag(\phi(c_{i,\mu^2}))$, $\Delta Q:=X^\intercal (D(c_{\mu^2})-D(c_\star))X$, and finally
\begin{align*}
	a_i &:= \sup_{c\in\times_j J_{j,\varepsilon}}\frac{|\phi'(c)|^2\,x_i^\intercal S(c)\,x_i}{\phi(c_{i,\star})},
	\qquad
	A_\varepsilon := Q_\star^{-1/2}X^\intercal\diag(a_i)\,X Q_\star^{-1/2}.
\end{align*}
Since $Q(c)m(c)=m=Q_\star m_\star$, we have
\begin{align*}
	\nabla_\beta \log \frac{T(\mu^2)}{\mu^2_\star}(\beta) = -(Q(c_{\mu^2})-Q_\star)\beta
\end{align*}
and it readily follows that the relative Fisher information in the $Q_\star$-metric is therefore
\begin{align*}
	\cal{I}(T(\mu^2) \parallel \mu_\star^2)
	&= \bb{E}_{T(\mu^2)}\bigl[\beta^\intercal\Delta Q Q_\star^{-1}\Delta Q \beta\bigr]
	= \tr\bigl(Q_{\star}^{-1} \Delta Q S(c_{\mu^2}) \Delta Q \bigr)
\end{align*}
Applying cyclicity of the trace and then the matrix bound $XQ_\star^{-1}X^\intercal\preceq D_\star^{-1}$, we obtain that
\begin{align*}
    \cal{I}(\mu_\star^2[\mu_\star^1[\mu^2]] \parallel \mu_\star^2)
    = \tr\bigl(D_\Delta X S(c_{\mu^2}) X^\intercal D_\Delta XQ_\star^{-1}X^\intercal\bigr) \nonumber
    \leq \sum_{i=1}^n \frac{(\phi(c_{i,\mu^2})-\phi(c_{i,\star}))^2}{\phi(c_{i,\star})} x_i^\intercal S(c_{\mu^2}) x_i.
\end{align*}
Now recall $c_{i,\mu^2}:=(\bb{E}_{\mu^2}[(x_i^\intercal\beta)^2])^{1/2}$.
For any coupling of $\mu^2$ and $\mu^2_\star$, the $\cal{L}^2$ reverse triangle inequality gives $|c_{i,\mu^2}-c_{i,\star}|\leq\bb{E}[|x_i^\intercal(\beta-\beta')|^2]^{1/2}$. Optimising over couplings and applying the Cauchy--Schwarz inequality with respect to the $Q_\star$ metric, we obtain that
\begin{align*}
    |c_{i,\mu^2}-c_{i,\star}|
    \leq (x_i^\intercal Q_\star^{-1}x_i)^{1/2} \bb{W}_2(\mu^2,\mu^2_\star)
    < (x_i^\intercal Q_\star^{-1}x_i)^{1/2} \varepsilon,
\end{align*}
so that $c_{i,\mu^2}$ lies in the interval $J_{i,\varepsilon}$ for all $\mu^2\in \ball{\mu_\star^2}{\varepsilon}$.
By the mean value theorem, $\phi(c_{i,\mu^2})-\phi(c_{i,\star})=\phi'(\xi_i)(c_{i,\mu^2}-c_{i,\star})$ for some $\xi_i\in J_{i,\varepsilon}$.
Substituting the mean value bound into the preceding bound for $\cal{I}$ and taking the optimal coupling of $\mu^2$ and $\mu^2_\star$, we obtain that
\begin{align*}
     \cal{I}(\mu_\star^2[\mu_\star^1[\mu^2]] \parallel \mu_\star^2)
    &\leq \sum_{i=1}^n a_i\bb{E}\bigl[|x_i^\intercal(\beta-\beta')|^2\bigr]
    = \bb{E}\bigl[(\beta-\beta')^\intercal X^\intercal\diag(a_i) X(\beta-\beta')\bigr] \\
    &\leq \lambda_{\max}(A_\varepsilon) \bb{E} \bigl[(\beta-\beta')^\intercal Q_\star(\beta-\beta')\bigr]
    = \lambda_{\max}(A_\varepsilon) \bb{W}_2^2(\mu^2,\mu_\star^2)
\end{align*}
as required.
\end{proof}
The statement of Proposition~\ref{cor:cavibayeslogit} requires an $\varepsilon$-close start, and this is related to the lack of global strong log-concavity of the posterior in logit regression~\citep{Chak2026}. We now study in detail the asymptotic case where $\varepsilon\to 0$. This makes clear that the convergence rate in Proposition~\ref{cor:cavibayeslogit} does not degrade when the number of data points $n$ and/or the dimensionality $p$ is large. In fact, since
\begin{align*}
c_{i,\star}^2 = \bb{E}_{\mu_\star^2}[(x_i^\intercal \beta)^2] = (x_i^\intercal m_\star)^2 + x_i^\intercal Q_\star^{-1}x_i = x_i^\intercal S(c_{\star})x_i,
\end{align*}
we see that the implied asymptotic convergence rate as $\varepsilon \to 0$ is characterised as
\begin{align} \label{eq:cavilogitasymptrate}
r_\star^2
&= \lambda_{\max}\Big(Q_\star^{-1/2}X^\intercal \diag\Big(\frac{\phi'(c_{i,\star})^2 c_{i,\star}^2}{\phi(c_{i,\star})}\Big) X Q_\star^{-1/2}\Big).
\end{align}
To our knowledge, this is the first available convergence rate result for the Jaakkola--Jordan algorithm. The closest result in spirit we are aware of is \citet{Lee2024}'s mixing time result for the Gibbs sampler.

The following is an analogue of Corollary~\ref{cor:cavibayesprobitrates} for logistic regression, and examines two relevant choices for the prior precision in Bayesian statistics: the g-prior, and a direct scaling of the prior precision in $p$.
\begin{corollary} The following statements hold true.
\begin{itemize}
    \item Let $Q_0=(p/c)I_p$. Then,
    \begin{align*}
        r_\star^2 \leq \frac{(c/p)\lambda_{\max}(X^\intercal X)}{4+(c/p)\lambda_{\max}(X^\intercal X)} < 1.
    \end{align*}
    \begin{align*}
        \lim r_\star^2 \leq  \frac{c(1+\sqrt{a})^2}{4+c(1+\sqrt{a})^2} < 1
    \end{align*}
    \item  Let $Q_0=(X^\intercal X/g + c I_{p})$. Then,
    \begin{align*}
        r_\star^2 \leq \frac{\lambda_{\max}(X^\intercal X)}{(1+4/g)\lambda_{\max}(X^\intercal X)+4c} < 1.
    \end{align*}
    In particular, suppose that $X_{i,j}$ are i.i.d.~with unit variance. If $n,p\to \infty$ and $n/p\to a\in(0,\infty)$, then we have
    \begin{align*}
        \lim r_\star^2 \leq  \frac{1}{(1+4/g)} < 1
    \end{align*}
\end{itemize}
\end{corollary}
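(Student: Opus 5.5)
The plan is to reduce the bound on $r_\star^2$ in \eqref{eq:cavilogitasymptrate} to a Loewner-order comparison with $Q_\star$ itself, so that the argument mirrors Corollary~\ref{cor:cavibayesprobitrates}. Write $D_\star:=\diag(\phi(c_{i,\star}))$, so that $Q_\star=X^\intercal D_\star X+Q_0$, and write $W:=\diag\big(\phi'(c_{i,\star})^2c_{i,\star}^2/\phi(c_{i,\star})\big)$.

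The key scalar step is the inequality
\begin{align*}
|c\,\phi'(c)|\leq\phi(c) \quad \text{for all } c\geq 0,
\end{align*}
which is equivalent to $W\preceq D_\star$. To prove it, differentiate $\phi(c)=\tanh(c/2)/(2c)$ to get $c\phi'(c)=-\phi(c)+\tfrac14\sech^2(c/2)$. It then suffices to check $0\leq\tfrac14\sech^2(c/2)\leq\phi(c)$. With $u=c/2$ the upper inequality reads $\tanh(u)/u\geq\sech^2(u)$, i.e.\ $\sinh u\cosh u\geq u$, which is elementary. I also record that $\phi$ is decreasing with $\phi(0)=1/4$, so $D_\star\preceq\tfrac14 I_n$.

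With $W\preceq D_\star$ in hand,
\begin{align*}
A_0:=Q_\star^{-1/2}X^\intercal W X Q_\star^{-1/2}\preceq Q_\star^{-1/2}X^\intercal D_\star XQ_\star^{-1/2}=I_p-Q_\star^{-1/2}Q_0Q_\star^{-1/2}.
\end{align*}
Hence $r_\star^2\leq 1-\lambda_{\min}(Q_\star^{-1/2}Q_0Q_\star^{-1/2})$. It remains to bound this generalised eigenvalue from below using $Q_\star\preceq \tfrac14X^\intercal X+Q_0$.

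For $Q_0=(p/c)I_p$, we have $\lambda_{\max}(Q_\star)\leq p/c+\lambda_{\max}(X^\intercal X)/4$, so $\lambda_{\min}\geq (p/c)/(p/c+\lambda_{\max}(X^\intercal X)/4)$. This gives exactly $r_\star^2\leq (c/p)\lambda_{\max}(X^\intercal X)/\big(4+(c/p)\lambda_{\max}(X^\intercal X)\big)$.

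For the g-prior, $Q_0$ and the upper bound $(1/4+1/g)X^\intercal X+cI_p$ on $Q_\star$ are both functions of $X^\intercal X$ and so share an eigenbasis. We therefore get $Q_0\succeq t\,Q_\star$ with
\begin{align*}
t=\min_{s\in\sigma(X^\intercal X)}\frac{s/g+c}{s(1/4+1/g)+c}.
\end{align*}
Then $1-t=\max_s s/\big((1+4/g)s+4c\big)$, and since this function is increasing in $s$ the maximum is attained at $s=\lambda_{\max}(X^\intercal X)$, which yields the stated bound.

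The limits follow as in Corollary~\ref{cor:cavibayesprobitrates}. In the first case use $\lambda_{\max}(X^\intercal X)/p\to(1+\sqrt a)^2$ \citep{Bai1993}. In the second case use that $\lambda_{\max}(X^\intercal X)\to\infty$, so the $4c$ term becomes negligible. The main obstacle I expect is the scalar inequality $|c\phi'|\leq\phi$; everything else is linear algebra already used for probit.
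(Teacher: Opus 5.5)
Your proof is correct. Its core is the same as the paper's: the scalar inequality $|c\,\phi'(c)|\le\phi(c)$ lets you replace the weights by $D_\star$, you use $\phi\le 1/4$, and Bai--Yin gives the limits. You also prove the scalar inequality, via $c\phi'(c)=-\phi(c)+\tfrac14\sech^2(c/2)$, whereas the paper only asserts it.

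The difference is in the final linear-algebra step. The paper writes $r_\star^2\le\lambda_{\max}\bigl((Q_0+B)^{-1}B\bigr)$ with $B=X^\intercal D_\star X$. For $Q_0=(p/c)I_p$ it then applies the spectral mapping theorem to $h(B)=((p/c)I_p+B)^{-1}B$, and only afterwards uses $B\preceq X^\intercal X/4$. That step needs $Q_0$ to commute with $B$. This holds when $Q_0$ is a multiple of the identity, but not for the g-prior. There the paper only says ``the same reasoning'' applies, although $X^\intercal D_\star X$ and $X^\intercal X$ generally do not commute.

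Your reformulation $r_\star^2\le 1-\lambda_{\min}(Q_\star^{-1/2}Q_0Q_\star^{-1/2})$ avoids this. You first replace $Q_\star$ by its Loewner upper bound $(1/4+1/g)X^\intercal X+cI_p$, which does commute with $Q_0$, and only then diagonalise. This covers both priors with one argument and genuinely justifies the g-prior case. The cost is that it is slightly less direct than the paper's one-line spectral-mapping computation in the isotropic case.
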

\begin{proof}
Let $Q_0=(p/c)I_d$. From~\eqref{eq:cavilogitasymptrate}, use the inequality $|\phi'(t) t| < \phi(t)$ and the expression $Q_\star=X^\intercal \diag(\phi(c_{i,\star}))X+Q_0$ to obtain that
\begin{align*}
    r_\star^2 &\leq \lambda_{\max}\Big(Q_\star^{-1/2}X^\intercal \diag(\phi(c_{i,\star})) X Q_\star^{-1/2}\Big) \\
    &=\lambda_{\max}\Big( ((p/c)I_d+X^\intercal \diag(\phi(c_{i,\star})) X)^{-1} X^\intercal \diag(\phi(c_{i,\star})) X \Big)
\end{align*}
Similarly to the probit case~\eqref{eq:sad89ahdsbfvf}, we apply the spectral mapping Theorem with $h(A):=((p/c)I_d+A)^{-1}A$, obtaining that
\begin{align*}
    r_\star^2 \leq \max_{\lambda\in\sigma(X^\intercal \diag(\phi(c_{i,\star})) X)}h(\lambda)= \frac{\lambda_{\max}(X^\intercal \diag(\phi(c_{i,\star}))X)}{\lambda_{\max}(X^\intercal \diag(\phi(c_{i,\star}))X)+(p/c)}.
\end{align*}
Since $\phi(x)\leq 1/4$, we have $X^\intercal \diag(\phi(c_{i,\star}))X\preceq X^\intercal X / 4$ and the first result readily follows.
Similarly to Corollary~\ref{cor:cavibayesprobitrates}, the limiting expression when $n,p\to \infty$ and $n/p\to \gamma\in(0,\infty)$ follows from the fact that $\lambda_{\max}(X^\intercal X)/p \to (1+\sqrt{\gamma})^2$~\citep[Theorem~2]{Bai1993}. For $Q_0=(X^\intercal X/g + c I_{p})$ we apply the same reasoning.
\end{proof}

Notably, the bounds for CAVI in logit problems suggest that the algorithm is faster to fit binary regression models with logistic link than with probit link; Figure~\ref{fig:cavilogit} corroborates these findings numerically.
\begin{figure}[!tbp]
\caption{Log-Wasserstein-$2$ convergence of CAVI in Bayesian Logistic Regression with a $g$-prior (averaged across 100 simulations). \textbf{Left Panel:} Convergence behaviour for a fixed $g$ across varying values of $(n, p)$. The rate of convergence is independent of both sample size and dimensionality. \textbf{Right Panel:} Convergence behaviour across varying values of $g$ at a fixed $(n, p)$. The convergence rate exhibits an inverse relationship with the magnitude of $g$. The  convergence speed of CAVI in logit regression is greater than in probit problems, as predicted by our results. }
\label{fig:cavilogit}
\centering
\begin{minipage}[b]{0.45\textwidth}
   \includegraphics[width=\textwidth]{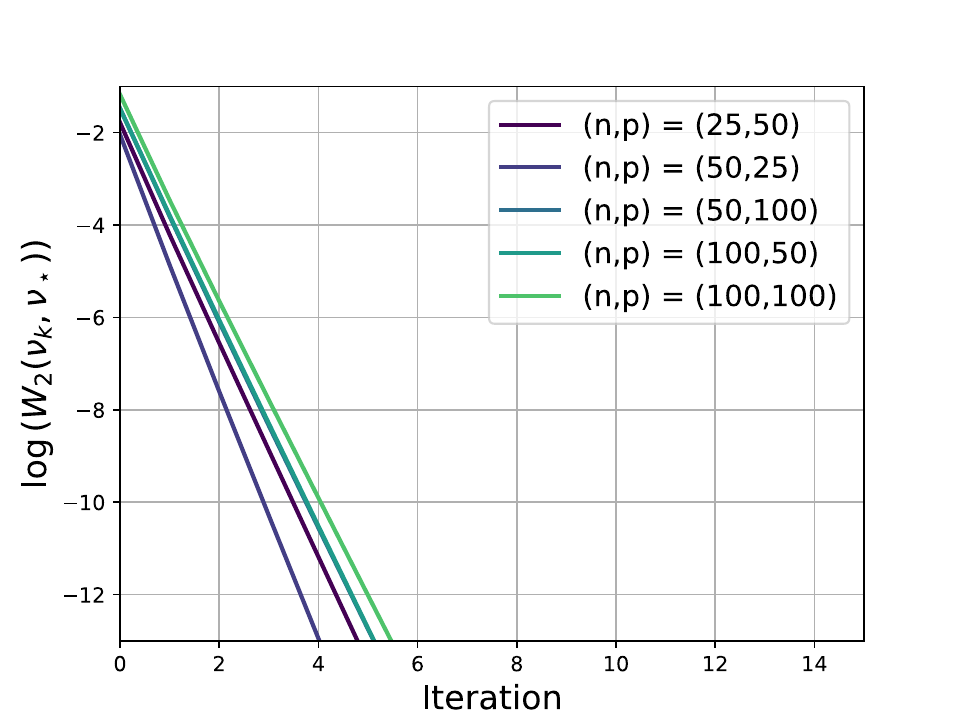}
\end{minipage}
\hfill
\begin{minipage}[b]{0.45\textwidth}
   \includegraphics[width=\textwidth]{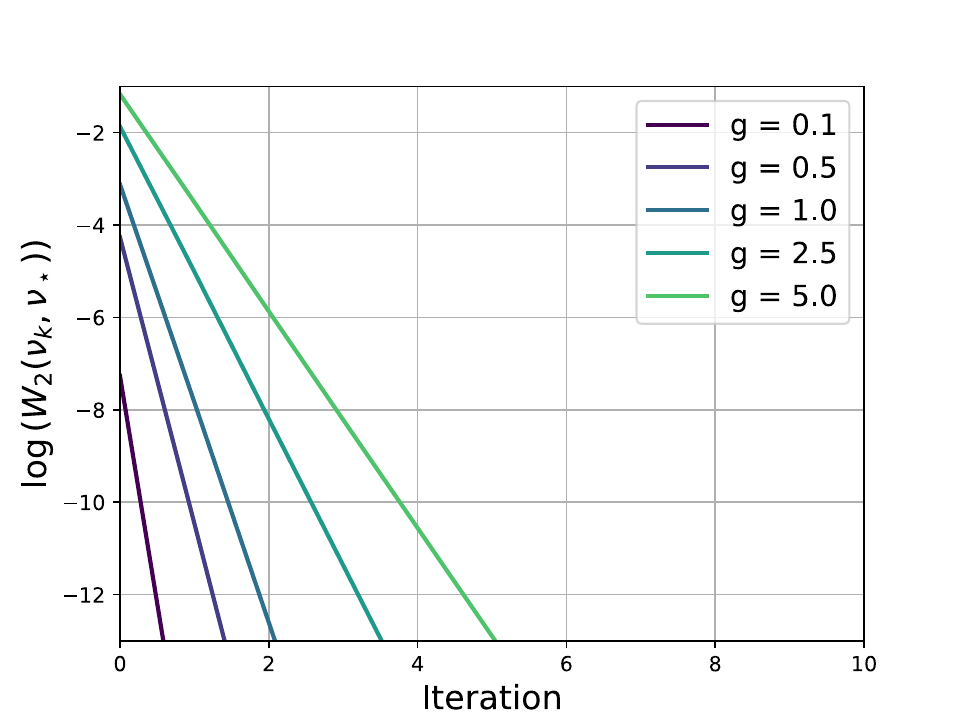}
\end{minipage}
\end{figure}

\subsection{Pairwise Markov Random Fields} \label{sec:pmrf}
Let $(\cal{V},\cal{E})$ be a finite graph with $n$ vertices, and let $\partial i = \{j\in\cal{V}:(i,j) \in \cal{E}\}$ denote the neighbours of the vertex $i$. Fix a set $\mathcal{X}$, and consider spin systems with state space $\mathcal{X}^{n}$ and a probability distribution proportional to 
\begin{align} \label{eq:dnfb333}
	\pi(x_1,\dots,x_n)\propto \exp\left(-\sum_{i\in\mathcal{V}}V_{i}\left(x_{i}\right)-\sum_{\left(i,j\right)\in\mathcal{E}}W_{i,j}\left(x_{i},x_{j}\right)\right),
\end{align}
where $\{ V_{i}\} $ denote `site potentials' and $\{ W_{i,j}\} $ denote `interaction potentials'. Such pairwise Markov random fields models form a rich class of examples in statistical physics~\citep{Binder2012}. We consider a coordinate ascent variational inference approach to approximate~\eqref{eq:dnfb333} by a per-vertex factorised probability distribution of the form $\mu^1\otimes\dots\otimes \mu^n\in\cal{P}^{\otimes n}(\cal{X})$. The CAVI updates read
\begin{align} \label{eq:bvbvbv6s}
	\mu^i_\star[\nu^{-i}](x_i) \propto \exp\Big(-V_i(x_i)-\sum_{j\in \partial i}\bb{E}_{\nu^j}[W_{ij}(x_i,X_j)] \Big).
\end{align} 
We consider the case of discrete and continuous $\cal{X}$. In the discrete setting,  we consider the Ising and dense Curie--Weiss models. The same framework and similar straightforward computations yield results for other structured graphs, such as bipartite graphs (as relevant for restricted Boltzmann machines). In the continuous setting, we consider the case of Gaussian free fields, relaxations thereof, and the Ginzburg--Landau model.

\subsubsection{Discrete spin systems}

\paragraph{Ising model.} 

In the Ising model, we have $\mathcal{X}=\{ \pm1\} $, $V_{i}(x_{i})=h_{i}\cdot x_{i}$
for some external field $h_{i}$, $W_{i,j}\left(x_{i},x_{j}\right)=J_{i,j}\cdot x_{i}\cdot x_{j}$, and the edges connects only neighbouring vertices.  
By Lemma~\ref{lemma:usefuldiscrete}, we know that $\lambda_i= 4$ for all $i\in [n]$. Moreover, we have
\begin{align*}
	\Delta^{(2)} = J_{ij}[x_i'x_j'+x_i''x_j''-x_i'x_j''-x_i''x_j']=J_{ij}(x_i'-x_i'')(x_j'-x_j'')
\end{align*}
and since $x_i',x_i'',x_j',x_j''\in\{-1,+1\}$, the largest possible absolute value of $\Delta^{(2)}$ is $4|J_{ij}|$ and thus $\delta_{ij}^{(2)}(f)=4|J_{ij}|$ and $L_{ij}= |J_{ij}|$ (Lemma~\ref{lemma:discretecrosssmooth}). Consider then the rate matrix $R$ with elements $R_{ij}=|J_{ij}|$ if $i\neq j$ and $0$ otherwise. 
An immediate application of Theorem~\ref{thm:cavidiscrete} yields the following.
\begin{proposition}
	The sequential, random, and parallel coordinate ascent variational inference algorithm for the Ising model converges globally fast in total variation distance with rates given by Theorems~\ref{thm:Dparallel},~\ref{thm:Drandom}, or~\ref{thm:Dsequential} if $\rho(R)<1$.
\end{proposition}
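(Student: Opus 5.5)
The plan is to apply Theorem~\ref{thm:cavidiscrete} directly. Its two hypotheses are verified globally, that is with $\varepsilon_i=\infty$, and its conclusions give the stated rates. These rates are $\rho(R)$ for parallel, $\rho(\tfrac1M R+\tfrac{M-1}{M}I)$ for random scan and $\rho((I-R_L)^{-1}R_U)$ for sequential CAVI, with $M=n$. We take $\mathcal{X}^i=\{\pm1\}$ with the discrete metric, so that $\mathbb{W}_1=\mathsf{TV}$, and the potential is $f(x)=\sum_i h_ix_i+\sum_{(i,j)\in\mathcal{E}}J_{ij}x_ix_j$, reading $J_{ij}=0$ when $(i,j)\notin\mathcal{E}$.

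\textbf{Step 1: transportation-information.} Each fixed-point marginal $\mu_\star^i$ is a probability measure on the finite set $\{\pm1\}$. By Lemma~\ref{lemma:usefuldiscrete}, it satisfies the discrete transportation-information inequality with $\lambda_i=4$, whatever the fixed point is, so nothing needs to be known about the fixed point. Any $\mu\ll\mu_\star^i$ is covered; if $\mu_\star^i$ had an atom of mass zero the inequality is vacuous there, and in any case the CAVI updates~\eqref{eq:bvbvbv6s} always have full support.

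\textbf{Step 2: Fisher-smoothness.} Lemma~\ref{lemma:discretecrosssmooth} gives discrete Fisher-smoothness on all of $\times_{j\neq i}\mathcal{P}(\mathcal{X}^j)$ with constants $\delta^{(2)}_{ij}(f)$. Since only the term $J_{ij}x_ix_j$ involves both $x_i$ and $x_j$, the computation displayed before the statement gives $\Delta^{(2)}=J_{ij}(x_i'-x_i'')(x_j'-x_j'')$, whose supremum is $4|J_{ij}|$. The rate matrix entry is therefore $R_{ij}=4|J_{ij}|/4=|J_{ij}|$, which matches the $R$ in the statement.

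For safety I would double-check Step 2 by a direct computation. Since $\mu_\star^i[\nu^{-i}](x_i)\propto\exp(-h_ix_i-x_i\sum_j J_{ij}\mathbb{E}_{\nu^j}X_j)$, we have
\begin{align*}
\osc\Big(\log\tfrac{\mu_\star^i[\mu^{-i}]}{\mu_\star^i[\nu^{-i}]}\Big)=2\Big|\sum_j J_{ij}(\mathbb{E}_{\mu^j}X_j-\mathbb{E}_{\nu^j}X_j)\Big|\le 4\sum_j|J_{ij}|\,\mathsf{TV}(\mu^j,\nu^j),
\end{align*}
using $|\mathbb{E}_\mu X-\mathbb{E}_\nu X|\le 2\,\mathsf{TV}$ on $\{\pm1\}$. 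This confirms $L_{ij}=4|J_{ij}|$ independently of the unproved lemma.

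\textbf{Step 3: conclusion.} Because the hypotheses hold with $\varepsilon_i=\infty$, the balls $\ball{\mu_\star^i}{c\delta_i}$ can be taken to be all of $\mathcal{P}(\{\pm1\})$: every measure has $\mathsf{TV}\le1$, so a sufficiently large $c$ works. The three contraction inequalities of Theorem~\ref{thm:cavidiscrete} then hold from any initialisation, and iterating them gives exponential convergence. For the sequential scheme, Stein--Rosenberg guarantees that the rate is below one when $\rho(R)<1$. For random scan, the matrix $\tfrac1M R+\tfrac{M-1}{M}I$ has spectral radius $(\rho(R)+M-1)/M<1$ by Perron--Frobenius, since $R\ge0$.

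\textbf{Main obstacle.} The only subtle point is the existence and identification of the fixed point $\mu_\star$, since the theorem presupposes one. Under $\rho(R)<1$, parallel CAVI is a contraction of the complete space $\prod_i\mathcal{P}(\{\pm1\})$ in the weighted metric $\max_i\mathsf{TV}_i/\delta_i$. Banach's theorem then gives a unique fixed point, to which all the schemes converge globally.
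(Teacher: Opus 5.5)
Your proof is correct and is essentially the paper's argument: $\lambda_i=4$ from Lemma~\ref{lemma:usefuldiscrete}, $\delta^{(2)}_{ij}(f)=4|J_{ij}|$ from the explicit form of $\Delta^{(2)}$, hence $R_{ij}=|J_{ij}|$, followed by a direct application of Theorem~\ref{thm:cavidiscrete} with the hypotheses holding globally. Your additions are sound details the paper leaves implicit: the direct oscillation check (which correctly gives $L_{ij}=4|J_{ij}|$, whereas the paper's text writes $L_{ij}=|J_{ij}|$ but arrives at the same $R$), the choice of $c$ large enough that the balls exhaust $\mathcal{P}(\{\pm1\})$, and the Banach argument giving a unique fixed point.
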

In particular, this shows that strong interactions between nodes slow down the coordinate ascent variational inference algorithm, other than making the variational approximation itself poor. This is a manifestation of the accuracy-convergence duality (cf. Remark~\ref{remark:accconv}). A well-known remedy, at least for Gibbs sampling, consists in blocking the vertices together~\citep{Swendsen1987}, as to attenuate the strength of inter-node interactions. Note that similar results are obtained in e.g.\ \citet{jain2018mean}.

\paragraph{Dense Curie--Weiss model.} In the dense Curie--Weiss model, we take $\mathcal{X}=\{ \pm1\} $, $V_{i}(x_{i})=h_{i}\cdot x_{i}$, $W_{i,j}(x_{i},x_{j})=J\cdot x_{i}\cdot x_{j}$, with $J=(\beta/n)$ for some $\beta>0$, and we consider a complete graph. 
From the previous computation in the Ising model, we know $\delta^{(2)}_{ij}(f)=4|J|$ when $i$ and $j$ are connected. Since the graph is connected,  in the Curie--Weiss model we then have $\delta^{(2)}_{ij}(f)=4\beta/n$ for all $i\neq j$. 
\begin{proposition}
	The sequential, random, and parallel coordinate ascent variational inference algorithm for the dense Curie--Weiss model converges globally fast in total variation distance with rates given by Theorems~\ref{thm:Dparallel},~\ref{thm:Drandom}, or~\ref{thm:Dsequential} if $|\beta|\smash{\frac{n-1}{n}}<1$.
\end{proposition}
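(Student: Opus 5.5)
The plan is to derive this from Theorem~\ref{thm:cavidiscrete}, in the same way as the Ising proposition, by computing the constants $\lambda_i$ and $L_{ij}$ and then bounding the spectral radius of the rate matrix. For the curvature, Lemma~\ref{lemma:usefuldiscrete} gives that every fixed-point marginal $\mu_\star^i$ on $\{\pm1\}$ satisfies a discrete transportation-information inequality with $\lambda_i=4$. This holds whatever the fixed point is, so no information about $\mu_\star$ is needed. For the smoothness, the Ising computation gives $\Delta^{(2)}=J(x_i'-x_i'')(x_j'-x_j'')$, hence $\delta^{(2)}_{ij}(f)=4|J|=4|\beta|/n$ for every pair $i\neq j$. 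Lemma~\ref{lemma:discretecrosssmooth} then shows that each optimality map~\eqref{eq:bvbvbv6s} is discretely Fisher-smooth on the whole product space, that is with $\varepsilon_j=\infty$.

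The normalisation needs care. The Ising paragraph records $L_{ij}=|J_{ij}|$ and $R_{ij}=|J_{ij}|$, which is consistent with $R_{ij}=\delta^{(2)}_{ij}/\lambda_i=4|J_{ij}|/4$. I will therefore take $R_{ij}=|\beta|/n$ for $i\neq j$ and $R_{ii}=0$, so that $R=\tfrac{|\beta|}{n}(\mathbf{1}\mathbf{1}^\top-I_n)$. This matrix is explicit: $\mathbf{1}$ is an eigenvector with eigenvalue $|\beta|(n-1)/n$, and the remaining eigenvalues equal $-|\beta|/n$. Hence $\rho(R)=|\beta|\tfrac{n-1}{n}$, which agrees with the Gershgorin row-sum bound. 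It follows that $\rho(R)<1$ exactly when $|\beta|\tfrac{n-1}{n}<1$.

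To finish, I take $\delta=\mathbf{1}$, which is a strictly positive exact eigenvector with $R\delta=\rho(R)\delta\prec\delta$. Since the smoothness holds globally, any $c>0$ is admissible and every initialisation lies in the required balls. The three parts of Theorem~\ref{thm:cavidiscrete} then give global exponential convergence of the parallel, random-scan and sequential CAVI in total variation, with rates $\rho(R)$, $\rho(\tfrac1nR+\tfrac{n-1}{n}I_n)$ and $\rho((I_n-R_L)^{-1}R_U)$; the last is below one by Stein--Rosenberg.

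The only real obstacle is this constant bookkeeping between $\delta^{(2)}$, $L_{ij}$ and $\lambda_i$. I will follow the convention already fixed in the Ising example, so that the stated threshold comes out exactly as $|\beta|(n-1)/n<1$.
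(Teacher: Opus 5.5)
Your proposal is correct and follows the paper's route: $\lambda_i=4$ from Lemma~\ref{lemma:usefuldiscrete}, $\delta^{(2)}_{ij}(f)=4|\beta|/n$ from the Ising computation, hence $R_{ij}=|\beta|/n$ with $\rho(R)=|\beta|\frac{n-1}{n}$ (exact eigenvector $\mathbf{1}$), and then Theorem~\ref{thm:cavidiscrete} with globally valid smoothness. Your normalisation $R_{ij}=\delta^{(2)}_{ij}/\lambda_i$ is the right one: the Ising paragraph's ``$L_{ij}=|J_{ij}|$'' disagrees with Lemma~\ref{lemma:discretecrosssmooth}, which gives $L_{ij}=4|J_{ij}|$, but its $R_{ij}=|J_{ij}|$ is correct, and you have kept the correct quantity.
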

Similar considerations to the Ising case also apply here. We note that in this instance, the influence-based convergence criterion correctly detects the known phase transition at $\beta = 1$; see e.g.\ Chapter 2 of \citet{friedli2017statistical}. 

\subsubsection{Continuous spin systems}

Consider now continuous-state spins $\cal{X} = \r^d$. These CAVI updates here might not be available in closed form for any choice of site and interaction potentials we consider next. In these cases, we understand the resulting algorithm as an idealised scheme that might be approximated with various methods~\citep{Wang2013}. 

\paragraph{Gaussian free field.} Consider the simplest setting where $V_i(x_i)=(\tau/2) |x_i|^2$ and $W_{ij}(x_i,x_j)=(J_{ij}/2)|x_i-x_j|^2$, $J_{ij}>0$. From the updates~\eqref{eq:bvbvbv6s} we observe $\lambda_i=\tau + \sum_{j\in \partial i} J_{ij}$ (Lemma~\ref{lemma:useful}), and by Lemma~\ref{lemma:crosssmooth} we have $L_{ij}=J_{ij}$. Consider the resulting rate matrix $R_{ij}=L_{ij}/\lambda_i$ $i\neq j$ and $R_{ii}=0$ as per Section~\ref{sec:Mblocks}. Notice that there always holds $\rho(R)<1$, and so any CAVI algorithm is guaranteed to be globally and exponentially convergent.
\begin{proposition}
	The sequential, random, and parallel coordinate ascent variational inference algorithm for the Gaussian free field model converges globally fast in Wasserstein-$1$ distance with rates given by Theorems~\ref{thm:Dparallel},~\ref{thm:Drandom}, or~\ref{thm:Dsequential}.
\end{proposition}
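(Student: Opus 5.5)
The plan is to verify the two hypotheses of Theorems~\ref{thm:Dparallel},~\ref{thm:Drandom} and~\ref{thm:Dsequential} globally, that is with $\varepsilon_i=\infty$, with $p=1$, and then to check that $\rho(R)<1$.

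\emph{Curvature.} From~\eqref{eq:bvbvbv6s}, for any $\nu^{-i}$ the update $\mu^i_\star[\nu^{-i}]$ has log-density
\begin{align*}
-\tfrac{\tau}{2}|x_i|^2-\sum_{j\in\partial i}\tfrac{J_{ij}}{2}\bigl(|x_i|^2-2x_i^\intercal \bb{E}_{\nu^j}[X_j]+\bb{E}_{\nu^j}|X_j|^2\bigr),
\end{align*}
so it is the Gaussian $\cal{N}\bigl(\lambda_i^{-1}\sum_{j\in\partial i}J_{ij}\bb{E}_{\nu^j}[X_j],\,\lambda_i^{-1}I_d\bigr)$ with $\lambda_i=\tau+\sum_{j\in\partial i}J_{ij}$. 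In particular the fixed point $\mu^i_\star$ has Hessian of its potential equal to $\lambda_i I_d$. The second item of Lemma~\ref{lemma:useful} then gives the transportation-information inequality with constant $\lambda_i$, valid for $p=1$ as well, since $\bb{W}_1\leq\bb{W}_2$.

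\emph{Smoothness.} We have $\nabla_i f = \tau x_i+\sum_{j\in\partial i}J_{ij}(x_i-x_j)$. Hence $x_j\mapsto\nabla_i f$ is $J_{ij}$-Lipschitz for $j\in\partial i$ and constant for $j\notin\partial i$. Lemma~\ref{lemma:crosssmooth} then gives Fisher-smoothness on all of $\times_{j\neq i}\cal{P}_1(\r^d)$ with $L_{ij}=J_{ij}\mathbf{1}\{j\in\partial i\}$. The integrability condition in that lemma holds because the growth is linear. One could also check this directly: both updates are Gaussians with the same covariance, so the Fisher information equals $\lambda_i^{-1}\bigl|\sum_j J_{ij}(\bb{E}_{\mu^j}X_j-\bb{E}_{\nu^j}X_j)\bigr|^2$, up to the norm convention. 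Since mean differences are controlled by $\bb{W}_1$, this gives the same constants.

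\emph{Spectral radius.} The rate matrix is $R_{ij}=J_{ij}/(\tau+\sum_{k\in\partial i}J_{ik})$ for $j\in\partial i$, and $0$ otherwise. Its row sums are $\sum_{j\in\partial i}J_{ij}/(\tau+\sum_{k\in\partial i}J_{ik})<1$, strictly, because $\tau>0$. The Gershgorin bound stated after Theorem~\ref{thm:Dsequential} therefore gives $\rho(R)\leq\max_i\sum_jR_{ij}<1$, and the maximum is over finitely many vertices. Because all the assumptions hold with $\varepsilon_i=\infty$, no localisation or choice of sub-eigenvector $\delta$ is needed. The three theorems then give global exponential convergence at the stated rates, and the sequential rate is below $1$ by Stein--Rosenberg.

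The main point needing care is the metric convention: the Fisher information is taken in the Euclidean norm while the constant is $\lambda_i$. The transportation-information inequality in the form $\lambda^2\bb{W}^2\leq\cal{I}$ from Lemma~\ref{lemma:useful} must be matched with $L_{ij}=J_{ij}$ in the same Euclidean geometry, so that the ratio $L_{ij}/\lambda_i$ is what enters $R$. As a sanity check on sharpness, the resulting mean recursion is exactly the Jacobi or Gauss--Seidel iteration for the diagonally dominant precision matrix $Q=\tau I+$ the graph Laplacian, which is consistent with Remark~\ref{remark:gaussian}.
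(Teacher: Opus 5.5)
Your proposal is correct and follows the paper's own argument: $\lambda_i=\tau+\sum_{j\in\partial i}J_{ij}$ from Lemma~\ref{lemma:useful}, $L_{ij}=J_{ij}$ from Lemma~\ref{lemma:crosssmooth}, and $\rho(R)<1$, which you justify via the Gershgorin row-sum bound (strict because $\tau>0$) where the paper simply asserts it. One minor slip is in your optional direct check: in the Euclidean convention you adopt, the Fisher information between the two Gaussian updates is $\bigl|\sum_{j\in\partial i}J_{ij}(\bb{E}_{\mu^j}X_j-\bb{E}_{\nu^j}X_j)\bigr|^2$ with no factor $\lambda_i^{-1}$ (that factor belongs to the $\lambda_i$-weighted dual norm of Remark~\ref{remark:inducednorms}), and it is this expression that yields $L_{ij}=J_{ij}$.
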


\paragraph{General log-concave site potentials and interactions.} 
Consider the more general case where $V_i$ is $\tau_i$-strongly convex, $W_{ij}$ is $\alpha_{ij}$-convex for some $\alpha_{ij}\geq 0$, and the mixed derivatives of $W_{ij}$ are bounded $\parallel\nabla_{x_i,x_j}^2W_{ij}(x_i,x_j)\parallel_{op}\leq \beta_{ij}$. By Lemma~\ref{lemma:crosssmooth} we have $L_{ij} = \beta_{ij}$ and Lemma~\ref{lemma:useful} gives $\lambda_i=\tau_i+ \sum_{j\in \partial i} \alpha_{ij}$.  As opposed to the Gaussian free field case, notice that, since $\beta_{ij}\geq \alpha_{ij} \geq 0$, it is not necessarily the case that $\rho(R)<1$ for the resulting rate matrix. It is required to have additional convexity from the site potentials for global exponential convergence.
\begin{proposition}
	The sequential, random, and parallel coordinate ascent variational inference algorithm for the general log-concave site potentials and interactions model converges globally fast in Wasserstein-$1$ distance  with rates given by Theorems~\ref{thm:Dparallel},~\ref{thm:Drandom}, or~\ref{thm:Dsequential} if $\rho(R)<1$.
\end{proposition}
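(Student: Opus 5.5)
The plan is to reduce the statement to Theorems~\ref{thm:Dparallel}, \ref{thm:Drandom} and \ref{thm:Dsequential} with $p=1$ and $\varepsilon_i=\infty$. This requires three inputs: global Fisher-smoothness constants $L_{ij}$, transportation-information constants $\lambda_i$ for the fixed points, and a global choice of the sub-eigenvector $\delta$. The hypothesis $\rho(R)<1$ is assumed directly, so only the first two inputs need to be identified.

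For the constants $L_{ij}$, write $f(x)=\sum_i V_i(x_i)+\sum_{(i,j)\in\cal{E}}W_{ij}(x_i,x_j)$. Then $\nabla_i f=\nabla V_i(x_i)+\sum_{j\in\partial i}\nabla_{x_i}W_{ij}(x_i,x_j)$. For $j\in\partial i$, the dependence of $\nabla_i f$ on $x_j$ enters only through $\nabla_{x_i}W_{ij}(x_i,\cdot)$. By the mean value inequality, its Lipschitz constant in $x_j$ is at most $\sup\|\nabla^2_{x_ix_j}W_{ij}\|_{op}\le\beta_{ij}$, uniformly in all other coordinates. For $j\notin\partial i$ the constant is $0$. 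Lemma~\ref{lemma:crosssmooth} then gives Fisher-smoothness on all of $\times_{j\neq i}\cal{P}_1(\cal{X}^j)$ with $L_{ij}=\beta_{ij}$.

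One technical check is needed: Lemma~\ref{lemma:crosssmooth} asks that $\pi\in\cal{P}_1$ and that the iterates have first moments. I would check $\pi\in\cal{P}_1$ via strong convexity of $\sum_iV_i$ plus convexity of the interactions, which makes $f$ strongly convex. The iterates then have first moments because each update is strongly log-concave, as shown in the next step.

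For the constants $\lambda_i$, by \eqref{eq:bvbvbv6s} every image $\mu_\star^i[\nu^{-i}]$, and in particular the fixed point $\mu_\star^i$, has potential $V_i(x_i)+\sum_{j\in\partial i}\bb{E}_{\nu^j}W_{ij}(x_i,X_j)$. I expect the main obstacle to be justifying that this potential's Hessian is bounded below by $\tau_i+\sum_{j\in\partial i}\alpha_{ij}$, which requires differentiating under the expectation twice. I would first try the convexity-inequality route, which avoids derivatives: $W_{ij}(\cdot,x_j)$ is $\alpha_{ij}$-convex for each $x_j$, so the averaged function is $\alpha_{ij}$-convex wherever it is finite. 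Adding the $\tau_i$-strongly convex $V_i$ gives $(\tau_i+\sum_{j\in\partial i}\alpha_{ij})$-strong convexity, and the Bakry--Émery part of Lemma~\ref{lemma:useful} on flat $\r^d$ then yields the transportation-information inequality with $\lambda_i=\tau_i+\sum_{j\in\partial i}\alpha_{ij}$ for $p=1\le2$. If Lemma~\ref{lemma:useful} is read as requiring a $\cal{C}^2$ potential, I would smooth by mollification or regularise, and pass to the limit using stability of the inequality under weak convergence with lower-semicontinuous Fisher information.

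Finally, with $R_{ij}=\beta_{ij}/\lambda_i$ and $\rho(R)<1$, all three theorems apply with $\varepsilon_i=\infty$, so the neighbourhood condition $c\delta_i\le\varepsilon_i$ is vacuous. This gives global exponential convergence in $\bb{W}_1$ at the stated rates: $\rho(R)$ for the parallel scheme, $\rho(\tfrac1MR+\tfrac{M-1}{M}I)$ for the random scheme, and $\rho((I-R_L)^{-1}R_U)$ for the sequential scheme. The last is below $1$ by the Stein--Rosenberg argument already used in Theorem~\ref{thm:Dsequential}.
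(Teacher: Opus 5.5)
Your proposal is correct and follows the paper's own argument. You get $L_{ij}=\beta_{ij}$ from Lemma~\ref{lemma:crosssmooth} and $\lambda_i=\tau_i+\sum_{j\in\partial i}\alpha_{ij}$ from the Bakry--{\'E}mery part of Lemma~\ref{lemma:useful}, then invoke Theorems~\ref{thm:Dparallel}, \ref{thm:Drandom} and \ref{thm:Dsequential} with $p=1$ and $\varepsilon_i=\infty$; your extra checks (first moments, and convexity of the averaged potential without differentiating under the expectation) are harmless refinements the paper leaves implicit.
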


\paragraph{Ginzburg--Landau model.} It is also possible to consider non-convex site potentials, provided the inter-spins interactions are sufficiently mild. Consider an $n$-regular graph for simplicity, and the Ginzburg--Landau model, where the site potentials are of double-well type, namely $V_i(x_i) = V(x_i) = (g/4)|x|^4 -(r/2) |x|^2$, $g, r > 0$ and $W_{ij} (x_i, x_j) = (J/2) |x_i - x_j|^2$~\citep{Grunewald2009,Menz2013}. We always have $L_{ij} = J$ (Lemma~\ref{lemma:crosssmooth}). Moreover, as shown below, $\mu^i[\nu^{-i}]$ satisfies an (uniform in $\nu^{-i}$) Logarithmic Sobolev inequality, with a constant $C (J, g, r, n)$ that does not degenerate as $J\to 0$ (and that we make explicit below), and thus satisfies a transportation-information inequality with at least the same constant (Lemma~\ref{lemma:useful}). Consider the resulting rate matrix $R_{ij} = J \cdot C( J, g, r, n)^{-1}$, $i \sim j$ and $R_{ii} = 0$ as per Section~\ref{sec:Mblocks}. When the interaction strength $J$ is small enough, one always has that $\rho(R) < 1$.

\begin{proposition}
	The sequential, random, and parallel coordinate ascent variational inference algorithm for the Ginzburg--Landau model converges globally fast in Wasserstein-$1$ distance  with rates given by Theorems~\ref{thm:Dparallel},~\ref{thm:Drandom}, or~\ref{thm:Dsequential} if $\rho(J)=\rho(R)<1$.
\end{proposition}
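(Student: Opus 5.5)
The plan is to apply Theorems~\ref{thm:Dparallel}, \ref{thm:Drandom} and \ref{thm:Dsequential} with $p=1$, $M=n$ blocks, and globally valid constants ($\varepsilon_i=\infty$). Two things must be checked: Fisher-smoothness of the optimality maps~\eqref{eq:bvbvbv6s}, and a transportation-information inequality for the fixed-point marginals $\mu_\star^i$.

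\textbf{Smoothness.} For $j\in\partial i$ we have $\nabla_{x_i} W_{ij}(x_i,x_j)=J(x_i-x_j)$, and all other terms of $f$ involving $x_i$ do not depend on $x_j$. Hence $\lip_{x_j}(\nabla_i f)=J$ for $j\in\partial i$ and $0$ otherwise. Lemma~\ref{lemma:crosssmooth} then gives Fisher-smoothness on all of $\times_{j\neq i}\cal{P}_1(\cal{X}^j)$ with $L_{ij}=J\,\mathbf{1}\{j\in\partial i\}$. Finiteness of first moments of the iterates follows from the quartic confinement of $V$.

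\textbf{Curvature.} This is the main step. Expanding the quadratic interaction, the optimality map takes the form
\begin{align*}
\mu_\star^i[\nu^{-i}](x)\propto \exp\Big(-V(x)-\tfrac{nJ}{2}|x|^2+J\,x^\intercal \textstyle\sum_{j\in\partial i}\bb{E}_{\nu^j}[X_j]\Big).
\end{align*}
It is therefore a one-site measure with potential $U_b(x)=\tfrac g4|x|^4-\tfrac{r-nJ}{2}|x|^2-b^\intercal x$, where $b=J\sum_{j\in\partial i}\bb{E}_{\nu^j}[X_j]$ is a linear tilt. We need a log-Sobolev constant for $e^{-U_b}$ that is uniform in $b\in\r^d$. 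The Hessian of the linear tilt vanishes, so I would follow \citet{Menz2013}, or a direct perturbation argument, as follows.

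Write $U_b=\Psi_b+\Theta$, where $\Psi_b(x)=\tfrac g4|x|^4+\tfrac{\kappa}{2}|x|^2-b^\intercal x$ for a chosen $\kappa>0$, and $\Theta(x)=-\tfrac{r-nJ+\kappa}{2}|x|^2$. One cannot make $\Theta$ bounded this way. Instead I would split via a bounded perturbation: take $\Theta=(U_b-\Psi_b)\chi$ restricted to a ball, using the fact that $\tfrac g4|x|^4-\tfrac{r+\kappa}2|x|^2$ is convex outside a ball of radius $R_0\sim\sqrt{(r+\kappa)/g}$ and bounded inside it. Concretely, $V(x)=\tilde V(x)+\theta(x)$ with $\tilde V$ convex with $\nabla^2\tilde V\succeq \kappa I$, and $\operatorname{osc}\theta\le C(g,r,\kappa)$ independent of $b$ and $J$. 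Bakry--\'Emery (Lemma~\ref{lemma:useful}) gives $\tilde V+\tfrac{nJ}2|x|^2-b^\intercal x$ an LSI constant $\kappa+nJ$. Holley--Stroock then yields
\begin{align*}
C(J,g,r,n)\ \ge\ (\kappa+nJ)\,e^{-\operatorname{osc}\theta}
\end{align*}
uniformly in $b$. Since $\mu_\star^i=\mu_\star^i[\mu_\star^{-i}]$ is one such measure, Lemma~\ref{lemma:useful} gives $\lambda_i\ge C(J,g,r,n)$, which does not degenerate as $J\to0$. The obstacle is making the convex/bounded decomposition explicit in $\r^d$ with constants independent of $b$. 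Because $b$ enters only linearly, it affects neither $\nabla^2\tilde V$ nor $\operatorname{osc}\theta$, so uniformity is automatic once $\theta$ is chosen independently of $b$.

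\textbf{Conclusion.} With $R_{ij}=J\,C(J,g,r,n)^{-1}\mathbf{1}\{j\in\partial i\}$, the hypotheses of the three theorems hold globally. The stated rates follow under $\rho(R)<1$, which holds for small $J$ since $\rho(R)\le nJ/C(J,g,r,n)$ by Gershgorin, and $C$ stays bounded below as $J\to0$.
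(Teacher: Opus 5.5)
Your proposal is correct and follows essentially the same route as the paper. You get $L_{ij}=J$ from Lemma~\ref{lemma:crosssmooth}, split the one-site potential into a $(\kappa+nJ)$-strongly convex part that absorbs the linear tilt plus a bounded perturbation, combine Bakry--\'Emery with Holley--Stroock to obtain a log-Sobolev constant that is uniform in the tilt and does not degenerate as $J\to 0$, and conclude for small $J$ via Gershgorin. The decomposition you leave implicit is the one the paper writes down (with $\alpha$ in place of your $\kappa$): $\tilde V=\frac{\kappa}{2}|x|^2+(\varphi_\kappa\vee 0)$ and $\theta=\varphi_\kappa\wedge 0$, where $\varphi_\kappa(x)=\frac g4|x|^4-\frac{r+\kappa}{2}|x|^2$. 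Here $\varphi_\kappa\vee 0$ is convex, since it is radial, vanishes inside the sphere $\{\varphi_\kappa=0\}$, and is increasing and convex outside it, and $\mathrm{osc}\,\theta=\frac{(r+\kappa)^2}{4g}$.
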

\begin{proof}
    It suffices to show that $\mu^i[\nu^{-i}]$ satisfies an (uniform in $\nu^{-i}$) Logarithmic Sobolev inequality with good constant. In fact, for an arbitrary $\alpha>0$, we can write $V(x_i)=U_\alpha(x_i)+B_\alpha(x_i)$, where
    \begin{align*}
        U_\alpha(x_i)&=\frac{\alpha}{2}|x_i|^2+\frac{J}{2}\sum_{j\sim i}|x_i-x_j|^2 + \varphi_\alpha(x_i)\vee 0, \quad B_\alpha(x_i)=\varphi_\alpha(x_i) \wedge 0, \quad \textup{where} \\\varphi_\alpha(x_i)&=\frac{g}{4}|x_i|^4 - \frac{\alpha+r}{2}|x_i|^2.
    \end{align*}
    Note first that $\varphi_\alpha \vee 0$ is convex (a picture helps for this), and hence deduce that $x_i\mapsto U_\alpha(x_i)$ is $(\alpha + nJ)$-strongly convex, while $x_i\mapsto B_\alpha(x_i)$ is an upper- and lower-bounded function. Moreover, $\osc(B_\alpha(x_i)) \leq \smash{\frac{(r+\alpha)^2}{4g}}$, and the celebrated criterion of \citet{Holley1987} shows that $\mu^i[\nu^{-i}]$ satisfies the Logarithmic Sobolev inequality with constant at least
    \begin{align*}
        C(J,g,r, n) \geq \frac{\alpha + n J}{4}\exp\left(-\frac{(r+\alpha)^2}{4g} \right)
    \end{align*}
    One then computes that $\sum_{j \sim i} R_{ij} \leq \frac{4 n J}{\alpha + nJ } \exp \left( \frac{(r+\alpha)^2}{4g} \right)$, which is indeed smaller than $1$ for $J$ sufficiently small; we conclude.
\end{proof}

\paragraph{Latent Gaussian Spatial Model.} Consider now a spatial model with a latent Gaussian random field prior and conditionally independent emissions. That is, at spatial locations $\left\{ x_i : 1\leq i \leq n \right\}$, model a latent field $f_i = f (x_i)$ as $f \sim \mathcal{N} \left( 0, Q \right)$ for some precision matrix $Q$, and then observe conditionally-independent emissions at each location according to some $p\left(y_{i}\mid f_{i}\right)$. For concreteness, take $y_i \mid f_i \sim \mathsf{Poisson} \left( \exp f_i \right)$, and consider posterior inference on $f$ given $y$. 

Observe that for this model, the negative log-likelihood of each observation has curvature which can be both arbitrarily large and arbitrarily close to zero, impeding various other theoretical analyses. With this being said, an influence-based analysis remains feasible, as this sharp growth involves only individual terms $f_i$, and not interactions. Straightforward calculations allow for us to bound $R_{ij} \leq \frac{|Q_{ij}|}{Q_{ii}}$ (just as one would obtain for studying the prior distribution directly) and convergence guarantees follow. 

Precisely the same analysis goes through in the case of binary emissions with logistic link, or indeed any log-concave emission distribution. In the case of binary emissions with probit link, or continuous emissions with Gaussian law, the likelihood supplies some additional curvature, and the rates improve accordingly. 

Note that for `non-spatial' priors (e.g.\ conventional generalised linear models with prior $\beta \sim \mathcal{N} \left( 0, \sigma^2 I_p \right)$ and observations of the form $y_i \mid x_i, \beta \sim p\left(y_{i}\mid \eta_{i} \right)$, $\eta_{i} = x_i^\top \beta$), the same analysis becomes much more challenging due to the couplings between the coordinates of $\beta$ induced by the design matrix $\mathbf{X}$. This illustrates the natural principle that before seeking a mean-field approximation, one should take care to work in a parameterisation of the model under which weak dependence is a credible assumption.

\paragraph{Hidden Markov Model.} Consider now a Markov model in discrete space and time, i.e.\ some initial state $x_0$ is drawn from some discrete distribution $m_0$, and then for $0 < t \leq T$, $x_t$ is drawn from some Markov kernel $M \left(x_{t-1},\cdot \right)$. Assume that the kernel $M$ is rather insensitive to its source and target locations in the sense that for some finite $\delta_{\mathrm{f}}, \delta_{\mathrm{b}} > 0$, it holds that for each $(x_1, x_2)$ (resp. $(x_1^\prime , x_2^\prime )$) that
\begin{align*}
    \mathrm{osc} \left(\log\frac{M\left(x_{1},\cdot\right)}{M\left(x_{2},\cdot\right)}\right)\leq\delta_{\mathrm{f}},\qquad\mathrm{osc}\left(\log\frac{M\left(\cdot,x^{\prime}_{1}\right)}{M\left(\cdot,x^{\prime}_{2}\right)}\right)\leq\delta_{\mathrm{b}}.
\end{align*}
Under these assumptions, one computes that the optimality mapping $\mu_\star^t$ is discrete-Fisher-smooth with constants $\left( L_{t,t-1}, L_{t, t+1} \right) = \left( \delta_\mathrm{b}, \delta_\mathrm{f} \right)$ respectively, with all other smoothness constants vanishing. One then sees that as soon as $\delta_\mathrm{b} + \delta_\mathrm{f} < 1$, the CAVI iterations will contract to their fixed point exponentially fast.

A natural first response is that for interesting Markov models, the dependence along iterates of the process should be somewhat stronger, that the mean-field approximation is ill-founded, and that the CAVI iterations should converge slowly; all of these points are essentially correct. A productive solution is to then make a \emph{structured} mean-field approximation by \emph{temporal blocking}, i.e.\ partitioning the time horizon $\left\{ 0, 1, \ldots, T\right\}$ into non-overlapping blocks of length $1 \ll B \ll T$, and modelling the $x_t$ within each block as being dependent, but independent of the $x_t$ in other blocks. This can work well. In particular, if one takes $B$ to be on the order of the decorrelation time of the process, upon working with the Hamming metric, one can establish bounds on the influence coefficients which are able to imply non-trivial convergence bounds for CAVI; we omit the details.

Similar calculations also go through for linear-Gaussian Markov models with contractive dynamics, much as one might expect.  Analogously to the Spatial GLM example, it is straightforward to extend these guarantees to the Hidden Markov Model setting, provided that the emissions admit log-concave densities. Our omission of the details for these comments is not particularly egregious as, given a sufficiently space-like interpretation of the time variable in a dynamical model, the claims already follow from the Spatial GLM case.

\section{Discussion}

In this work, we studied the contraction in the sense of Wasserstein distance of various Coordinate Ascent Variational Inference algorithms. The assumptions considered are a local functional smoothness condition for the optimality maps, and a functional inequality on their fixed points.  
We considered different selected examples in Bayesian statistics and statistical physics to illustrate out results. There is no shortage of further examples to which the framework can be readily applied, including regression models, exchangeable models of both mean-field and de Finetti type, inference on latent community structure in the stochastic block model, discrete point processes of various types, and well beyond. We are eager to see what other classes of problem can be fruitfully approached with the tools developed herein.

We are optimistic that the analysis developed herein will open up various interesting applications and extensions. One of some interest is to identify other tractable influence-based criteria which are used to study the mixing of Markov chains used for sampling systems of weakly-dependent random variables, and to examine the extent to which these techniques can be similarly informative about CAVI and related algorithms. In particular, the spectral independence framework of \citet{blanca2022mixing} has proven itself to be rather powerful for sharpening mixing bounds on various classical discrete systems, and it would be of great interest to extend those breakthroughs to the study of variational inference.

\paragraph{Dobrushin criterion and Gibbs sampling.} It is interesting to compare our approach to the analysis of coordinate ascent variational inference with the analysis of Gibbs sampling via the Dobrushin's criterion, on which we believe it provides some new insights. We anticipate some here and plan to discuss this in more depth in future research.

The Gibbs sampler  (also `Glauber dynamics', or `Heat bath') induces a $\pi$-invariant Markov process by repeatedly replacing its elements with draws from the full conditional distributions $\pi_i ( \cdot \mid x_{-i} )$, and has been tremendously impactful in Bayesian computation and beyond~\citep{Geman1984,Smith1993}. In studying the convergence to equilibrium of the Gibbs sampler, a pivotal contribution of Dobrushin~\citep[later refined by][and others]{Wu2006,Dyer2008,Wang2014} was to introduce the notion of `influence' coefficients $R$, defined as
\begin{align*}
    R_{ij} := \sup \left\{ \frac{\mathbb{W}_p \left( \pi_i ( \cdot \mid x_{-i} ), \pi_i ( \cdot \mid y_{-i}) \right) }{\mathsf{d} ( x_j , y_j )} : x_k = y_k \text{ for } k \neq j \right\}.
\end{align*}
Following derivations much like those presented in this work, one sees that if the matrix $R$ has spectral radius $\rho(R) < 1$, then the Gibbs sampler contracts towards $\pi$ exponentially quickly in Wasserstein distance. Such approaches have been particularly popular in the study of discrete-state spin systems, where they are able to supply dimension-independent estimates (although they often fail to accurately predict phase transitions, being slightly too blunt a tool for this).
On the other hand, this approach seems to have seen relatively limited application in the context of continuous spin systems, though perhaps without good cause. In particular, making the assumptions that
\begin{enumerate}
    \item for each $i$ and for each $x_{-i}$, the \emph{conditional} distribution $\pi_i \left( \cdot \mid x_{-i} \right)$ satisfies a transportation-information inequality with constant $\lambda_i$, and
    \item for each $i \neq j$, the gradient field $\nabla_{x_i} \log \pi$ is $L_{ij}$-Lipschitz with respect to $x_j$,
\end{enumerate}
one can in fact establish bounds of the form $R_{ij} \leq \frac{L_{ij}}{\lambda_i}$ when $p=1$, and hence deduce quantitative convergence rates for the associated Gibbs sampler in Wasserstein-1 distance. As a result, in many of our examples, while verifying the assumptions which ensure rapid convergence of CAVI, we obtain also simple convergence guarantees for the associated Gibbs sampler. In some of these cases, these convergence rates appear to be new. We notice that:
\begin{itemize}
    \item The criteria for CAVI and Gibbs are related but distinct. Our approach to CAVI hinges on the assumption that the marginal satisfies a transportation-information inequality, whereas the corresponding argument for the Gibbs sampler requires the conditionals to satisfy a uniform transportation-information. 
    \item The cross-smoothness condition above is also sufficient to verify globally the Fisher smoothness condition we consider for CAVI (Lemma~\ref{lemma:crosssmooth}), and thus investigate its global convergence (but not necessary at all for local guarantees). The concept of `local convergence' is very natural for CAVI, and much less so for Gibbs sampling.
    \item The set of models for which CAVI and Gibbs work well are again related but distinct. Even for the bivariate Ising model $\pi(x^1, x^2) \propto \exp ( \beta x^1 x^2 )$ for $x^1, x^2 \in \{ \pm 1 \}$, one sees that Dobrushin's criterion always guarantees exponential contraction for the Gibbs sampler on $\pi$, whereas CAVI can admit multiple fixed points for $\beta > 1$. As such, there is a genuine separation between the methods and the associated rates; while one should expect some qualitative concordance between when the two methods work, a more definitive statement does not obviously hold. We anticipate that this phenomenon is known to some degree, but we have not seen it discussed in-depth in the literature.
\end{itemize}

\section*{Acknowledgements}
This work was supported by the ProbAI Hub
(EP/Y028783/1). The first author is grateful to Giacomo Zanella for helpful discussions.

\printbibliography
\appendix
\section{Coordinate descent} \label{app:cd}
Our proof for the coordinate ascent variational inference algorithm admits a standard coordinate descent algorithm analogue that seems to be new, seems to work under different assumptions than those considered in the literature on analysis of coordinate descent algorithms, while yielding sharp rates of convergence.

The coordinate descent method is a fundamental optimization algorithm that minimises an objective function by iterating minimization procedures on the different blocks. Its convergence properties are classical~\citep{Ortega1970,Bertsekas1989}. More recently, various works have studied non-asymptotic quantitative bounds for convergence.
The algorithm with two blocks is known as alternating minimisation. \citet{Beck2013} establish error bounds for alternating minimisation under a global strong convexity assumption on $f$ and block-wise smoothness assumption.\citet{Karimi2016} shows convergence of coordinate descent with the Gauss-Southwell rule under the more general Polyak--{\L}ojasiewicz inequality. More recently, \citet{Both2022} improved on \citet{Beck2013}'s linear convergence results when considering a quasi-strongly convex objective $f$ enjoying block-wise smoothness.

Here, we prove linear convergence of sequential, parallel, and random scan coordinate descent, under cross-smoothness and a gradient quadratic growth condition for the optimal value functions. We also allow for local convergence guarantees.

Let $\{\cal{X}^i\}$, $i=1,\dots,M$ be smooth Riemannian manifolds, and let $f:\times_{i = 1}^M \cal{X}^i\mapsto \r$ be a given objective function we wish to minimise. Consider the optimality operators $x_\star^i:\times_{j \neq i}\cal{X}^j\mapsto \cal{X}^i$ given by
\begin{align*}
x_\star^i[x^{-i}] = \argmin_{x^i\in\cal{X}^i} f(x^{<i},x^i,x^{>i}).
\end{align*}
Let $(x_k^i)$, $i=1,\dots,M$ be the current iterate. We consider the sequential coordinate descent algorithm
\begin{align} \label{eq:CDsequential}
	x_{k+1}^i = \argmin_{x^i\in\cal{X}^i} f(x^{<i}_{k+1},x^i,x^{>i}_k) = x_\star^i[x^{<i}_{k+1},x^{>i}_k].
\end{align}
the parallel
\begin{align} \label{eq:CDparallel}
	x_{k+1}^i = \argmin_{x^i\in\cal{X}^i} f(x^{<i}_k,x^i,x^{>i}_k) = x_\star^i[x^{-i}_k].
\end{align}
and finally the random scan
\begin{align} \label{eq:CDrandom}
	x_{k+1}^I = \argmin_{x^I\in\cal{X}^I} f(x^{<I}_k,x^I,x^{>I}_k) = x_\star^I[x^{-I}_k], \quad x_{k+1}^{-I} = x_k^{-I}
\end{align}
where $I\sim\textup{Unif}(\{1,\dots,M\})$.
We wish to understand the convergence of these optimisation algorithms towards the fixed point $x_\star = (x_\star^1, \dots, x_\star^M)$, which satisfies
\begin{align*}
    x_\star^i=x_\star^i[x_\star^{-i}]
\end{align*}
which generally depends on the initialisation $x_0=(x_0^1,\dots,x_0^M)$.
The following result is the analogue to Theorem~\ref{thm:Dparallel}, and the proof is similar. Let $\mathsf{d}$ denote the geodesic distance, $\nabla$ and $|\cdot|$ the gradient and norms. Define the vector of geodesic distances to the minimiser of $f$ as
\begin{align*}
	D_{k} := \bigl(\mathsf{d}(x^1_k, x^1_\star), \dots, \mathsf{d}(x^M_k, x^M_\star)\bigr)^\top \in \r^M,
\end{align*}
and the rate matrix $R \in \r^{M \times M}$ by
\begin{equation*}
	R_{ij} := \frac{L_{ij}}{\lambda_i} \quad (i \neq j), \qquad R_{ii} := 0.
\end{equation*}
As in the coordinate ascent variational inference case, recall that $\rho(R)<1$ implies the existence of an eigenvector $\delta\in\r^M$ with $\rho(R)\delta\prec \delta$. Let $R_U$ be the strictly upper triangular part of $R$ and $R_L$ the lower triangular one.
\begin{theorem} \label{thm:cd}
    Let $\{\cal{X}^i\}_{i=1}^M$ be smooth Riemannian manifolds and assume the following.
    \begin{itemize}
    \item The function $f$ is cross-smooth in $\times_{i=1}^M \ball{\mu_\star^i}{\varepsilon_{i}}$: for all $i=1,\dots,M$, there exist constants $L_{ij}$ such that, for all $x_i\in \ball{x_\star^i}{\varepsilon_i}$,
        \begin{align} \label{eq:cs}
        |\nabla_i f(x^{<i},x^{i-1},x^{>i})-\nabla_i f(x_\star^{<i},x^i,x_\star^{>i})| \leq  \sum_{j\neq i} L_{ij} \mathsf{d}(x^j,x_\star^j)
        \end{align}
    \item For all $i=1,\dots,M$, the optimal value functions $x^i \mapsto f(x_\star^{<i},x^i,x_\star^{>i})$ satisfy a quadratic growth inequality with respect to the gradient: there exist constants $\lambda_i$ such that 
    \begin{align} \label{eq:qg}
    \lambda_{i}^2 \mathsf{d} (x^i,x_\star^i)^2 \leq |\nabla_i f(x_\star^{<i},x^i,x_\star^i,x_\star^{>i})|^2
    \end{align}
    \end{itemize}
    Suppose furthermore that $\rho(R)<1$ and consider any $c>0$ such that $c\delta_i\leq \varepsilon_i$. Then, for all $x_k^i\in \ball{x_\star^i}{c\delta_i}$ $i=1,\dots,M$,
    \begin{itemize}
        \item The sequential coordinate descent algorithm~\eqref{eq:CDsequential}  satisfies
        \begin{align*}
        D_{k+1} \preceq (I_M-R_L)^{-1}R_U D_k
        \end{align*}
        In particular, sequential coordinate descent algorithm~\eqref{eq:CDsequential} converges exponentially fast to $x_\star$ in geodesic distance with rate $\rho((I_M-R_L)^{-1}R_U)$.
        \item The parallel coordinate descent algorithm~\eqref{eq:CDparallel}  satisfies
        \begin{align*}
        D_{k+1} \preceq R D_k
        \end{align*}
        In particular, parallel coordinate descent algorithm~\eqref{eq:CDparallel} converges exponentially fast to $x_\star$ in geodesic distance with rate $\rho(R)$.
        \item The random coordinate descent algorithm~\eqref{eq:CDrandom}  satisfies
        \begin{align*}
        \mathbb{E} \left[ D_{k+1} \right] \preceq \Big(\frac{1}{M}R + \frac{M-1}{M}I_M\Big) D_k
        \end{align*}
        In particular, random coordinate descent algorithm~\eqref{eq:CDrandom} converges exponentially fast to $x_\star$ in expected geodesic distance with rate $\rho(\smash{\frac{1}{M}R + \frac{M-1}{M}I_M})$.
    \end{itemize}
\end{theorem}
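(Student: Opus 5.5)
The proof transcribes the arguments of Theorems~\ref{thm:Dparallel}, \ref{thm:Drandom} and \ref{thm:Dsequential} to points instead of measures. The gradient-growth inequality \eqref{eq:qg} plays the role of the transportation-information inequality, and the cross-smoothness \eqref{eq:cs} plays the role of Fisher-smoothness. The key observation is a first-order condition. If $x_{k+1}^i = x_\star^i[y^{-i}]$, where $y^{-i}$ is the relevant vector of other blocks ($x_k^{-i}$ for parallel and random, $(x_{k+1}^{<i},x_k^{>i})$ for sequential), then $\nabla_i f(y^{<i},x_{k+1}^i,y^{>i})=0$. This is the analogue of the fact that $\mathcal{I}(\mu_\star^i[\mu^{-i}]\parallel\mu_\star^i[\mu^{-i}])=0$.

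\textbf{Basic one-block estimate.} I first establish, for any $i$ and any $y^{j}\in \ball{x_\star^j}{\varepsilon_j}$ with $x_{k+1}^i$ in its ball, the chain
\begin{align*}
\lambda_i\,\mathsf{d}(x_{k+1}^i,x_\star^i) \leq |\nabla_i f(x_\star^{<i},x_{k+1}^i,x_\star^{>i})| = |\nabla_i f(y^{<i},x_{k+1}^i,y^{>i})-\nabla_i f(x_\star^{<i},x_{k+1}^i,x_\star^{>i})| \leq \sum_{j\neq i}L_{ij}\,\mathsf{d}(y^j,x_\star^j).
\end{align*}
The first step is \eqref{eq:qg}, the equality uses the first-order condition just described, and the last step is \eqref{eq:cs}. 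Dividing by $\lambda_i$ gives $\mathsf{d}(x_{k+1}^i,x_\star^i)\leq (R\,d(y))_i$, where $d(y)$ is the vector of distances of $y$ to $x_\star$.

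\textbf{The three schemes.} For the parallel scheme, take $y=x_k$, which gives $D_{k+1}\preceq RD_k$ directly. For the random scheme, condition on $I$. With probability $1/M$ coordinate $i$ obeys the bound above, and otherwise it is unchanged. Averaging yields the matrix $\frac1M R+\frac{M-1}{M}I_M$. For the sequential scheme, run the block-wise induction exactly as in Theorem~\ref{thm:Dsequential}. This gives $D_{k+1}\preceq R_LD_{k+1}+R_UD_k$. Since $R_L$ is strictly lower triangular and nonnegative, $(I_M-R_L)^{-1}=\sum_{m<M}R_L^m$ is nonnegative, so the inequality can be multiplied through to give $D_{k+1}\preceq (I_M-R_L)^{-1}R_UD_k$. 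The Stein--Rosenberg theorem then supplies $\rho((I_M-R_L)^{-1}R_U)<1$. In each case, exponential convergence follows by iterating the monotone nonnegative matrix inequality, using that $\|A^k\|^{1/k}\to\rho(A)$.

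\textbf{Main obstacle: invariance of the neighbourhoods.} I must show the iterates stay in $\ball{x_\star^i}{c\delta_i}$, so that \eqref{eq:cs} remains applicable. The difficulty is that \eqref{eq:qg} is used at the new point $x_{k+1}^i$, whose membership in the ball is what we are trying to prove. I will read \eqref{eq:qg} as holding for all $x^i$, as stated, which removes the circularity. With that reading, the bound $\mathsf{d}(x_{k+1}^i,x_\star^i)\leq c(R\delta)_i\leq c\rho\delta_i<c\delta_i$ gives invariance, using $R\delta\prec\rho\delta$ for the parallel case. The sequential case is handled inductively over $i$, as in Theorem~\ref{thm:Dsequential}. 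In the random case, the untouched coordinates trivially stay in their balls.

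One residual point I would check is how to read \eqref{eq:cs}. It evaluates $\nabla_i f$ at a common $x^i$ with varying other blocks, and the stray index $x^{i-1}$ in the display is a typo. Under that reading it is exactly the required Lipschitz bound, applied at $x^i=x_{k+1}^i$. If \eqref{eq:cs} were only assumed for $x^i$ in the ball, the estimate would still hold, since we only ever evaluate it at points already shown, or assumed by induction, to be in the ball.
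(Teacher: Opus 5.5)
Your one-block chain (first \eqref{eq:qg}, then the first-order condition $\nabla_i f(y^{<i},x_{k+1}^i,y^{>i})=0$, then \eqref{eq:cs}), the vectorisation for the three scans, and the invariance bound $\mathsf{d}(x_{k+1}^i,x_\star^i)\le c(R\delta)_i<c\delta_i$ are exactly the paper's proof. What needs fixing is how you handle the invariance issue you single out as the main obstacle. The circularity does not come from \eqref{eq:qg}, which is global as stated. It comes from \eqref{eq:cs}: that bound is evaluated with the $i$-th block equal to $x_{k+1}^i$, which is exactly the point whose membership in $\ball{x_\star^i}{c\delta_i}$ you are trying to prove. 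So you must drop the hypothesis ``with $x_{k+1}^i$ in its ball'' from your one-block estimate. You must also read \eqref{eq:cs} as holding for arbitrary $x^i$, with only the other blocks $x^j$, $j\neq i$, confined to their balls. The paper's display is ambiguous here, but its proof uses exactly this reading: it invokes \eqref{eq:cs} at $(x_k^{<i},x_{k+1}^i,x_k^{>i})$ before deducing invariance.

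Your closing claim, that a version of \eqref{eq:cs} restricting $x^i$ to its ball would still suffice, is false, not just unproved. Take $f(x,y)=\tfrac12x^2+\tfrac12y^2-A\,y\,\phi(x-10)$ on $\r\times\r$, with $\phi$ a smooth bump supported in $[-1,1]$, $\phi(0)=1$, and $A>100$. Then $(0,0)$ is a fixed point, and \eqref{eq:qg} holds globally with $\lambda_1=\lambda_2=1$. Confine every block to the unit ball. The cross-differences $-Ay\phi'(x-10)$ and $-A\phi(x-10)$ both vanish, so $L_{12}=L_{21}=0$ and $R=0$. Yet from $(x_0,y_0)=(0,\tfrac12)$, the minimiser of $x\mapsto\tfrac12x^2-\tfrac{A}{2}\phi(x-10)$ lies in $[9,11]$, since its value there drops below $0$ while it is nonnegative elsewhere. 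This contradicts $D_1\preceq RD_0=0$. Under the correct reading, $L_{12}=A\|\phi'\|_\infty$ is large, and the theorem no longer covers this initialisation.
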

\begin{proof}
    We prove the result for the parallel algorithm, the sequential and random-scan bounds follow by the same induction and expectation arguments as in Theorems~\ref{thm:Dsequential} and~\ref{thm:Drandom}.
    Suppose that $x_k^i\in \ball{x_\star^i}{c\delta_i}$ $i=1,\dots,M$. Then, for all $i=1,\dots,M$, by the gradient quadratic growth of $x^i\mapsto f(x_\star^{-i},x^i)$,
    \begin{align*}
     \lambda_i\mathsf{d}(x_{k+1}^i,x_\star^i) &\leq |\nabla_i f(x_\star^{<i},x_{k+1}^i,x_\star^{>i})| \\
     &= |\nabla_i f(x_\star^{<i},x_{k+1}^i,x_\star^{>i})-\nabla_i f(x_k^{<i},x^i_{k+1},x_k^{>i})|
    \end{align*}
    The last equality follows from the optimality of $x_{k+1}^i=\argmin_{x^i}f(x_k^1,\dots,x_k^{i-1},x^i,x_k^{i+1},\dots,x_k^M)$. Now we use the cross-smoothness assumption with a telescopic sum to obtain
    \begin{align*}
    	\lambda_i\mathsf{d}(x_{k+1}^i,x_\star^i) &\leq \sum_{j \neq i}L_{ij}\mathsf{d}(x_{k}^j,x_\star^j)
    \end{align*}
    Dividing by $\lambda_i$ shows $D_{k+1}\preceq R D_k$ by the definition of $R$ and we conclude. Moreover, $\mathsf{d}(x_{k+1}^i,x_\star^i)\leq \sum_{j\neq i}R_{ij}c\delta_j = c (R\delta)_i<c\delta_i$, implying $x_{k+1}^i\in \ball{x_\star^i}{c\delta_i}$.
\end{proof}
\begin{remark}
    A function $f$ is said to satisfy the  Polyak--{\L}ojasiewicz (\PL) inequality with constant $\lambda>0$ if, on its domain,
    \begin{equation} \label{eq:pl}
    2 \lambda (f-\inf f) \leq \norm{\nabla f}^2.
    \end{equation}
    The \PL inequality is a cornerstone of modern optimisation theory~\citep{Karimi2016,Garrigos2023}, it is weaker than strong convexity of $f$, and it implies the inequality $\lambda \mathsf{d}(u,\argmin f)^2 \leq 2(f(u)-\inf f)$.
    The gradient quadratic growth conditions ~\eqref{eq:qg} are weaker than \PL inequalities for the value function $x^i \mapsto f(x_\star^1, \dots, x_\star^{i-1}, x^i, x_\star^i, \dots, x_\star^M)$. In fact, these imply~\eqref{eq:qg}:
    \begin{align*}
    \lambda_{i}^2 \mathsf{d} (x^i,x_\star^i)^2 \leq 2\lambda_{i}(f(x_\star^{<i},x^i,x_\star^{>i})-\inf f) \leq |\nabla_i f(x_\star^{<i},x^i,x_\star^{>i})|^2.
    \end{align*}
    Moreover, \PL inequalities for the value function $x^i \mapsto f(x_\star^{<i},x^i,x_\star^{>i})$ are weaker than $f$'s strong convexity (see Remark~\ref{remark:logconc}). The gradient quadratic growth condition is also known as an error bound condition~\citep{Karimi2016}.
\end{remark}
\begin{remark}
    The cross-smoothness conditions is guaranteed globally on $\times_{i = 1}^M\cal{X}^i$ provided that the off-diagonal elements of $f$'s Hessian are bounded. In this case, the proof also simplifies, in that we do not have to worry about invariant neighbourhoods and to ensure iterates stay within.
\end{remark}
Theorem~\ref{thm:cd} allows to examine linear convergence for objective functions that violate classical strong convexity, \PL, and block-wise smoothness assumptions~\citep{Beck2013,Both2022}, while still being cross-smooth and satisfying the quadratic growth to the gradient at the fixed points~\eqref{eq:qg}. Following essentially the arguments of Remark~\ref{remark:gaussian}, it is possible to show that for quadratic objectives the rates are sharp.

\section{Remarks on transportation-information inequalities} \label{app:ti}

Here, we offer some additional background on transportation-information inequalities (hereafter TI), situating them within the broader landscape of functional inequalities, and giving some pointers to verify them. Throughout, write $\gamma$ for a fixed probability measure of interest, $\mathcal{L}$ for the infinitesimal generator of the associated Langevin diffusion, and $\left\{ P_t : t\geq 0\right\}$ for the associated transition kernels.
\begin{enumerate}
    \item Within the Bogachev--Kolesnikov hierarchy of geometric functional inequalities `of Gaussian type'~\citep[Section 3.5]{bogachev2012monge}, TI inequalities lie in between Sobolev-type energy-entropy inequalities and transport(-entropy) inequalities, being quantitatively weaker than the Logarithmic Sobolev inequality and quantitatively stronger than Talagrand-type inequalities~\citep[Proposition 2.9.a]{Guillin2009}.
    \item Analogously to transportation-entropy inequalities, in the case $p = 2$, a TI inequality still implies an energy-entropy inequality `of exponential type', i.e.\ the Poincar\'{e} inequality~\citep[Proposition 2.9.b]{Guillin2009}. Moreover, for log-concave $\gamma$ (or $\gamma$ which deviates from log-concavity sufficiently mildly), the TI inequality even yields a full Logarithmic Sobolev inequality~\citep[Proposition 2.9.c]{Guillin2009}. Examples are known which demonstrate that some condition of this form is necessary, i.e.\ that one can satisfy a TI inequality but not any Logarithmic Sobolev inequality; see Remark 2.6 of \citet{guillin2009ii}.
    \item In terms of verifying TI inequalities `from scratch', a rather crude but general tool is the method of Lyapunov conditions. If $\gamma$ is known a priori to satisfy a Poincar\'{e} inequality, then it is sufficient to exhibit a Lyapunov function $W \geq 1$, a reference point $x_0 \in \mathcal{X}$, and positive constants $b$, $c$ for which
    \begin{align*}
        \mathcal{L} W \leq b - c \, \mathsf{d} (x,x_0)^2 \, W.
    \end{align*}
    Under these conditions, a TI inequality holds for $\gamma$~\citep[Theorem 5.3]{Guillin2009}. If the Poincar\'{e} inequality is not known a priori for $\gamma$, then hope is not lost, as \citet{liu2017new} shows that it suffices to instead exhibit the stronger estimate
    \begin{align*}
        \mathcal{L} W \leq (b - c \, \mathsf{d} (x,x_0)^2 )\, W
    \end{align*}
    for some $W > 0$. The same work also establishes that such a condition is necessary for a TI inequality to hold.
    \item For quantitative purposes, it often works well to study the metric contraction properties of the Langevin diffusion. In \citet{wu2009gradient}, it is established that granted e.g.\footnote{Actually, \citet{wu2009gradient} makes the slightly more abstract assumption of a `Lipschitzian Spectral Gap' for $\mathcal{L}$: for some $c>0$ and for all sufficiently-regular $f$, it holds that $\| \mathcal{L} f \|_\mathrm{Lip} \geq c \| f \|_\mathrm{Lip}$. It is well-known that this is implied by the condition in the main text, and indeed, it is often proved by exactly these means.} hypocontractivity estimates of the form
    \begin{align*}
        \forall x, x^\prime \in \mathcal{X}, \qquad \mathbb{W}_1 ( P_t \left(x, \cdot \right), P_t (x^\prime, \cdot ) ) \leq \ell (t) \cdot \mathsf{d} ( x, x^\prime )
    \end{align*}
    with $\ell : \mathbb{R}_+ \to \mathbb{R}_+$ integrable, one can deduce a TI inequality for $\gamma$ with $p = 1$~\citep[Corollary 2.2]{wu2009gradient}, and the associated constant can be taken to depend only on $\| \ell \|_{\mathrm{L}^1} = \int_{\mathbb{R}_+} \ell (t)\,\mathrm{d}t$. As such, provided that the Langevin diffusion is `eventually uniformly exponentially contractive-on-average' in the Euclidean metric, a TI inequality follows.
    \item Thus far, the main applications of TI inequalities have apparently been in establishing path-space concentration inequalities for additive functionals along trajectories of the Langevin diffusion, following the developments in, e.g., Theorem 4.1 of \citet{Guillin2009}, and subsequently in \citet{gao2014bernstein}.~\cite{huggins2018practical} make some use of TI inequalities in validating approximations to Bayesian posterior distributions, but without focusing on how the approximations are computed, as we do in this work. 

\end{enumerate}

\section{Remarks on Functional Inequalities in Discrete Metric} \label{app:dfi}

We recall three functional inequalities of general interest. Fix a probability measure of interest $\gamma$ on $\mathbb{R}^d$ equipped with the Euclidean metric, consider an alternative probability measure $\varrho$, and introduce the writing the Fisher information, relative entropy, and Wasserstein distance as
\begin{align*}
    \iota = \mathcal{I} \left( \varrho, \gamma \right), \qquad \kappa = \mathrm{KL} \left( \varrho, \gamma \right), \qquad \tau = \mathbb{W}_2 \left( \varrho, \gamma \right).
\end{align*}
One can then consider for $\gamma\in\cal{P}(\r^d)$
\begin{enumerate}
    \item the Logarithmic Sobolev inequality, which entails that uniformly in $\varrho$, one can control the relative entropy by the Fisher information, i.e.\ for some finite $\lambda > 0$,
    \begin{align*}
        2\lambda \cdot \kappa \leq \iota 
    \end{align*}
    \item the transportation-information inequality, which entails that uniformly in $\varrho$, one can control the Wasserstein distance by the Fisher information, i.e.\ for some finite $\lambda > 0$,
    \begin{align*}
        \lambda^2 \cdot\tau^2 \leq \iota, 
    \end{align*}
    and,
    \item the transportation-entropy inequality, which entails that uniformly in $\varrho$, one can control the Wasserstein distance by the relative entropy, i.e.\ for some finite $\lambda > 0$,
    \begin{align*}
         \lambda^2 \cdot \tau^2 \leq \kappa.
    \end{align*}
\end{enumerate}
For probability measures on Euclidean spaces, equipped with the Euclidean metric, establishing any of these three inequalities tends to require one to establish some strong positive properties about the measure in question, such as curvature or concentration~\citep{Bakry2014}.

There is then some surprise involved in learning that for probability measures on \emph{arbitrary} spaces equipped with the \emph{discrete} metric, inequalities of this form hold generically, with universal constants. In particular, for any probability measure $\gamma$ and for any alternative probability measure $\varrho$ on a discrete space, writing the `discrete Fisher information', relative entropy, and total variation distance as
\begin{align*}
    \omega = \mathrm{osc} \left( \log \frac{\mathrm{d} \varrho}{\mathrm{d} \gamma} \right), \qquad \kappa = \mathrm{KL} \left( \varrho, \gamma \right), \qquad \tau = \mathsf{TV} \left( \varrho, \gamma \right)
\end{align*}
respectively, one can establish, following the arguments of Lemma~\ref{lemma:usefuldiscrete},
\begin{enumerate}
    \item A `discrete Logarithmic Sobolev inequality' $\kappa  \leq F(\omega)$, where 
    \begin{align*}
        F(\omega) = \frac{\omega}{1 - \exp(-\omega)} - 1 - \log \left( \frac{\omega}{1 - \exp(-\omega)} \right) \leq \frac{1}{8} \omega^2,
    \end{align*}
    \item A `discrete transportation-information inequality' $\tau \leq T(\omega)$, where
    \begin{align*}
        T(\omega) = \tanh \left( \frac{1}{4} \omega \right) \leq \frac{1}{4} \omega,
    \end{align*}
    and
    \item A `discrete transportation-entropy inequality' $\alpha \left( \tau \right) \leq \kappa$, where $\alpha$ is  a rather implicit function which can be described parametrically (see e.g.\ Section 7.5.2 of~\cite{polyanskiy2025information}) and can be tractably lower-bounded as
    \begin{align*}
         \alpha \left( \tau \right) \geq \log \left( \frac{1 + \tau}{1 - \tau} \right)  - \frac{2\cdot\tau}{1+\tau},
    \end{align*}
    or, more commonly, as $\alpha \left( \tau \right) \geq 2 \cdot \tau^2$; use of the latter lower bound is normally known as `Pinsker's inequality'.
\end{enumerate}
That each of these inequalities holds with universal constants is somehow testament to the bluntness of the discrete metric.

\end{document}